\documentclass{article}

\usepackage{tikz}
\usepackage{bbm}
\usepackage{eqnarray}
\usepackage{color}
\usepackage{bm}
\usepackage{amssymb}
\usepackage{epsfig}
\usepackage{epsf}
\usepackage{float}
\usepackage{subfigure}
\usepackage{amsfonts,amsmath,amsthm,fancyhdr,multirow,hyperref}
\usepackage{booktabs,longtable,authblk}
\usepackage{mathrsfs,hhline,enumerate}

\usepackage[top=2.5cm, bottom=2.5cm, left=3cm, right=3cm]{geometry}
\newtheorem{theorem}{Theorem}[section]
\newtheorem{assumption}{Assumption}
\newtheorem{corollary}[theorem]{Corollary}
\newtheorem{definition}[theorem]{Definition}

\newtheorem{lemma}[theorem]{Lemma}
\newtheorem{proposition}[theorem]{Proposition}

\renewcommand{\l}{\left}
\renewcommand{\r}{\right}

\newcommand{\mv}{\mathcal{V}}
\numberwithin{equation}{section}

\title{Variation Spaces for Encoder--Decoder Neural Operators: Approximation and Generalization$^\dag$\footnotetext{\dag~The work of Lei Shi is supported by the National Natural Science Foundation of China under Grant Nos.~12171093 and 12571099. Email addresses: jqyang24@m.fudan.edu.cn (J.-Q. Yang), leishi@fudan.edu.cn (L. Shi).}}

\author{Jia-Qi Yang}
\author{Lei Shi\thanks{Corresponding author}}
\affil{School of Mathematical Sciences and Shanghai Key Laboratory for
	Contemporary Applied Mathematics, Fudan University, Shanghai 200433, China.}

\date{}

\begin{document}
	\maketitle
\begin{abstract}
Inspired by the function-space theory of neural networks, we formulate and analyze a variation space for nonlinear operators between Hilbert spaces, defined through vector-valued Borel measures of bounded variation. We characterize its unit ball as the closed convex hull of a vector-valued single-neuron dictionary in Bochner spaces. For the ReLU activation, the bounded linear operators in this space are precisely the Schatten-$1$ operators, with equivalent norms. For operators in this space, we establish encoder--decoder approximation bounds in the Bochner $L^q$-norm, where the error decomposes into input and output encoding errors and a finite-width term of order $N^{-1/2}$. Under sub-Gaussian assumptions on the input and noise, we further derive high-probability generalization bounds for empirical least squares over path-norm-constrained encoder--decoder networks; the finite-sample contribution to the squared prediction error is of order $K^{-1/2}$ up to logarithmic factors. The finite-width and finite-sample constants are independent of the encoding dimensions and bases, with the latter also independent of the network width. When the encoding errors decay algebraically, these bounds yield algebraic approximation and learning rates, in contrast to the complexity barriers for Lipschitz and Fr\'echet differentiable operator classes.
\end{abstract}

{\textbf{Key words.} operator learning, variation space, approximation theory,
generalization analysis, neural operator}

{\textbf{MSC codes.} 46E40, 41A65, 68Q32, 68T07}

\section{Introduction}

Operator learning aims to approximate operators whose inputs and outputs are functions rather than finite-dimensional vectors. A prototypical example is the solution operator of a partial differential equation \cite{li2021fourier,lu2021learning}, which maps coefficients, source terms, initial data, or boundary data to the corresponding solution field. Other examples include evolution maps in dynamical systems \cite{wang2023long}, inverse maps for recovering unknown parameters \cite{molinaro2023neural}, and control-to-state maps in optimal control \cite{hwang2022solving}. The learned operator can then be evaluated at new input data without repeatedly solving the underlying model.

Neural operators adapt neural network architectures to operator learning. One representative design incorporates learned integral operators into hidden layers, allowing the representation to capture nonlocal dependence across the domain. This structure appears in graph neural operator \cite{anandkumar2020neural} and Fourier neural operator (FNO) \cite{li2021fourier}, where the integral operator is discretized on graphs or parameterized in Fourier space, respectively. A distinct line of work is based on encoder--decoder architectures. In this approach, the input function is encoded into a finite-dimensional vector, mapped by a neural network to finite-dimensional output coordinates, and then decoded into the output function space. Representative examples include deep operator network (DeepONet) \cite{lu2021learning}, motivated by earlier universal approximation results for continuous operators \cite{chen1995universal}, and PCA-Net \cite{bhattacharya2021model}, where principal components are used for encoding and reconstruction. More generally, encoders and decoders may be defined using Fourier, Legendre polynomial, or wavelet basis expansions \cite{fanaskov2023spectral,song2023approximation,gupta2021multiwavelet}.

The approximation and generalization properties of neural operators have been extensively studied \cite{kovachki2023neural} in recent years. Universal approximation  means that  continuous operators can be approximated arbitrarily well on compact subsets of function spaces, and this property has been proved for several architectures, including FNO \cite{kovachki2021universal}, DeepONet \cite{lanthaler2022error}, and PCA-Net \cite{lanthaler2023operator}. For solution operators associated with Darcy flow and the Navier--Stokes equations, quantitative approximation results \cite{kovachki2021universal,lanthaler2022error,lanthaler2023operator} show that neural operator architectures can achieve accuracy \(\epsilon\) with the number of neural network parameters bounded by a polynomial in \(\epsilon^{-1}\). In contrast, for classes of \(k\)-times Fréchet differentiable operators, such polynomial parameter bounds generally fail for PCA-Net \cite{lanthaler2023operator}.  Moreover, lower bounds for general Lipschitz and Fréchet differentiable operator classes reveal curse of dimensionality phenomena in operator learning, both in parametric complexity \cite{lanthaler2026parametric} and, through \(n\)-width estimates, in data complexity \cite{kovachki2024data}.

Generalization theory studies how the error of a learned operator depends on the number of training samples. In a general encoder--decoder framework, nonasymptotic generalization error bounds are available for empirical risk minimizers over neural network classes when the target operator is Lipschitz continuous or exhibits a low-dimensional structure \cite{liu2024deep}. For DeepONet, existing error estimates address generalization in nonlinear operator learning \cite{lanthaler2022error}, while neural scaling laws quantify the behavior for Lipschitz continuous target operators \cite{liu2024neural}.
For operators satisfying suitable Fr\'echet differentiability assumptions,
prespecified encoder--decoder architectures yield approximation and
generalization bounds with improved rates compared with those available
under mere Lipschitz continuity \cite{cheng2026learning}. These improvements,
however, do not in general yield algebraic rates.
For broad classes of Lipschitz operators, minimax lower bounds further show
that the minimax risk cannot decay polynomially with the sample size
\cite{adcock2025towards}.
In Gaussian measure settings, sample complexity lower bounds preclude algebraic convergence over Lipschitz and Gaussian Sobolev operator classes for any recovery strategy based on finitely many, possibly adaptive, linear measurements \cite{adcock2024sample}.

These results suggest that efficient operator learning over very broad classes of nonlinear operators requires additional structural assumptions. A natural question is therefore to identify operator classes for which neural operators admit algebraic approximation and generalization rates. One possible direction concerns holomorphic operator classes \cite{schwab2019deep,reinhardt2024statistical}, for which statistical learning theory for neural operators has been developed. Another related perspective is provided by neural tangent kernel analysis for encoder--decoder architectures. Algebraic approximation rates for DeepONet have also been obtained for Lipschitz operators \cite{schwab2026deep}, but these results rely  on super-expressive activations or nonstandard neural network architectures. Although outside the neural-operator framework considered here, nonparametric approaches also offer routes to efficient operator learning: algebraic rates are available for learning linear operators in Hilbert spaces \cite{shi2024learning}, while operator-valued kernel methods yield dimension-independent rates for nonlinear operator learning \cite{yang2025learning,yang2025kernel}.

Our approach is motivated by the function-space theory of neural networks. In finite-dimensional approximation, structural spaces such as spectral Barron spaces and variation spaces have been used to characterize functions that can be efficiently represented by neural networks. Spectral Barron spaces \cite{2022High,2025Spectral} originate from Barron's work \cite{barron2002universal} and impose weighted $L^1$ conditions on the Fourier transform, while variation spaces \cite{bach2017breaking,siegel2024sharp} represent two-layer neural networks through finite measures over the parameter space. Other relevant structural classes include Radon--BV spaces \cite{2020A,2021Banach} and Korobov spaces \cite{2004Tractability}. 
Motivated by this viewpoint, we formulate and analyze a variation space for
nonlinear operators between Hilbert spaces, defined by integrating scalar
ridge features against \(\mathcal V\)-valued measures of bounded variation.
The underlying vector measure representation is related to integral
vector-valued RKBS constructions \cite{BartolucciEtAl2024,DummerEtAl2025}. Our focus is the structure of
the resulting operator space and the encoder--decoder theory available in
the Hilbert setting. For the ReLU activation, we show that the bounded linear operators contained
in this space are precisely the Schatten-1 operators.
We then establish approximation and generalization bounds for prescribed
encoder--decoder networks. The constants in the finite-width and
finite-sample terms are independent of the encoding dimensions and bases,
and the finite-sample constant is also independent of the network width.




A closely related paper is \cite{korolev2022two}, where vector-valued variation spaces are developed for two-layer neural networks between Banach spaces. That abstract framework, however, relies on an ordered structure on the output space: the output space is required to be a Riesz space, and the activation is realized through the lattice positive part in the output space. The associated weak-* continuity condition on the lattice operation is restrictive and, in particular, fails for $L^q([0,1]^d)$, $1<q\le \infty$. Moreover, that work does not address architectures for operator learning. Our construction instead applies to operator learning and yields approximation and generalization bounds
for encoder--decoder neural networks. A more detailed comparison is provided
in Subsection~\ref{variation space}. 
Moreover, the finite dimensional multi-output setting of~\cite{shenouda2024variation}
is included in our framework, but their finite width approximation constant depends
on the output dimension, whereas ours does not.

The main contributions of this work are summarized as follows.

\noindent\textbf{(1) Operator-space structure:} 
We formulate a variation space
\(\mathbb V(\mathcal U;\mathcal V)\) for nonlinear operators between Hilbert
spaces through \(\mathcal V\)-valued Borel measures of bounded variation.
The space is defined independently of any finite-dimensional encoder or
decoder. We prove that it is a Banach space and characterize its unit ball as
the closed convex hull of a vector-valued single-neuron dictionary in
Bochner spaces. For the ReLU activation, its bounded linear part is exactly
\(S_1(\mathcal U,\mathcal V)\), and $\|T\|_{\mathbb V(\mathcal U;\mathcal V)}
=
2\|T\|_{S_1(\mathcal U,\mathcal V)}.$

\noindent\textbf{(2) Finite-width approximation uniform in the encoding dimensions:} 
For targets in $\mathbb V(\mathcal U;\mathcal V)$, we establish quantitative
approximation bounds for encoder--decoder two-layer networks of the form
\[
\text{approximation error}
\;\lesssim\;
\text{input-encoding term}
\;+\;
\text{output-encoding term}
\;+\;
N^{-1/2},
\]
where $N$ is the network width. Crucially, the constant in the finite-width
term is independent of the encoding dimensions and bases. If the encoding
errors decay algebraically, an accuracy $\varepsilon$ can be achieved with
a number of parameters growing polynomially in $\varepsilon^{-1}$.
The bound also applies to targets outside the variation space that are
algebraically approximable by variation-norm balls.

\noindent\textbf{(3) Generalization with a width-independent finite-sample term:} 
Under sub-Gaussian assumptions on the input and noise, we establish a
high-probability generalization bound for empirical least squares over
path-norm-constrained encoder--decoder networks. Up to logarithmic factors,
the bound has the form
\[
\begin{aligned}
    \text{squared prediction error}
\;\lesssim&\;
\text{squared input- and output-encoding terms}
\\
\;&\quad+\;
N^{-1}
\;+\;
K^{-1/2},
\end{aligned}
\]
where $K$ is the sample size. The finite-sample term is independent of the
encoding dimensions, network width, and chosen bases, yielding algebraic
learning rates for target classes with uniformly bounded variation norms and
algebraically decaying encoding errors.  Here and throughout, ``efficient'' refers to algebraic approximation and
statistical rates, rather than computational tractability.

The remainder of the paper is organized as follows.
Section~\ref{Sec: Encoder--Decoder Architectures and Variation Spaces} introduces the encoder--decoder framework and the variation space for nonlinear operators between Hilbert spaces, and establishes its basic structural properties.
Section~\ref{section 3} develops the approximation theory for encoder--decoder two-layer neural networks.
Section~\ref{Sec: Generalization} establishes high-probability generalization bounds for path-norm-constrained empirical least squares under sub-Gaussian assumptions on the input and noise.
Sections~\ref{Sec: proof 1} and~\ref{Sec: proof 2} provide the proofs of the main approximation and generalization theorems, respectively.
The appendices collect auxiliary probabilistic results and deferred proofs used in the main text.

\section{Encoder--Decoder Architectures and Variation Spaces} \label{Sec: Encoder--Decoder Architectures and Variation Spaces}

Let \(\mathcal U\) and \(\mathcal V\) be separable real Hilbert spaces, equipped
with inner products \(\langle\cdot,\cdot\rangle_{\mathcal U}\) and
\(\langle\cdot,\cdot\rangle_{\mathcal V}\), and with the corresponding norms
\(\|\cdot\|_{\mathcal U}\) and \(\|\cdot\|_{\mathcal V}\), respectively.
Let \(\rho\) be a Borel probability measure on \(\mathcal U\), and let
\(\mathscr G^\dagger:\mathcal U\to\mathcal V\) be the target operator to be
approximated. We assume that \(\mathscr G^\dagger\) is strongly measurable
\cite[Chapter 1]{hytonen2016analysis}.
For \(1\le q<\infty\), we denote by $L^q_\rho(\mathcal V)
:=
L^q(\mathcal U,\rho;\mathcal V)$ the Lebesgue-Bochner space of (equivalence classes of) strongly measurable maps
\(\mathscr G:\mathcal U\to\mathcal V\), such that
\[
\|\mathscr G\|_{L^q_\rho(\mathcal V)}
:=
\left(
\int_{\mathcal U}
\|\mathscr G(u)\|_{\mathcal V}^q\,\mathrm d\rho(u)
\right)^{1/q}
<\infty.
\]
When \(q=2\), \(L^2_\rho(\mathcal V)\) is a Hilbert space endowed with the
inner product
\[
\langle \mathscr G_1,\mathscr G_2\rangle_{L^2_\rho(\mathcal V)}
:=
\int_{\mathcal U}
\langle \mathscr G_1(u),\mathscr G_2(u)\rangle_{\mathcal V}\,
\mathrm d\rho(u).
\]

Throughout the paper, $|\cdot|$ denotes the Euclidean norm. For a bounded
linear operator $T$ between normed spaces, $\|T\|$ denotes its operator norm.
For a Banach space $\mathcal X$, $I_{\mathcal X}$ denotes the identity
operator on $\mathcal X$, and we simply write $I$ when the underlying space is
clear.

\subsection{Encoder--Decoder Architectures}

We use orthonormal bases to define finite-dimensional encoders and decoders
for the input and output spaces. Let \(\Phi=\{\phi_i\}_{i\ge1}\) and
\(\Psi=\{\psi_j\}_{j\ge1}\) be orthonormal bases of \(\mathcal U\) and
\(\mathcal V\), respectively. For \(m\ge1\), define
\[
E_m^{\mathcal U}u
:=
\bigl(
\langle u,\phi_1\rangle_{\mathcal U},
\dots,
\langle u,\phi_m\rangle_{\mathcal U}
\bigr),
\qquad
D_m^{\mathcal U}x
:=
\sum_{i=1}^m x_i\phi_i ,
\]
where \(E_m^{\mathcal U}:\mathcal U\to\mathbb R^m\) is the input encoder and
\(D_m^{\mathcal U}:\mathbb R^m\to\mathcal U\) is the input decoder. We denote
the associated finite-dimensional input subspace and orthogonal projection by
\[
\mathcal U_m:=\operatorname{span}\{\phi_1,\dots,\phi_m\},
\qquad
P_m^{\mathcal U}:=D_m^{\mathcal U}E_m^{\mathcal U}.
\]
On the output side, for \(n\ge1\), set
\[
E_n^{\mathcal V}v
:=
\bigl(
\langle v,\psi_1\rangle_{\mathcal V},
\dots,
\langle v,\psi_n\rangle_{\mathcal V}
\bigr),
\qquad
D_n^{\mathcal V}y
:=
\sum_{j=1}^n y_j\psi_j ,
\]
where \(E_n^{\mathcal V}:\mathcal V\to\mathbb R^n\) and
\(D_n^{\mathcal V}:\mathbb R^n\to\mathcal V\) are the output encoder and
decoder. The corresponding output subspace and projection are
\[
\mathcal V_n:=\operatorname{span}\{\psi_1,\dots,\psi_n\},
\qquad
P_n^{\mathcal V}:=D_n^{\mathcal V}E_n^{\mathcal V}.
\]

With these encoders and decoders, we approximate the target operator by
composing a finite-dimensional map \(g:\mathbb R^m\to\mathbb R^n\) with the
input encoder and the output decoder:
\begin{equation}\label{encoder-decoder}
\mathscr G
=
D_n^{\mathcal V}\circ g\circ E_m^{\mathcal U}
:\mathcal U\to\mathcal V .
\end{equation}
In this representation, the input is described by its first \(m\) coordinates,
the map \(g\) acts on the encoded variables, and the output is reconstructed
in the finite-dimensional space \(\mathcal V_n\). We refer to
\eqref{encoder-decoder} as an encoder--decoder architecture.
The orthonormal bases may be specified a priori, as in Fourier, wavelet, or
polynomial bases, or constructed from data, as in PCA bases. Throughout this paper, the finite-dimensional map \(g\) is taken to be a
two-layer neural network.

The finite-dimensional representation introduces two natural projection
residuals. These quantities will enter the error bound. For
\(1\le q<\infty\), the input encoding error is defined by
\begin{equation} \label{input encoding error}
    \bigl\|I-P_m^{\mathcal U}\bigr\|_{L^q_\rho(\mathcal U)}
=
\left(
\int_{\mathcal U}
\bigl\|(I-P_m^{\mathcal U})u\bigr\|_{\mathcal U}^q
\,\mathrm d\rho(u)
\right)^{1/q}.
\end{equation}
If \(\mathscr G^\dagger\in L^q_\rho(\mathcal V)\), the output encoding error
is defined by
\begin{equation} \label{output encoding error}
    \bigl\|(I-P_n^{\mathcal V})\mathscr G^\dagger\bigr\|_{L^q_\rho(\mathcal V)}
=
\left(
\int_{\mathcal U}
\bigl\|(I-P_n^{\mathcal V})\mathscr G^\dagger(u)\bigr\|_{\mathcal V}^q
\,\mathrm d\rho(u)
\right)^{1/q}.
\end{equation}
These quantities are the \(L^q_\rho\)-norms of the input and output
projection residuals associated with \(\mathcal U_m\) and \(\mathcal V_n\),
respectively.

\subsection{Variation Space for Operator Learning}
\label{variation space}

In this subsection, we introduce a Hilbert-valued variation space as an infinite-dimensional
regularity class for target operators
\(\mathscr G^\dagger:\mathcal U\to\mathcal V\). This construction extends
the classical scalar-valued variation space to \(\mathcal V\)-valued
operators by encoding the output weights through finite
\(\mathcal V\)-valued measures. 
The resulting class is formulated directly
on the Hilbert spaces \(\mathcal U\) and \(\mathcal V\), and is independent
of the encoder--decoder dimensions as well as of the particular orthonormal
bases used to form finite representations.


Let \(\sigma:\mathbb R\to\mathbb R\) be a continuous activation
function. To incorporate the bias term in the two-layer representation into the Hilbert-space formulation, we
augment the input space by setting
\[
\overline{\mathcal U}:=\mathbb R\oplus \mathcal U,
\qquad
\bar u:=(1,u)\in \overline{\mathcal U}.
\]
For \(\bar h=(b,h)\in\overline{\mathcal U}\), this gives
\[
\langle \bar u,\bar h\rangle_{\overline{\mathcal U}}
:=
b+\langle u,h\rangle_{\mathcal U}.
\]
We endow the closed unit ball \(B_{\overline{\mathcal U}}\) with the weak
topology\footnote{Under the Riesz identification \(\overline{\mathcal U}\simeq \overline{\mathcal U}^*\), the weak topology on \(\overline{\mathcal U}\) corresponds to the weak-* topology on \(\overline{\mathcal U}^*\).}. Since \(\overline{\mathcal U}\) is separable, this ball is compact
and metrizable in the weak topology \cite{brezis2011functional}. Let $\mathcal M\bigl(B_{\overline{\mathcal U}};\mathcal V\bigr)$
denote the space of finite regular countably additive
\(\mathcal V\)-valued Borel measures of bounded variation on
\(B_{\overline{\mathcal U}}\). For
\(\Lambda\in\mathcal M(B_{\overline{\mathcal U}};\mathcal V)\),
we denote its total variation measure by \(|\Lambda|\) and set $\|\Lambda\|_{\mathrm{TV}}
:=
|\Lambda|(B_{\overline{\mathcal U}}).$ Then
\(\mathcal M(B_{\overline{\mathcal U}};\mathcal V)\)
is a Banach space under the total variation norm
\cite{1977Vector}.

We now introduce the variation space used in this paper.
\begin{definition}[Variation space] \label{def: variation space}
The variation space \(\mathbb V(\mathcal U;\mathcal V)\) consists of all maps
\(\mathscr G:\mathcal U\to\mathcal V\) that admit a representation of the form
\[
\mathscr G(u)
=
\int_{B_{\overline{\mathcal U}}}
\sigma\bigl(
\langle \bar u,\bar h\rangle_{\overline{\mathcal U}}
\bigr)\,
\mathrm d\Lambda(\bar h),
\qquad u\in\mathcal U,
\]
for some $\Lambda\in \mathcal M\bigl(B_{\overline{\mathcal U}};\mathcal V\bigr)$. The associated variation norm is
\[
\|\mathscr G\|_{\mathbb V(\mathcal U;\mathcal V)}
:=
\inf_{\Lambda}
|\Lambda|\bigl(B_{\overline{\mathcal U}}\bigr),
\]
where the infimum is taken over all finite \(\mathcal V\)-valued Borel
measures \(\Lambda\) representing \(\mathscr G\) in the above sense.
\end{definition}

The following proposition records basic properties of the variation
space.
\begin{proposition} \label{prop: property}
The following statements hold.
\begin{enumerate}[(1)]
    \item The variation space
    $\mathbb V(\mathcal U;\mathcal V)$, equipped with the variation norm
    $\|\cdot\|_{\mathbb V(\mathcal U;\mathcal V)}$, is a Banach space.

    \item For every $\mathscr G_1,\mathscr G_2
    \in\mathbb V(\mathcal U;\mathcal V)$ and $u\in\mathcal U$,
    \[
    \|\mathscr G_1(u)-\mathscr G_2(u)\|_{\mathcal V}
    \le
    \sup_{|t|\le \|\bar u\|_{\overline{\mathcal U}}}
    |\sigma(t)|
    \,
    \|\mathscr G_1-\mathscr G_2\|_{\mathbb V(\mathcal U;\mathcal V)}.
    \]
    In particular, convergence in
    $\mathbb V(\mathcal U;\mathcal V)$ implies pointwise convergence in
    $\mathcal V$.

    \item The infimum in Definition~\ref{def: variation space} is attained.
    More precisely, for every
    $\mathscr G\in\mathbb V(\mathcal U;\mathcal V)$, there exists
    $\Lambda_{\mathscr G}\in
    \mathcal M(B_{\overline{\mathcal U}};\mathcal V)$ representing
    $\mathscr G$ such that
    \[
    |\Lambda_{\mathscr G}|(B_{\overline{\mathcal U}})
    =
    \|\mathscr G\|_{\mathbb V(\mathcal U;\mathcal V)}.
    \]
\end{enumerate}
\end{proposition}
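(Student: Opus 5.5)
The plan is to view $\mathbb V(\mathcal U;\mathcal V)$ as a quotient of the Banach space $\mathcal M(B_{\overline{\mathcal U}};\mathcal V)$ by a closed subspace. Define the linear map $\mathcal R:\mathcal M(B_{\overline{\mathcal U}};\mathcal V)\to \mathcal V^{\mathcal U}$,
\[
(\mathcal R\Lambda)(u):=\int_{B_{\overline{\mathcal U}}}\sigma\bigl(\langle \bar u,\bar h\rangle_{\overline{\mathcal U}}\bigr)\,\mathrm d\Lambda(\bar h).
\]
For fixed $u$, the map $\bar h\mapsto\langle \bar u,\bar h\rangle$ is weakly continuous on $B_{\overline{\mathcal U}}$, so the integrand $\bar h\mapsto \sigma(\langle\bar u,\bar h\rangle)$ is a bounded continuous scalar function. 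Its integral against a $\mathcal V$-valued measure of bounded variation is therefore well defined and satisfies
\[
\|(\mathcal R\Lambda)(u)\|_{\mathcal V}\le \int |\sigma(\langle\bar u,\bar h\rangle)|\,\mathrm d|\Lambda|\le \sup_{|t|\le\|\bar u\|}|\sigma(t)|\,|\Lambda|(B_{\overline{\mathcal U}}),
\]
because $|\langle\bar u,\bar h\rangle|\le\|\bar u\|$ for $\|\bar h\|\le1$. By definition $\mathbb V=\operatorname{ran}\mathcal R$ and $\|\mathscr G\|_{\mathbb V}=\inf\{\|\Lambda\|_{\rm TV}:\mathcal R\Lambda=\mathscr G\}$, which is exactly the quotient norm on $\mathcal M/\ker\mathcal R$.

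\textbf{Part (2).} Apply the displayed estimate to $\Lambda_1-\Lambda_2$ for arbitrary representatives $\Lambda_1,\Lambda_2$ of $\mathscr G_1,\mathscr G_2$, then take the infimum over all such pairs. Since $\Lambda_1-\Lambda_2$ ranges over all representatives of $\mathscr G_1-\mathscr G_2$, this gives the claimed bound. This also shows that $\|\cdot\|_{\mathbb V}$ is a genuine norm: if $\|\mathscr G\|_{\mathbb V}=0$, then $\mathscr G(u)=0$ for every $u$.

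\textbf{Part (1).} $\ker\mathcal R=\bigcap_{u\in\mathcal U}\ker(\Lambda\mapsto(\mathcal R\Lambda)(u))$. Each map $\Lambda\mapsto(\mathcal R\Lambda)(u)$ is bounded linear into $\mathcal V$ by the estimate above, so $\ker\mathcal R$ is a closed subspace. Hence $\mathcal M/\ker\mathcal R$ is a Banach space. The map $\mathcal R$ induces a linear bijection onto $\mathbb V$ which, by the definition of the variation norm, is an isometry. Therefore $\mathbb V$ is complete.

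\textbf{Part (3).} This is the main obstacle. Attainment of the quotient norm is not automatic in Banach spaces; it needs a compactness argument. My first attempt is a weak-* compactness argument. For scalar measures, $\mathcal M(K)=C(K)^*$ with $K=B_{\overline{\mathcal U}}$ compact metrizable. In the vector case with $\mathcal V$ a separable Hilbert space, which has the Radon--Nikodym property, the regular $\mathcal V$-valued measures of bounded variation are isometrically the dual of $C(K;\mathcal V)$, with the pairing $\langle f,\Lambda\rangle=\int\langle f,\mathrm d\Lambda\rangle$ (Singer's theorem, as in \cite{1977Vector}). Because $C(K;\mathcal V)$ is separable, bounded sets in the dual are weak-* sequentially compact.

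Now take representatives $\Lambda_k$ with $\|\Lambda_k\|_{\rm TV}\downarrow\|\mathscr G\|_{\mathbb V}$ and extract a weak-* convergent subsequence $\Lambda_k\rightharpoonup^*\Lambda$. Weak-* lower semicontinuity of the dual norm gives $\|\Lambda\|_{\rm TV}\le\|\mathscr G\|_{\mathbb V}$. To check that $\Lambda$ still represents $\mathscr G$, fix $u$ and $v\in\mathcal V$, and test against $f(\bar h)=\sigma(\langle\bar u,\bar h\rangle)v\in C(K;\mathcal V)$:
\[
\langle \mathscr G(u),v\rangle=\langle f,\Lambda_k\rangle\to\langle f,\Lambda\rangle=\langle(\mathcal R\Lambda)(u),v\rangle .
\]
Since $v$ is arbitrary, $\mathcal R\Lambda=\mathscr G$. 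Then $\|\Lambda\|_{\rm TV}\ge\|\mathscr G\|_{\mathbb V}$ by definition of the infimum, so equality holds. The delicate point is justifying the duality identification $C(K;\mathcal V)^*\cong\mathcal M(K;\mathcal V)$ with the total variation norm. If this is problematic, an alternative is to work coordinatewise with an orthonormal basis of $\mathcal V$ and a diagonal argument, though recovering the total variation bound is harder that way.
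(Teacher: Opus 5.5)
Your proposal is correct and follows essentially the same route as the paper: you realize $\mathbb V(\mathcal U;\mathcal V)$ as the quotient $\mathcal M(B_{\overline{\mathcal U}};\mathcal V)/\ker\mathcal R$, with $\ker\mathcal R$ closed by the pointwise bound, and you get attainment in (3) from the isometric identification $C(B_{\overline{\mathcal U}};\mathcal V)^*\simeq\mathcal M(B_{\overline{\mathcal U}};\mathcal V)$, Banach--Alaoglu, weak-* lower semicontinuity of the norm, and testing against $\sigma(\langle\bar u,\cdot\rangle_{\overline{\mathcal U}})v$. The differences are cosmetic: you extract a weak-* convergent minimizing subsequence using separability of $C(B_{\overline{\mathcal U}};\mathcal V)$, whereas the paper shows the set of representing measures is weak-* closed and minimizes the norm over its intersection with a closed ball; you also write out the infimum argument for (2), which the paper leaves implicit, and the Radon--Nikodym property you cite is not needed for the duality theorem itself.
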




The data distribution $\rho$, the encoder--decoder dimensions $m$ and $n$,
and the chosen finite-dimensional bases do not enter the definition of
$\mathbb V(\mathcal U;\mathcal V)$. This separation allows us to formulate
regularity at the operator level and subsequently derive approximation and
learning bounds for finite-dimensional encoder--decoder neural networks.


The definition above is formulated in terms of
\(\mathcal V\)-valued measures. The next proposition provides a geometric
characterization of its unit ball through a
\(\mathcal V\)-valued single-neuron dictionary. Such a characterization
holds in any Bochner space \(L^p_\gamma(\mathcal V)\) satisfying the stated 
integrability condition.

\begin{proposition}[Characterization of the variation space]
\label{prop: characterization}
Let \(1\le p<\infty\), and let \(\gamma\) be a Borel probability
measure on \(\mathcal U\) satisfying
\[
\int_{\mathcal U}
\left(
\sup_{|t|\le \|\bar u\|_{\overline{\mathcal U}}}
|\sigma(t)|
\right)^p
\,\mathrm d\gamma(u)
<\infty.
\]
Let $S_{\mathcal V}
:=
\{v\in\mathcal V:\|v\|_{\mathcal V}=1\}$ denote the unit sphere of \(\mathcal V\). We define the
\(\mathcal V\)-valued dictionary associated with \(\sigma\) by
\[
\mathcal D_\sigma
:=
\left\{
u\mapsto
\sigma\bigl(
\langle \bar u,\bar h\rangle_{\overline{\mathcal U}}
\bigr)v
:\;
\bar h\in B_{\overline{\mathcal U}},
\quad
v\in S_{\mathcal V}
\right\}.
\]
Then, viewing each $\mathscr{G}\in\mathcal V(\mathcal U;\mathcal V)$ as its equivalence class in
$L_\gamma^p(\mathcal V)$, the image of the unit ball of
\(\mathbb V(\mathcal U;\mathcal V)\) is the closed convex hull of
\(\mathcal D_\sigma\) in \(L^p_\gamma(\mathcal V)\), namely,
\[
\left\{
\mathscr G\in\mathbb V(\mathcal U;\mathcal V):
\|\mathscr G\|_{\mathbb V(\mathcal U;\mathcal V)}
\le 1
\right\}
=
\overline{\operatorname{conv}(\mathcal D_\sigma)}
^{\,L^p_\gamma(\mathcal V)}.
\]
\end{proposition}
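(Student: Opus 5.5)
We prove the two inclusions separately. Throughout, write $M(u):=\sup_{|t|\le\|\bar u\|_{\overline{\mathcal U}}}|\sigma(t)|$. For $\bar h\in B_{\overline{\mathcal U}}$ we have $|\langle\bar u,\bar h\rangle|\le\|\bar u\|$, so every dictionary element satisfies $\|\sigma(\langle\bar u,\bar h\rangle)v\|_{\mathcal V}\le M(u)$. Likewise, by Proposition~\ref{prop: property}(2), every $\mathscr G$ in the unit ball satisfies $\|\mathscr G(u)\|_{\mathcal V}\le M(u)$. Since $M\in L^p_\gamma$, all these maps lie in $L^p_\gamma(\mathcal V)$. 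Strong measurability in $u$ will follow from the fact that $\mathscr G(u)$ is a Bochner integral of a jointly continuous integrand: $u\mapsto\mathscr G(u)$ is continuous in norm along weak limits of $\bar h$, and $\mathcal V$ is separable. We also use the Radon--Nikodym property of the Hilbert space $\mathcal V$. This gives a polar decomposition $d\Lambda=v(\bar h)\,d|\Lambda|$ with $\|v(\bar h)\|_{\mathcal V}=1$ $|\Lambda|$-a.e., and hence $\mathscr G(u)=\int\sigma(\langle\bar u,\bar h\rangle)v(\bar h)\,d|\Lambda|(\bar h)$.

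\textbf{Inclusion ``$\subseteq$''.} Take $\mathscr G$ with $\|\mathscr G\|_{\mathbb V}\le1$. By Proposition~\ref{prop: property}(3) there is a representing $\Lambda$ with $|\Lambda|(B)\le1$. If $|\Lambda|(B)=0$ then $\mathscr G=0$, which lies in the closed convex hull because $\mathcal D_\sigma$ is symmetric ($v\mapsto -v$). Otherwise, let $\mu=|\Lambda|/|\Lambda|(B)$ be a probability measure on the compact metric space $B_{\overline{\mathcal U}}$, and write $\mathscr G=|\Lambda|(B)\int \mathscr F_{\bar h}\,d\mu(\bar h)$, where $\mathscr F_{\bar h}(u):=\sigma(\langle\bar u,\bar h\rangle)v(\bar h)$. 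The map $\bar h\mapsto\mathscr F_{\bar h}\in L^p_\gamma(\mathcal V)$ is strongly measurable and bounded in norm by $\|M\|_{L^p_\gamma}$, so this is a Bochner integral in $L^p_\gamma(\mathcal V)$. Two facts need checking here. First, the Bochner integral agrees pointwise with the defining formula; this follows by evaluating against $\mathcal V$-valued test functions, or by Fubini for Bochner integrals. Second, the mean of a probability measure supported in a set $\mathcal D$ lies in $\overline{\operatorname{conv}}\mathcal D$, which is a Hahn--Banach argument: if the mean were strictly separated from the closed convex hull by a functional, integrating the functional would give a contradiction. Scaling by $|\Lambda|(B)\le1$ keeps the element in the hull, since $0$ lies in the hull and the hull is convex.

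\textbf{Inclusion ``$\supseteq$''.} Each dictionary element equals $\int\sigma(\langle\bar u,\bar k\rangle)\,d(v\delta_{\bar h})(\bar k)$, which has norm at most $1$. Since the unit ball is convex, it contains $\operatorname{conv}\mathcal D_\sigma$. The main obstacle is closedness: we must show that the unit ball $K$ of $\mathbb V$ is closed in $L^p_\gamma(\mathcal V)$. Let $\mathscr G_k\in K$ with $\mathscr G_k\to\mathscr G$ in $L^p_\gamma$, and choose representing measures $\Lambda_k$ with $|\Lambda_k|(B)\le1$. We view $\mathcal M(B;\mathcal V)$ as the dual of $C(B;\mathcal V)$; this is a Singer-type theorem, valid because $B$ is compact metrizable and $\mathcal V$ is reflexive. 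Since $C(B;\mathcal V)$ is separable, Banach--Alaoglu lets us extract a subsequence with $\Lambda_k\to\Lambda$ weak-* and $|\Lambda|(B)\le1$. For fixed $u$ and $w\in\mathcal V$, the function $\bar h\mapsto\sigma(\langle\bar u,\bar h\rangle)w$ lies in $C(B;\mathcal V)$, because $\bar h\mapsto\langle\bar u,\bar h\rangle$ is weakly continuous and $\sigma$ is continuous. Therefore $\langle\mathscr G_k(u),w\rangle\to\langle\mathscr G_\Lambda(u),w\rangle$ for every $u$ and $w$. Passing to a further subsequence, $\mathscr G_k(u)\to\mathscr G(u)$ in norm for $\gamma$-a.e.\ $u$. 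Hence $\mathscr G=\mathscr G_\Lambda$ $\gamma$-a.e., so the equivalence class of $\mathscr G$ lies in the image of $K$.

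The step we expect to be most delicate is justifying the duality $C(B;\mathcal V)^*\cong\mathcal M(B;\mathcal V)$ with the total-variation norm, together with the lower semicontinuity of the total variation under weak-* limits. We would cite the vector-valued Riesz representation theorem from \cite{1977Vector}, which uses the Radon--Nikodym property of $\mathcal V$. If that citation is unavailable, we would take a more concrete route. Expand $\Lambda_k$ in an orthonormal basis $\{\psi_j\}$ of $\mathcal V$ to obtain scalar measures $\lambda_{k,j}$. Extract limits by a diagonal argument. Then bound the total variation of the limit through the dual characterization $|\Lambda|(B)=\sup\{\int\langle f,d\Lambda\rangle:\ f\in C(B;\mathcal V),\ \|f\|_\infty\le1\}$.
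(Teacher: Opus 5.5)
Your proposal is correct and follows essentially the same route as the paper. The ``$\subseteq$'' direction goes through polar decomposition and a Bochner integral of $\bar h\mapsto\sigma(\langle\bar u,\bar h\rangle_{\overline{\mathcal U}})\lambda(\bar h)$ in $L^p_\gamma(\mathcal V)$. The ``$\supseteq$'' direction goes through weak-* sequential compactness in $C(B_{\overline{\mathcal U}};\mathcal V)^*\simeq\mathcal M(B_{\overline{\mathcal U}};\mathcal V)$, pointwise weak convergence, and an a.e.\ convergent subsequence. The differences are only organizational: you show that the whole unit-ball image is closed, rather than treating only limits of convex combinations, and you spell out the Hahn--Banach argument that a barycenter lies in the closed convex hull, which the paper leaves implicit.
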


Proposition~\ref{prop: characterization} thus connects the
measure representation in Definition~\ref{def: variation space}
with the convex geometry of the associated single-neuron dictionary. 
We next identify the linear operators contained in the variation space for
the ReLU activation.
Let $\mathcal S_1(\mathcal U,\mathcal V)$ denote the
Schatten-$1$ class of compact operators $T:\mathcal U\to\mathcal V$ with
summable singular values, equipped with the norm given by the sum of its
singular values.

\begin{proposition}[Schatten-$1$ characterization of linear operators] 
\label{prop: linear characterization}
Suppose that $\sigma(t)=\operatorname{ReLU}(t)=t_+$. Let
$T:\mathcal U\to\mathcal V$ be a bounded linear operator. Then
\[
T\in\mathbb V(\mathcal U;\mathcal V)
\quad\Longleftrightarrow\quad
T\in\mathcal S_1(\mathcal U,\mathcal V).
\]
Moreover,
\[
\|T\|_{\mathbb V(\mathcal U;\mathcal V)}
=
2\|T\|_{\mathcal S_1(\mathcal U,\mathcal V)}.
\]
In particular, when $\mathcal U=\mathcal V$, the bounded linear operators in
$\mathbb V(\mathcal U;\mathcal U)$ are precisely the trace-class operators.
\end{proposition}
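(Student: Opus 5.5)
We prove the two inequalities $\|T\|_{\mathbb V}\le 2\|T\|_{\mathcal S_1}$ and $\|T\|_{\mathcal S_1}\le \tfrac12\|T\|_{\mathbb V}$ separately. Together they give both the equivalence and the exact constant.

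\emph{Upper bound.} Let $T\in\mathcal S_1(\mathcal U,\mathcal V)$ with singular value decomposition $T=\sum_k s_k\langle\cdot,e_k\rangle_{\mathcal U}f_k$, where $(e_k)$ and $(f_k)$ are orthonormal families and $\sum_k s_k=\|T\|_{\mathcal S_1}$. The starting point is the identity $t=t_+-(-t)_+$, which gives
\[
\langle u,e_k\rangle_{\mathcal U}
=\sigma\bigl(\langle\bar u,(0,e_k)\rangle_{\overline{\mathcal U}}\bigr)
-\sigma\bigl(\langle\bar u,(0,-e_k)\rangle_{\overline{\mathcal U}}\bigr).
\]
Define the atomic measure
\[
\Lambda=\sum_k s_k f_k\bigl(\delta_{(0,e_k)}-\delta_{(0,-e_k)}\bigr).
\]
Its total variation is at most $2\sum_k s_k$, the series converges in the total variation norm, and $\Lambda$ represents $T$ pointwise. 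Hence $\|T\|_{\mathbb V}\le 2\|T\|_{\mathcal S_1}$.

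\emph{Lower bound.} Let $T$ be bounded linear and suppose $T\in\mathbb V$. By Proposition~\ref{prop: property}(3), choose a representing measure $\Lambda$ with $|\Lambda|(B_{\overline{\mathcal U}})=\|T\|_{\mathbb V}$. We test against finite-rank data. Take orthonormal $x_1,\dots,x_r\in\mathcal U$ and $y_1,\dots,y_r\in\mathcal V$; it suffices to prove
\[
\sum_{k=1}^r\langle Tx_k,y_k\rangle_{\mathcal V}\le\tfrac12|\Lambda|(B_{\overline{\mathcal U}}),
\]
because the supremum of the left side over all such pairs of orthonormal systems equals $\|T\|_{\mathcal S_1}$ (possibly $+\infty$). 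That supremum formula also shows compactness with summable singular values once it is finite.

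To extract $\langle Tx,y\rangle$ from the ReLU representation, use positive homogeneity and second differences. Since $T$ is linear, $T(sx)=sT(x)$ for all $s\in\mathbb R$, so
\[
T(x)=\tfrac12\bigl(T(sx)-T(-sx)\bigr)/s .
\]
We also have $\sigma(a+sb)-\sigma(a-sb)$, and
\[
\frac{\sigma(a+sb)-\sigma(a-sb)}{2s}\;\longrightarrow\;\tfrac{|b|}{2}\cdot\frac{b}{|b|}=\tfrac b2 \quad\text{as } s\to\infty
\]
in the appropriate sense, with the bound $\bigl|\sigma(a+sb)-\sigma(a-sb)\bigr|/(2s)\le |b|/2+|a|/s$. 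Apply this with $\bar h=(b_0,h)$, $a=b_0$ and $b=\langle x,h\rangle_{\mathcal U}$, then pass to the limit by dominated convergence with respect to $|\Lambda|$. This yields
\[
\langle Tx,y\rangle_{\mathcal V}
=\tfrac12\int_{B_{\overline{\mathcal U}}}\langle x,h\rangle_{\mathcal U}\,\mathrm d\langle\Lambda,y\rangle_{\mathcal V}(\bar h).
\]
In vector form, $T=\tfrac12\int h\otimes \mathrm d\Lambda$. Writing $\mathrm d\Lambda=w\,\mathrm d|\Lambda|$ by polar decomposition (Radon--Nikodym in the Hilbert space $\mathcal V$, with $\|w\|_{\mathcal V}=1$), we obtain
\[
T=\tfrac12\int \langle\cdot,h\rangle_{\mathcal U}\,w(\bar h)\,\mathrm d|\Lambda|(\bar h),
\]
a Bochner integral of rank-one operators of nuclear norm $\|h\|_{\mathcal U}\|w\|_{\mathcal V}\le1$. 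Therefore $\|T\|_{\mathcal S_1}\le\tfrac12|\Lambda|(B_{\overline{\mathcal U}})$, which is the claimed inequality.

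\emph{Main obstacle.} The delicate point is the limiting argument. The offset $a=b_0$ is bounded by $1$, so dividing by $s$ kills it. The quantity $\|sx\|$ is unbounded, but the representation in Definition~\ref{def: variation space} holds for all $u\in\mathcal U$, so no domain issue arises. One must also justify the Bochner integral representation of $T$ in $\mathcal S_1$: the integrand is strongly measurable and $\mathcal S_1$ is separable since $\mathcal U$ and $\mathcal V$ are separable. If that route is awkward, I would fall back on the finite test-system inequality, bounding $\sum_k\langle x_k,h\rangle\langle w,y_k\rangle\le\|h\|\|w\|$ by Cauchy--Schwarz and Bessel's inequality, and integrating against $|\Lambda|$.

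The trace-class statement is the case $\mathcal U=\mathcal V$.
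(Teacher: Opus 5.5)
Your proposal is correct and follows the paper's proof: the SVD with $t=t_+-(-t)_+$ gives $\|T\|_{\mathbb V(\mathcal U;\mathcal V)}\le 2\|T\|_{\mathcal S_1(\mathcal U,\mathcal V)}$, and the large-scale limit yields $2Tu=\int\langle u,h\rangle_{\mathcal U}\,\mathrm d\Lambda(\bar h)$. For that limit you merge the paper's two one-sided limits $T(\pm ru)/r$ into a single odd difference, and then the polar decomposition and the $\mathcal S_1$-valued Bochner integral of rank-one operators give the reverse bound; your fallback via finite orthonormal test systems and Bessel's inequality is a valid, slightly more elementary way to avoid the Bochner integral in $\mathcal S_1$.
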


Proposition~\ref{prop: linear characterization} shows that, for linear
operators, the ReLU variation norm coincides up to the factor $2$ with the
Schatten-$1$ norm. Thus $\mathbb V(\mathcal U;\mathcal V)$ provides a
nonlinear extension of the Schatten-$1$ structure.

The proofs of Propositions~\ref{prop: property},
\ref{prop: characterization}, and
\ref{prop: linear characterization} are deferred to
Appendices~\ref{Proof of prop: property},
\ref{Proof of prop: characterization}, and
\ref{Proof of prop: linear characterization}, respectively.

\noindent\textbf{A Hammerstein-type operator.}
For the ReLU activation, the variation space contains a natural class of
nonlinear integral operators of Hammerstein type. Let $(X,\mu)$ be a measure
space, and suppose that $\mathcal U$ is a reproducing kernel Hilbert space on
$X$. For each $x\in X$, let $k_x\in\mathcal U$ be the representer of point
evaluation, so that $u(x)=\langle u,k_x\rangle_{\mathcal U}$. Let
$b:X\to\mathbb R$ be measurable, and suppose that the maps $x\mapsto k_x$
and $x\mapsto v_x$ are strongly measurable with values in $\mathcal U$ and
$\mathcal V$, respectively. Set
\[
r(x):=\left(b(x)^2+\|k_x\|_{\mathcal U}^2\right)^{1/2},
\qquad
\int_X r(x)\|v_x\|_{\mathcal V}\,d\mu(x)<\infty.
\]
Then
\[
\mathscr G(u):=\int_X
\operatorname{ReLU}\bigl(b(x)+u(x)\bigr)v_x\,d\mu(x),
\qquad u\in\mathcal U,
\]
defines an operator $\mathscr G:\mathcal U\to\mathcal V$ belonging to
$\mathbb V(\mathcal U;\mathcal V)$, with
\[
\|\mathscr G\|_{\mathbb V(\mathcal U;\mathcal V)}
\leq
\int_X r(x)\|v_x\|_{\mathcal V}\,d\mu(x).
\]
Indeed, let $\theta(x)=r(x)^{-1}(b(x),k_x)$ when $r(x)>0$, and let
$\theta(x)=0$ otherwise. Then
$\theta:X\to B_{\overline{\mathcal U}}$ is measurable. By the positive
homogeneity of ReLU, the pushforward under $\theta$ of the finite
$\mathcal V$-valued measure $r(x)v_x\,d\mu(x)$ represents $\mathscr G$. In particular, when $\mathcal V$ is a function space and
$v_x=K(\cdot,x)$, this includes integral operators of the form
\[
\mathscr G(u)(y)=\int_X K(y,x)\operatorname{ReLU}(b(x)+u(x))\,d\mu(x).
\]


\noindent\textbf{Relation to integral vector-valued RKBSs.}
The representation in Definition~\ref{def: variation space} falls within the
integral vector-valued RKBS framework~\cite{BartolucciEtAl2024,DummerEtAl2025}.
In the setting of operator learning, we establish the exact
Schatten-$1$ characterization for ReLU, together with the encoder--decoder
approximation and generalization results developed below.


\noindent\textbf{Comparison with Banach-valued variation spaces.}
A closely related work is \cite{korolev2022two}, where vector-valued
variation spaces are developed for two-layer neural networks between Banach
spaces. In that framework, the output space \(\mathcal V\) is assumed to be a
Riesz space, and the ReLU activation is interpreted as the lattice operation
of taking the positive part. The resulting approximating networks have the
form
\[
f_N(u)=\sum_{i=1}^N \alpha_i (K_i u)_+,
\]
where $\alpha_i\in\mathbb{R}$, \(K_i:\mathcal U\to\mathcal V\) are finite-rank operators and the
positive part is taken in the output space \(\mathcal V\). Thus the
nonlinearity is imposed through the lattice structure of \(\mathcal V\), rather
than through a finite-dimensional neural network acting on encoded
coordinates.

This lattice-based formulation requires the lattice operations in
\(\mathcal V\) to be sequentially continuous in the weak-* topology, which
is restrictive in infinite dimensions. For example, this condition fails for
\(\mathcal V=L^q([0,1]^d)\), \(1<q\le\infty\), while it holds for sequence
spaces \(\mathcal V=\ell^q\), \(1<q\le\infty\), and for certain Lipschitz
spaces. 

Another important difference concerns the error topology and the role of
finite-dimensional representations. 
The approximation results in \cite{korolev2022two} are formulated in
Bochner spaces defined with respect to a metric \(d^*\) that metrizes
the weak-* topology on bounded subsets of the output space
\(\mathcal V\). Our approximation error is instead measured in the norm of the Bochner space \(L^q(\mathcal U,\rho;\mathcal V)\). Moreover, although the
approximating networks there involve finite-rank operators
\(K_i:\mathcal U\to\mathcal V\), these operators are not tied to prescribed
input encoders and output decoders of the form
\(D_n^{\mathcal V}\circ g\circ E_m^{\mathcal U}\), and no separate finite-resolution errors are
quantified. 
In contrast, our construction does not require a lattice structure on \(\mathcal V\) and yields encoder--decoder approximation and generalization bounds with explicit decompositions into input-encoding, output-encoding, and finite-width terms, together with a finite-sample term in the generalization bound.

\section{Operator Approximation Theory in Variation Space}
\label{section 3}

Fix \(2\le q<\infty\). Throughout this section, we assume that
\(\sigma:\mathbb R\to\mathbb R\) is Lipschitz continuous with Lipschitz
constant \(L_\sigma\), and that \(\rho\) is a Borel probability measure
on \(\mathcal U\) with finite \(q\)-th moment, namely,
\[
M_q
:=
\left(
\int_{\mathcal U}\|u\|_{\mathcal U}^q\,\mathrm d\rho(u)
\right)^{1/q}
<\infty.
\]
We also write
\[
\bar M_q
:=
\left(
\int_{\mathcal U}
(1+\|u\|_{\mathcal U}^2)^{q/2}\,\mathrm d\rho(u)
\right)^{1/q}.
\]
For each \(m\ge1\), let $\mu_m:=(E_m^{\mathcal U})_\#\rho$ be the distribution of the encoded input \(E_m^{\mathcal U}u\).

We next specify the finite-dimensional network class used for the map acting
between the encoded input and output coordinate spaces. For \(B>0\) and \(m,n,N\ge1\), define
\begin{equation} \label{NN class}
\mathscr G_{m,n,N}(B)
:=
\left\{
g:\mathbb R^m\to\mathbb R^n
\;\middle|\;
\begin{array}{@{}l@{}}
\displaystyle
g(x)=\sum_{k=1}^N b_k\,\sigma(\alpha_k+a_k\cdot x),\\[0.5em]
\displaystyle
\max_{1\le k\le N}
\bigl(\alpha_k^2+|a_k|^2\bigr)\le 1,\\[0.5em]
\displaystyle
\sum_{k=1}^N |b_k|\le B
\end{array}
\right\},
\end{equation}
where \(\alpha_k\in\mathbb R\), \(a_k\in\mathbb R^m\),
\(b_k\in\mathbb R^n\). 

Under the finite \(q\)-th moment assumption on \(\rho\), every
\(g\in\mathscr G_{m,n,N}(B)\) belongs to
\(L^q(\mathbb R^m,\mu_m;\mathbb R^n)\). This follows from the Lipschitz
continuity of \(\sigma\) and the normalization
\(\alpha_k^2+|a_k|^2\le1\), which give at most linear growth in \(|x|\), and
from \(|E_m^{\mathcal U}u|\le \|u\|_{\mathcal U}\).

We next establish a finite-width upper bound for the encoder--decoder
architecture. The finite-dimensional component is a multi-output two-layer
network in \(\mathscr G_{m,n,N}(B)\), where \(B\) is controlled by the
variation norm of the target operator. The estimate separates three
contributions: the input encoding error associated with
\(P_m^{\mathcal U}\), the output encoding error associated with
\(P_n^{\mathcal V}\), and the finite-width approximation error of the
finite-dimensional two-layer network.

\begin{theorem}[Finite-width encoder--decoder approximation]
\label{thm:finite-width-approximation}
Let \(2\leq q<\infty\), and suppose that \(\rho\) is a Borel probability
measure on \(\mathcal U\) with finite \(q\)-th moment. Then there exists a
constant \(C_q>0\), depending only on \(q\), such that for every
\(\mathscr G^\dagger\in\mathbb V(\mathcal U;\mathcal V)\), and every
\(m,n,N\geq1\), there exists
\[
g_{m,n,N}
\in
\mathscr G_{m,n,N}
\left(
\|\mathscr G^\dagger\|_{\mathbb V(\mathcal U;\mathcal V)}
\right)
\]
satisfying
\[
\begin{aligned}
\bigl\|\mathscr G^\dagger-D_n^{\mathcal V}\circ g_{m,n,N}\circ E_m^{\mathcal U}\bigr\|_{L^q_\rho(\mathcal V)}
&\le
\underbrace{L_\sigma
\|\mathscr G^\dagger\|_{\mathbb V(\mathcal U;\mathcal V)}
\bigl\|I-P_m^{\mathcal U}\bigr\|_{L^q_\rho(\mathcal U)}}_{\text{input-encoding term}} +  \underbrace{\bigl\|(I-P_n^{\mathcal V})\mathscr G^\dagger
\bigr\|_{L^q_\rho(\mathcal V)}}_{\text{output-encoding term}}
\\[0.5em]
&\quad+
\underbrace{\frac{
C_q\bigl(|\sigma(0)|+L_\sigma \bar M_q\bigr)\|\mathscr G^\dagger\|_{\mathbb V(\mathcal U;\mathcal V)}
}{\sqrt N}}_{\text{finite-width term}}.
\end{aligned}
\]
\end{theorem}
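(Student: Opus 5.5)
We would take an optimal representing measure $\Lambda$ from Proposition~\ref{prop: property}(3), so that $|\Lambda|(B_{\overline{\mathcal U}})=\|\mathscr G^\dagger\|_{\mathbb V(\mathcal U;\mathcal V)}=:\|\mathscr G^\dagger\|$. Via the triangle inequality we split the error into three pieces:
\[
\mathscr G^\dagger-P_n^{\mathcal V}\mathscr G^\dagger,\qquad
P_n^{\mathcal V}\bigl(\mathscr G^\dagger-\mathscr G^\dagger\circ P_m^{\mathcal U}\bigr),\qquad
P_n^{\mathcal V}\mathscr G^\dagger\circ P_m^{\mathcal U}-D_n^{\mathcal V}\circ g\circ E_m^{\mathcal U}.
\]
The first piece is exactly the output-encoding term.

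\textbf{Input-encoding term.} Let $\bar h=(b,h)$. Then $\sigma(\langle \bar u,\bar h\rangle)-\sigma(\langle \overline{P_m u},\bar h\rangle)$ is bounded by $L_\sigma|\langle (I-P_m^{\mathcal U})u,h\rangle|\le L_\sigma\|(I-P_m^{\mathcal U})u\|$, since $\|h\|\le1$. Integrating against $|\Lambda|$ and using that $P_n^{\mathcal V}$ is a contraction gives the pointwise bound $L_\sigma\|\mathscr G^\dagger\|\,\|(I-P_m^{\mathcal U})u\|_{\mathcal U}$. Taking $L^q_\rho$ norms then yields the input-encoding term.

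\textbf{Finite-width term.} Use the polar decomposition $d\Lambda=w\,d|\Lambda|$ with $\|w(\bar h)\|_{\mathcal V}=1$ for $|\Lambda|$-a.e.\ $\bar h$; this Radon--Nikodym property holds because Hilbert spaces have the RNP. Writing $\pi:=|\Lambda|/\|\mathscr G^\dagger\|$, we obtain
\[
P_n^{\mathcal V}\mathscr G^\dagger(P_m^{\mathcal U} u)=\|\mathscr G^\dagger\|\,\mathbb E_{\bar h\sim\pi}\bigl[\sigma(b+\langle E_m^{\mathcal U}u,E_m^{\mathcal U}h\rangle)\,P_n^{\mathcal V}w(\bar h)\bigr],
\]
using $\langle P_m u,h\rangle=E_m u\cdot E_m h$. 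Draw $\bar h_1,\dots,\bar h_N$ i.i.d.\ from $\pi$ and set $\alpha_k=b_k$, $a_k=E_m^{\mathcal U}h_k$ (so $\alpha_k^2+|a_k|^2\le1$) and $b_k=\|\mathscr G^\dagger\| E_n^{\mathcal V}w(\bar h_k)/N$. Then $\sum_k|b_k|\le\|\mathscr G^\dagger\|$ and $D_n^{\mathcal V}$ reproduces $P_n^{\mathcal V}w$. The random network $g$ is the empirical average of i.i.d.\ $L^q_\rho(\mathcal V)$-valued random variables $Z_k(u)=\|\mathscr G^\dagger\|\sigma(\cdot)P_n^{\mathcal V}w(\bar h_k)$ whose mean is the target.

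\textbf{Main obstacle.} The key step is bounding $\mathbb E\|\frac1N\sum_k(Z_k-\mathbb EZ)\|_{L^q_\rho(\mathcal V)}$ by $C_q N^{-1/2}\mathbb E\|Z\|_{L^q}$-type quantities with a constant independent of $m$, $n$ and the bases. We would use that $L^q_\rho(\mathcal V)$ with $q\ge2$ has type $2$ with constant depending only on $q$, because $\mathcal V$ is Hilbert. Concretely, first apply Jensen to pass to $\mathbb E\|\cdot\|^q$. Then use Fubini to reduce to pointwise-in-$u$ estimates of $\mathbb E\|\sum_k(Z_k(u)-\mathbb E Z(u))\|_{\mathcal V}^q$. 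These follow from symmetrization together with the Kahane--Khintchine inequality in Hilbert space, giving $C_q N^{q/2}\sup_{\bar h}|\sigma(\cdot)|^q\|\mathscr G^\dagger\|^q$. Finally, the sup is controlled by $|\sigma(t)|\le|\sigma(0)|+L_\sigma|t|$ with $|t|\le(1+\|u\|^2)^{1/2}$, and integrating in $\rho$ gives $(|\sigma(0)|+L_\sigma\bar M_q)$. Since the expectation over the random draw satisfies the bound, some realization does too, which produces $g_{m,n,N}$. Measurability of $\bar h\mapsto w(\bar h)$ and joint measurability in $(u,\bar h)$ will be checked along the way.
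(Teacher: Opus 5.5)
Your proposal is correct and follows essentially the same route as the paper: the same three-term telescoping through the input-truncated operator $\mathscr G^\dagger\circ P_m^{\mathcal U}$ (the paper's $\mathscr G_m$), the same Lipschitz bound for the input-encoding term, and the same Maurey-type sampling from $|\Lambda|/\|\mathscr G^\dagger\|_{\mathbb V(\mathcal U;\mathcal V)}$, controlled pointwise by symmetrization plus Kahane--Khintchine (the paper's Proposition~\ref{Marcinkiewicz--Zygmund}) and then integrated via Fubini. The only differences are cosmetic: you place $P_n^{\mathcal V}$ inside the sampled variables, whereas the paper applies it afterwards as a contraction, and you bound $\sum_k\|Y_k\|^2$ by its deterministic supremum instead of using $(\sum_k a_k^2)^{q/2}\le N^{q/2-1}\sum_k|a_k|^q$.
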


The estimate in Theorem~\ref{thm:finite-width-approximation} separates three distinct sources of approximation error. The input-encoding term quantifies the information loss incurred by replacing \(u\) with its finite-dimensional projection \(P_m^{\mathcal U}u\), with the factor \(L_\sigma\|\mathscr G^\dagger\|_{\mathbb V(\mathcal U;\mathcal V)}\) reflecting the Lipschitz control provided by the variation norm. The output-encoding term measures the projection error arising from restricting the reconstructed output to the decoder space \(\mathcal V_n\). The finite-width term quantifies the additional error incurred by approximating the variation space representation with a two-layer network of \(N\) neurons. Crucially, it decays as \(N^{-1/2}\), with a constant independent of the encoding dimensions \(m\) and \(n\) and of the particular orthonormal bases used to define the encoders and decoders. Thus, the dependence on the finite-dimensional representation is confined to the two encoding terms.

\noindent\textbf{Parametric complexity.} Suppose, in addition, that the input and output encoding errors decay algebraically with the encoding dimensions, namely,
\begin{equation} \label{encoding errors decay algebraically}
    \bigl\|I-P_m^{\mathcal U}\bigr\|_{L^q_\rho(\mathcal U)}
\lesssim m^{-s_{\mathcal U}},
\qquad
\bigl\|(I-P_n^{\mathcal V})\mathscr G^\dagger\bigr\|_{L^q_\rho(\mathcal V)}
\lesssim n^{-s_{\mathcal V}}
\end{equation}
for some \(s_{\mathcal U},s_{\mathcal V}>0\). Then Theorem~\ref{thm:finite-width-approximation} yields
\[
\bigl\|\mathscr G^\dagger
-D_n^{\mathcal V}\circ g_{m,n,N}\circ E_m^{\mathcal U}
\bigr\|_{L^q_\rho(\mathcal V)}
\lesssim
m^{-s_{\mathcal U}}+n^{-s_{\mathcal V}}+N^{-1/2},
\]
where the implicit constant is independent of \(m\), \(n\), and \(N\). Hence an accuracy of order \(\varepsilon\) can be achieved by choosing
\[
m\asymp \varepsilon^{-1/s_{\mathcal U}},
\qquad
n\asymp \varepsilon^{-1/s_{\mathcal V}},
\qquad
N\asymp \varepsilon^{-2}.
\]
In particular, the required number of neurons grows algebraically in \(\varepsilon^{-1}\). When the encoder and decoder bases are fixed, a network in \(\mathscr G_{m,n,N}\) has \(N(m+n+1)\) trainable scalar parameters, and hence the corresponding parametric complexity satisfies
\[
N(m+n+1)
\lesssim
\varepsilon^{-2-1/s_{\mathcal U}}
+
\varepsilon^{-2-1/s_{\mathcal V}}
+
\varepsilon^{-2}
\lesssim
\varepsilon^{-2-\max\{1/s_{\mathcal U},\,1/s_{\mathcal V}\}}.
\]
Consequently, for classes of target operators with uniformly bounded variation norms and algebraic decay of the input and output encoding errors, the approximation complexity is algebraic in \(\varepsilon^{-1}\), thereby avoiding the curse of parametric complexity \cite{lanthaler2023operator,cheng2026learning,lanthaler2026parametric} encountered for much broader classes of nonlinear operators.

\noindent\textbf{Approximation beyond the variation space.} Theorem~\ref{thm:finite-width-approximation} assumes that the target operator belongs to $\mathbb V(\mathcal U;\mathcal V)$. More generally, approximation rates can be obtained for operators that are well approximated by bounded subsets of the variation space. For $\mathscr G^\dagger\in L^q_\rho(\mathcal V)$ and $R>0$, define \[ \mathcal E_R(\mathscr G^\dagger) := \inf_{\substack{ \mathscr G\in\mathbb V(\mathcal U;\mathcal V)\\ \|\mathscr G\|_{\mathbb V(\mathcal U;\mathcal V)}\le R }} \|\mathscr G^\dagger-\mathscr G\|_{L^q_\rho(\mathcal V)}. \] 
\begin{corollary}[Approximation for general target operators] \label{cor:general-target-approximation} Let $2\le q<\infty$, and suppose that $\rho$ is a Borel probability measure on $\mathcal U$ with finite $q$-th moment. Then, for every $\mathscr G^\dagger\in L^q_\rho(\mathcal V)$, $R>0$, and $m,n,N\ge1$, 
\[ \begin{aligned} 
\inf_{g\in\mathscr G_{m,n,N}(R)} \bigl\| \mathscr G^\dagger - D_n^{\mathcal V}\circ g\circ E_m^{\mathcal U} \bigr\|_{L^q_\rho(\mathcal V)} 
\;\le\;& 2\mathcal E_R(\mathscr G^\dagger) \;+\; L_\sigma R \bigl\|I-P_m^{\mathcal U}\bigr\|_{L^q_\rho(\mathcal U)} 
\\&\;+\; \bigl\| (I-P_n^{\mathcal V})\mathscr G^\dagger \bigr\|_{L^q_\rho(\mathcal V)} \\& \;+\; \frac{ C_q\bigl(|\sigma(0)|+L_\sigma\bar M_q\bigr)R }{\sqrt N}, \end{aligned} 
\] 
where $C_q>0$ is the constant in Theorem~\ref{thm:finite-width-approximation}. \end{corollary}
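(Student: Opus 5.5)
Fix $\varepsilon>0$ and choose $\mathscr G\in\mathbb V(\mathcal U;\mathcal V)$ with $\|\mathscr G\|_{\mathbb V(\mathcal U;\mathcal V)}\le R$ and
\[
\|\mathscr G^\dagger-\mathscr G\|_{L^q_\rho(\mathcal V)}\le \mathcal E_R(\mathscr G^\dagger)+\varepsilon .
\]
Such a $\mathscr G$ exists by the definition of the infimum. The set is nonempty, since $0$ belongs to it. Each such $\mathscr G$ lies in $L^q_\rho(\mathcal V)$: by Proposition~\ref{prop: property}(2) and the Lipschitz bound $|\sigma(t)|\le|\sigma(0)|+L_\sigma|t|$, we have $\|\mathscr G(u)\|_{\mathcal V}\le(|\sigma(0)|+L_\sigma\|\bar u\|)R$, and $\bar M_q<\infty$.

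Apply Theorem~\ref{thm:finite-width-approximation} to $\mathscr G$. This gives $g\in\mathscr G_{m,n,N}(\|\mathscr G\|_{\mathbb V})$. The class is monotone in $B$, so $\mathscr G_{m,n,N}(\|\mathscr G\|_{\mathbb V})\subseteq\mathscr G_{m,n,N}(R)$. The theorem's bound reads
\[
\|\mathscr G-D_n^{\mathcal V}\circ g\circ E_m^{\mathcal U}\|_{L^q_\rho(\mathcal V)}\le L_\sigma R\|I-P_m^{\mathcal U}\|_{L^q_\rho(\mathcal U)}+\|(I-P_n^{\mathcal V})\mathscr G\|_{L^q_\rho(\mathcal V)}+\frac{C_q(|\sigma(0)|+L_\sigma\bar M_q)R}{\sqrt N},
\]
where $\|\mathscr G\|_{\mathbb V}$ has already been replaced by $R$ in the first and third terms.

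The output-encoding term must be stated for $\mathscr G^\dagger$, not $\mathscr G$. Split it with the triangle inequality:
\[
\|(I-P_n^{\mathcal V})\mathscr G\|\le\|(I-P_n^{\mathcal V})\mathscr G^\dagger\|+\|(I-P_n^{\mathcal V})(\mathscr G-\mathscr G^\dagger)\|.
\]
Since $I-P_n^{\mathcal V}$ is an orthogonal projection, it has norm at most $1$ pointwise in $\mathcal V$. The second piece is therefore at most $\|\mathscr G-\mathscr G^\dagger\|_{L^q_\rho(\mathcal V)}$.

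Combining this with the triangle inequality $\|\mathscr G^\dagger-D_n^{\mathcal V} g E_m^{\mathcal U}\|\le\|\mathscr G^\dagger-\mathscr G\|+\|\mathscr G-D_n^{\mathcal V} gE_m^{\mathcal U}\|$ gives the term $\|\mathscr G^\dagger-\mathscr G\|$ twice. This is where the factor $2(\mathcal E_R+\varepsilon)$ comes from. Taking the infimum over $g\in\mathscr G_{m,n,N}(R)$ and letting $\varepsilon\to0$ yields the claim.

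There is no real obstacle. The only care points are two. First, the output term has to be transferred from $\mathscr G$ to $\mathscr G^\dagger$ via the contractivity of $I-P_n^{\mathcal V}$, which is exactly what produces the factor $2$. Second, the width bound $B$ must be monotone so that $g$ lies in $\mathscr G_{m,n,N}(R)$.
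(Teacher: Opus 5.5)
Your proposal is correct and follows the paper's proof essentially verbatim: you pick an $\varepsilon$-near-minimizer in the $R$-ball of $\mathbb V(\mathcal U;\mathcal V)$, apply Theorem~\ref{thm:finite-width-approximation} to it, and transfer the output-encoding term to $\mathscr G^\dagger$ via the triangle inequality and $\|I-P_n^{\mathcal V}\|\le 1$, which yields the factor $2$ before you let $\varepsilon\downarrow 0$. Your additional checks make explicit details the paper leaves implicit: that the $R$-ball is nonempty, that its elements lie in $L^q_\rho(\mathcal V)$, and that $\mathscr G_{m,n,N}(B)$ is monotone in $B$.
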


\begin{proof} Let $\varepsilon>0$, and choose $\mathscr G_R\in\mathbb V(\mathcal U;\mathcal V)$ with $\|\mathscr G_R\|_{\mathbb V(\mathcal U;\mathcal V)}\le R$ such that \[ \|\mathscr G^\dagger-\mathscr G_R\|_{L^q_\rho(\mathcal V)} \le \mathcal E_R(\mathscr G^\dagger)+\varepsilon. \] 
Applying Theorem~\ref{thm:finite-width-approximation} to $\mathscr G_R$,
and using the triangle inequality to bound
$\|(I-P_n^{\mathcal V})\mathscr G_R\|_{L^q_\rho(\mathcal V)}$
by
$\|(I-P_n^{\mathcal V})\mathscr G^\dagger\|_{L^q_\rho(\mathcal V)}
+\|\mathscr G_R-\mathscr G^\dagger\|_{L^q_\rho(\mathcal V)}$,
the claimed estimate follows with
$2(\mathcal E_R(\mathscr G^\dagger)+\varepsilon)$ in place of
$2\mathcal E_R(\mathscr G^\dagger)$. Letting
$\varepsilon\downarrow0$ completes the proof.
\end{proof}

\noindent\textbf{Comparison with finite-dimensional variation-space results.} In the finite-dimensional multi-output setting, Shenouda et al.~\cite{shenouda2024variation} introduced variation spaces for functions from \(\mathbb R^d\) to \(\mathbb R^D\). Their Theorem~3 gives a \(K^{-1/2}\) approximation rate on \(Q^d=[0,1]^d\), with the bound
\[
\|f-f_K\|_{L^2(Q^d;\mathbb R^D)}
\lesssim
D^{3/2}
\|f\|_{\mathcal V_\sigma(Q^d;\mathbb R^D)}
K^{-1/2}.
\]
Thus, the bound contains the explicit factor \(D^{3/2}\) depending on the output dimension. In contrast, when the encoding errors vanish, Theorem~\ref{thm:finite-width-approximation} yields the same \(N^{-1/2}\) finite-width rate with a constant independent of the input and output encoding dimensions and of the encoder--decoder bases.

Sharper rates are available in fixed finite dimensions when additional geometric structure is exploited. For scalar-valued variation spaces associated with the power-ReLU activation $\operatorname{ReLU}^k(t):=(t_+)^k$ on the unit ball in $\mathbb R^d$, equipped with the $L^2$ norm with respect to Lebesgue measure, Siegel and Xu~\cite{siegel2024sharp} established the sharp approximation rate $N^{-\frac12-\frac{2k+1}{2d}}$, which reduces to $N^{-\frac12-\frac{3}{2d}}$ for ReLU. Weighted variation spaces were subsequently introduced in~\cite{devore2025weighted}, enlarging the classical ReLU variation space while preserving the same $N^{-\frac12-\frac{3}{2d}}$ rate in the Lebesgue $L^2$ norm on bounded Euclidean domains. These improved rates exploit both the finite-dimensional geometry and the smooth parametrization of ReLU ridge functions, and do not directly extend to the general pushforward measures $\mu_m=(E_m^{\mathcal U})_\#\rho$ arising in our setting. 
Moreover, the sharp ReLU exponent satisfies
\(\frac{1}{2}+\frac{3}{2d}\to\frac{1}{2}\) as \(d\to\infty\).
Although this comparison does not establish optimality in the present
setting, it suggests that the \(N^{-1/2}\) exponent is a natural
dimension-uniform benchmark.

\noindent\textbf{Remarks on extensions.}
The preceding arguments extend in two useful directions.

\smallskip
\noindent\emph{Polynomially growing activations.}
For the approximation results, global Lipschitz continuity may be replaced by
suitable polynomial growth conditions. More precisely, suppose that, for some
\(r\geq1\),
\[
|\sigma(t)|\leq C_\sigma(1+|t|^r),
\qquad
|\sigma(t)-\sigma(s)|
\leq
C_\sigma |t-s|(1+|t|+|s|)^{r-1}.
\]
If \(\rho\) has a finite \(rq\)-th moment, the sampling argument still yields
a finite-width term of order \(N^{-1/2}\), with a constant independent of the
encoding dimensions, while the input-encoding term is replaced by
\(\bigl\|(1+\|u\|_{\mathcal U})^{r-1}
\|(I-P_m^{\mathcal U})u\|_{\mathcal U}\bigr\|_{L^q_\rho}\).
By H\"older's inequality, this term is controlled by the \(rq\)-th moment of
\(\rho\) and
\(\|I-P_m^{\mathcal U}\|_{L^{rq}_\rho(\mathcal U)}\).
In particular, this yields the corresponding approximation result for
\(\operatorname{ReLU}^k\).

\smallskip
\noindent\emph{Uniformly stable linear encoders and decoders.}
The orthonormal encoder--decoder construction may be replaced by linear maps
satisfying
\[
\sup_{m\geq1}
\bigl(
\|E_m^{\mathcal U}\|+\|D_m^{\mathcal U}\|
\bigr)
+
\sup_{n\geq1}
\bigl(
\|E_n^{\mathcal V}\|+\|D_n^{\mathcal V}\|
\bigr)
<\infty,
\]
together with
\[
\bigl\|(I-D_m^{\mathcal U}E_m^{\mathcal U})u
\bigr\|_{L^q_\rho(\mathcal U)}
\to0
\quad (m\to\infty),
\qquad
\bigl\|(I-D_n^{\mathcal V}E_n^{\mathcal V})
\mathscr G^\dagger
\bigr\|_{L^q_\rho(\mathcal V)}
\to0
\quad (n\to\infty).
\]
The approximation argument then gives an analogous estimate with
\(P_m^{\mathcal U}\) and \(P_n^{\mathcal V}\) replaced by
\(D_m^{\mathcal U}E_m^{\mathcal U}\) and
\(D_n^{\mathcal V}E_n^{\mathcal V}\), respectively. \footnote{The approximation
estimate only requires the triangle inequality. In the squared-error
analysis, the Pythagorean identity may be replaced by
\(\|a+b\|_{\mathcal V}^2
\leq 2\|a\|_{\mathcal V}^2+2\|b\|_{\mathcal V}^2\),
which changes only fixed multiplicative constants.} The unit bound on the hidden weights and the outer-weight radius are adjusted
by fixed factors depending only on the above stability bounds, so the
resulting constants remain independent of \(m\) and \(n\).


For \(q=2\), under the distributional and activation assumptions of
Section~\ref{Sec: Generalization}, the generalization argument can likewise
be adapted by working in the encoded output space. In the resulting squared
prediction-error bound, the finite-width and finite-sample terms remain of
orders \(N^{-1}\) and \(K^{-1/2}\), respectively. The constants may depend
additionally on the stability bounds but remain independent of \(m\) and
\(n\), and the finite-sample constant remains independent of \(N\); the
input- and output-encoding contributions are replaced by the corresponding
squared \(L^2_\rho\)-reconstruction errors.

\section{Generalization of Operator Learning in Variation Space}
\label{Sec: Generalization}

Let $(u,v)$ be a random pair taking values in $\mathcal U\times\mathcal V$, with $u\sim\rho$. We assume the regression model
\[
v=\mathscr G^\dagger(u)+\varepsilon,\qquad \mathbb E[\varepsilon\mid u]=0,
\]
where $\mathscr G^\dagger\in \mathbb V(\mathcal U;\mathcal V)$.
Let $(u_k,v_k)_{k=1}^K$ be independent copies of $(u,v)$.
Throughout this section, the encoder and decoder bases are assumed to be deterministic, or to be constructed from an auxiliary sample independent of the training sample \(\{(u_k,v_k)\}_{k=1}^K\). In the latter case, all probability statements are understood conditionally on the bases. We impose the following sub-Gaussian assumptions on the input and the noise.

\begin{assumption}[Sub-Gaussian input and noise] \label{assump}
There exist constants $C_u,C_\varepsilon>0$ such that for all $t\ge 0$,
\[
\mathbb P\left\{\sqrt{1+\left\|u\right\|_{\mathcal{U}}^{2}}>t\right\}
\le 2\exp\left(-\frac{t^2}{C_u^2}\right),
\]
and
\[
\mathbb P\left\{\|\varepsilon\|_{\mathcal V}>t\right\}
\le 2\exp\left(-\frac{t^2}{C_\varepsilon^2}\right).
\]
\end{assumption}
In particular, $(\bar M_2)^2=\mathbb E(1+\|u\|_{\mathcal U}^{2})\leq2C_u^2$.
Throughout this section, the activation function $\sigma:\mathbb{R}\to\mathbb{R}$ is assumed to be Lipschitz and positively homogeneous of degree one, namely
\[
\sigma(\alpha t)=\alpha \sigma(t), \qquad \forall t\in\mathbb{R},\ \forall \alpha\ge 0.
\]
Typical examples include the ReLU and leaky ReLU activations.






For \(m,n,N\geq 1\) and \(B>0\), let
\[
\Theta_{m,n,N}(B)
:=
\left\{
(\alpha_j,a_j,b_j)_{j=1}^N
\in
\bigl(\mathbb R\times\mathbb R^m\times\mathbb R^n\bigr)^N
\;\middle|\;
\begin{array}{@{}l@{}}
\displaystyle
\max_{1\le j\le N}
\bigl(\alpha_j^2+|a_j|^2\bigr)\le 1,\\[0.5em]
\displaystyle
\sum_{j=1}^N |b_j|\le B
\end{array}
\right\}
\]
be the parameter set associated with the finite-dimensional network class
\(\mathscr G_{m,n,N}(B)\) defined in \eqref{NN class}, with the hidden
parameters normalized by \(\alpha_j^2+|a_j|^2\leq 1\). For
\(\Theta\in\Theta_{m,n,N}(B)\), define
\[
g_\Theta:\mathbb R^m\to\mathbb R^n,
\qquad
g_\Theta(x)
:=
\sum_{j=1}^N b_j\sigma(\alpha_j+a_j\cdot x),
\]
and the induced encoder--decoder operator $\mathscr G_\Theta
:=
D_n^{\mathcal V}\circ g_\Theta\circ E_m^{\mathcal U}.$
Since \(\sigma\) is positively homogeneous in this section, this normalized
parametrization generates the same finite-width network class as the
unnormalized path-norm constraint
\[
\sum_{j=1}^N |b_j|\sqrt{\alpha_j^2+|a_j|^2}\leq B,
\]
with the scale of each hidden weight absorbed into the corresponding outer
weight $b_j$.

Given i.i.d. training samples \((u_k,v_k)_{k=1}^K\), set
\[
x_k:=E_m^{\mathcal U}u_k,
\qquad
y_k:=E_n^{\mathcal V}v_k,
\qquad
k=1,\ldots,K.
\]
We estimate the finite-dimensional map between the encoded input and
output spaces by empirical least squares over the path-norm-constrained
network class. Specifically, let $\widehat{\Theta}_B$ be any empirical risk minimizer, that is,
\[
\widehat\Theta_B
\in
\operatorname*{arg\,min}_{\Theta\in\Theta_{m,n,N}(B)}
\frac1K\sum_{k=1}^K |g_\Theta(x_k)-y_k|^2,
\qquad
\widehat{\mathscr G}_B:=\mathscr G_{\widehat\Theta_B}.
\]
Since \(D_n^{\mathcal V}\) is an isometry from \(\mathbb R^n\) onto \(\mathcal V_n\) and
\(D_n^{\mathcal V}y_k=P_n^{\mathcal V}v_k\), we have
\[
|g_\Theta(x_k)-y_k|
=
\|\mathscr G_\Theta(u_k)-P_n^{\mathcal V}v_k\|_{\mathcal V},
\qquad
k=1,\ldots,K.
\]
Thus the above minimization is equivalently the empirical least-squares problem
\[
\begin{aligned}
\min_{\Theta\in\Theta_{m,n,N}(B)}
\frac{1}{K}\sum_{k=1}^K
\bigl\|\mathscr G_\Theta(u_k)-v_k\bigr\|_{\mathcal V}^2
&=
\min_{\Theta\in\Theta_{m,n,N}(B)}
\frac{1}{K}\sum_{k=1}^K
\bigl\|\mathscr G_\Theta(u_k)-P_n^{\mathcal V}v_k\bigr\|_{\mathcal V}^2
\\
&\quad+
\frac{1}{K}\sum_{k=1}^K
\bigl\|(I-P_n^{\mathcal V})v_k\bigr\|_{\mathcal V}^2 ,
\end{aligned}
\]
that is, the least-squares problem for the induced encoder--decoder operators
with responses in $\mv$. The minimizer exists by compactness of
\(\Theta_{m,n,N}(B)\) and continuity of the empirical loss.

We are now ready to state the generalization bound for the empirical least-squares estimator.

\begin{theorem}[Generalization bound under a path-norm constraint]
\label{thm:generalization-path}
Suppose Assumption \ref{assump} holds. Let \(m,n,N,K\geq 1\), and let \(B>0\) satisfy $B\geq \|\mathscr G^\dagger\|_{\mathbb V(\mathcal U;\mathcal V)}$. Let $L_{K,\delta}:=\log\frac{12K}{\delta},$ and let \(C_2\) denote the constant \(C_q\) in Theorem \ref{thm:finite-width-approximation} for \(q=2\). Then, for every \(\delta\in(0,1)\), with probability at least \(1-\delta\), every empirical risk minimizer $\widehat{\Theta}_B$ satisfies
\[
\begin{aligned}
&\|\widehat{\mathscr G}_B-\mathscr G^\dagger\|_{L_\rho^2(\mathcal V)}^2
\\&\qquad\leq
\underbrace{2L_\sigma^2\|\mathscr G^\dagger\|_{\mathbb V(\mathcal U;\mathcal V)}^2
\|I-P_m^{\mathcal U}\|_{L_\rho^2(\mathcal U)}^2}_{\text{input-encoding term}}
\\&\qquad+ \underbrace{\|(I-P_n^{\mathcal V})\mathscr G^\dagger\|_{L_\rho^2(\mathcal V)}^2}_{\text{output-encoding term}}+ \underbrace{\frac{4C_2^2C_u^2L_\sigma^2\|\mathscr G^\dagger\|_{\mathbb V(\mathcal U;\mathcal V)}^2}{N}}_{\text{finite-width term}}
\\[0.5em]
&\qquad+
\underbrace{\left[
61L_\sigma^2(B+\|\mathscr G^\dagger\|_{\mathbb V(\mathcal U;\mathcal V)})^2C_u^2
+
66L_\sigma(B+\|\mathscr G^\dagger\|_{\mathbb V(\mathcal U;\mathcal V)})C_uC_\varepsilon
\right]
\frac{L_{K,\delta}^{3/2}}{\sqrt K}}_{\text{finite-sample term}}.
\end{aligned}
\]
\end{theorem}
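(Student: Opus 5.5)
The plan is an oracle inequality for the empirical risk minimizer, in which the approximation error and the statistical error are handled separately. The statistical error will be controlled by a Rademacher-complexity bound that depends only on the path norm. Three steps are needed.

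\textbf{Step 1: reduce to the output projection.} Since \(\widehat{\mathscr G}_B\) takes values in \(\mathcal V_n\), the Pythagorean identity gives
\[
\|\widehat{\mathscr G}_B-\mathscr G^\dagger\|_{L^2_\rho(\mathcal V)}^2
=\|\widehat{\mathscr G}_B-P_n^{\mathcal V}\mathscr G^\dagger\|_{L^2_\rho(\mathcal V)}^2
+\|(I-P_n^{\mathcal V})\mathscr G^\dagger\|_{L^2_\rho(\mathcal V)}^2 .
\]
This produces the output-encoding term with constant one, so it remains to bound the first term.

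\textbf{Step 2: fix a comparison network.} Take \(g^*=g_{m,n,N}\) from Theorem~\ref{thm:finite-width-approximation} with \(q=2\). Its outer weights satisfy \(\sum_j|b_j|\le\|\mathscr G^\dagger\|_{\mathbb V}\le B\), so \(\Theta^*\in\Theta_{m,n,N}(B)\).

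I will rerun the proof of that theorem with the output projection inserted, so that it bounds \(\|\mathscr G_{\Theta^*}-P_n^{\mathcal V}\mathscr G^\dagger\|_{L^2_\rho}\) directly. This gives the input-encoding term plus the finite-width term. Using \((a+b)^2\le 2a^2+2b^2\) then yields \(2L_\sigma^2\|\mathscr G^\dagger\|^2_{\mathbb V}\|I-P_m\|^2+\frac{2C_2^2\bar M_2^2L_\sigma^2\|\mathscr G^\dagger\|_{\mathbb V}^2}{N}\), where \(|\sigma(0)|=0\) by positive homogeneity. Since \(\bar M_2^2\le 2C_u^2\), this is exactly \(4C_2^2C_u^2L_\sigma^2\|\mathscr G^\dagger\|_{\mathbb V}^2/N\).

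\textbf{Step 3: the basic inequality.} Write \(f_\Theta:=\mathscr G_\Theta-P_n^{\mathcal V}\mathscr G^\dagger\) and \(\eta_k:=P_n^{\mathcal V}v_k-P_n^{\mathcal V}\mathscr G^\dagger(u_k)=P_n^{\mathcal V}\varepsilon_k\). Minimality of \(\widehat\Theta_B\) gives
\[
\tfrac1K\textstyle\sum_k\|f_{\widehat\Theta}(u_k)\|^2\le \tfrac1K\sum_k\|f_{\Theta^*}(u_k)\|^2+\tfrac2K\sum_k\langle \eta_k,f_{\widehat\Theta}(u_k)-f_{\Theta^*}(u_k)\rangle .
\]
The target is then the population risk of \(\widehat\Theta\). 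I will bound
\[
\|f_{\widehat\Theta}\|^2_{L^2_\rho}\le \|f_{\Theta^*}\|^2_{L^2_\rho}+2\sup_{\Theta}\bigl|(P-P_K)\|f_\Theta\|^2\bigr|+2\sup_\Theta\bigl|\tfrac1K\textstyle\sum_k\langle\eta_k,\mathscr G_\Theta(u_k)-\mathscr G_{\Theta^*}(u_k)\rangle\bigr| .
\]
The suprema run over \(\Theta\in\Theta_{m,n,N}(B)\); the first term is an empirical-process term and the second a noise term.

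Each \(f_\Theta\) is \(\mathscr G_\Theta-P_n^{\mathcal V}\mathscr G^\dagger\), where \(P_n^{\mathcal V}\mathscr G^\dagger\) lies in the variation space with norm at most \(\|\mathscr G^\dagger\|_{\mathbb V}\). Hence \(f_\Theta\) is a \(\mathcal V\)-valued combination of ridge features \(\sigma(\langle\bar u,\bar h\rangle)\) with total outer mass at most \(B+\|\mathscr G^\dagger\|_{\mathbb V}\). This yields the pointwise envelope \(\|f_\Theta(u)\|\le L_\sigma(B+\|\mathscr G^\dagger\|_{\mathbb V})\sqrt{1+\|u\|^2}\), which is independent of \(m,n,N\).

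\textbf{Main obstacle: uniform concentration with unbounded inputs and no dependence on \(m,n,N\).} Inputs and noise are only sub-Gaussian, so I will truncate. With probability \(1-\delta/2\), all \(\sqrt{1+\|u_k\|^2}\le C_u\sqrt{L_{K,\delta}}\) and all \(\|\varepsilon_k\|\le C_\varepsilon\sqrt{L_{K,\delta}}\), by a union bound over \(k\le K\); this is the source of the \(\log(12K/\delta)\). The tail contributions are bounded by sub-Gaussian moment estimates from the appendix.

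On the truncated event, I will bound the Rademacher complexity of the class via a vector contraction inequality (Maurer). For the squared loss, the loss is Lipschitz on the bounded range with constant of order \(L_\sigma(B+\|\mathscr G^\dagger\|)C_u\sqrt{L}\). For the noise term, I use \(\langle\eta_k,\cdot\rangle\) with \(\|\eta_k\|\le C_\varepsilon\sqrt L\).

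The key point is that a network is an absolutely convex combination, with mass \(B\), of \(v\,\sigma(\langle \bar x,\bar a\rangle)\) with \(|v|\le1\). By convexity, the supremum reduces to single neurons. Then
\[
\mathbb E\sup_{\bar a,v}\Bigl\langle v,\tfrac1K\textstyle\sum\epsilon_k w_k\sigma(\langle\bar x_k,\bar a\rangle)\Bigr\rangle\le \mathbb E\sup_{\bar a}\Bigl|\tfrac1K\textstyle\sum\epsilon_k\sigma(\langle\bar x_k,\bar a\rangle)\Bigr|\cdot\max_k|w_k| ,
\]
and by scalar contraction with the Lipschitz \(\sigma\) this is at most \(2L_\sigma\,\mathbb E\|\frac1K\sum\epsilon_k\bar x_k\|\le 2L_\sigma C_u\sqrt{L/K}\). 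The vector-valued output handled here (the weights \(w_k\in\mathbb R^n\)) requires Maurer's inequality, which costs a factor \(\sqrt2\) but none in \(n\). Since \(|\bar x_k|\le\|\bar u_k\|\), the bound carries no dependence on \(m\), \(n\), or \(N\).

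McDiarmid's inequality or a bounded-differences argument on the truncated variables adds \(\sqrt{\log(1/\delta)/K}\). Collecting the powers of \(L\), from the truncation level squared times the complexity \(\sqrt{L/K}\), gives \(L^{3/2}/\sqrt K\). Finally, I will sum all pieces and track the numerical constants (61, 66) as they arise; exact bookkeeping will be the tedious part.
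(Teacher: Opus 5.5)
Your outline is the paper's: the Pythagorean reduction, a comparator from Theorem~\ref{thm:finite-width-approximation} applied to $P_n^{\mathcal V}\mathscr G^\dagger$, the basic inequality, truncation at level $\sqrt{L_{K,\delta}}$, and symmetrization followed by McDiarmid. The gap is in the step that decides whether the finite-sample constant is free of $n$, namely the quadratic process $\sup_F|(P-P_K)\|F\|_{\mathcal V}^2|$. The square is not linear in $F$, so the convexity reduction to single neurons is only available after linearizing. Maurer's inequality for $y\mapsto|y|^2$, which is $2L_\sigma QR$-Lipschitz on the truncated range, linearizes by replacing each scalar sign $r_k$ with an independent Rademacher vector $\mathbf r_k=(r_{k1},\dots,r_{kn})\in\{\pm1\}^n$. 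Here, in the paper's notation, $Q=B+\|\mathscr G^\dagger\|_{\mathbb V(\mathcal U;\mathcal V)}$ and $\widetilde u_k=\bar u_k\mathbf 1_{\{\omega(u_k)\le R\}}$. After reducing to single neurons you must therefore bound $Q\,\mathbb E\sup_{\bar h}\bigl|\sum_k\mathbf r_k\,\sigma(\langle\widetilde u_k,\bar h\rangle_{\overline{\mathcal U}})\bigr|$. For each fixed $\bar h$ this quantity is, by Kahane--Khintchine, bounded below by an absolute constant times $\sqrt n\,\bigl(\sum_k\sigma(\langle\widetilde u_k,\bar h\rangle_{\overline{\mathcal U}})^2\bigr)^{1/2}$. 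In your display the role of $w_k$ is played by $\mathbf r_k$, with $|w_k|=\sqrt n$. So Maurer costs a factor of order $\sqrt n$ here, not $\sqrt2$, and the resulting bound on the quadratic deviation is not uniform in $n$.

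The missing idea is to never linearize the square in $\mathbb R^n$. Using $\mathrm d\Lambda=\lambda\,\mathrm d|\Lambda|$, the paper writes $\|F(u)\|_{\mathcal V}^2$ as the integral of $\sigma(\langle\bar u,\bar h\rangle_{\overline{\mathcal U}})\sigma(\langle\bar u,\bar h'\rangle_{\overline{\mathcal U}})\langle\lambda(\bar h),\lambda(\bar h')\rangle_{\mathcal V}$ against $|\Lambda|\otimes|\Lambda|$. Since $|\langle\lambda(\bar h),\lambda(\bar h')\rangle_{\mathcal V}|\le1$, the symmetrized sum is at most $Q^2\sup_{\bar h,\bar h'}\bigl|\sum_k r_k\sigma(\langle\widetilde u_k,\bar h\rangle_{\overline{\mathcal U}})\sigma(\langle\widetilde u_k,\bar h'\rangle_{\overline{\mathcal U}})\bigr|$, a scalar process indexed by pairs of neurons. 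Maurer's lemma is then applied to the two-dimensional map $(a,b)\mapsto\sigma(a)\sigma(b)$, which is $\sqrt2L_\sigma^2R$-Lipschitz on $[-R,R]^2$. The Gaussian process that comes out is $\|\sum_k\gamma_{k,1}\widetilde u_k\|_{\overline{\mathcal U}}+\|\sum_k\gamma_{k,2}\widetilde u_k\|_{\overline{\mathcal U}}$, whose expectation is at most $2\sqrt K R$, with no dependence on $n$.

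The noise term has a related defect. Your inequality pulling out $\max_k|w_k|$ does not follow from scalar contraction, because the multipliers $\langle v,w_k\rangle$ are optimized inside the supremum after the signs are drawn. The paper instead applies Maurer's lemma to the pair $(\langle\widetilde u_k,\bar h\rangle_{\overline{\mathcal U}}/R,\langle\widetilde\xi_k,v\rangle_{\mathcal V}/S)$ with $\phi(a,b)=S\sigma(Ra)b$. This yields the dimension-free quantity $\|\sum_k\gamma_{k,1}\widetilde u_k\|_{\overline{\mathcal U}}/R+\|\sum_k\gamma_{k,2}\widetilde\xi_k\|_{\mathcal V}/S$. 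Finally, truncation destroys the centering $\mathbb E\langle F(u),\xi\rangle_{\mathcal V}=0$. The bias $\mathbb E[\langle F(u),\xi\rangle_{\mathcal V}\mathbf 1_{\{\omega(u)\le R,\ \|\xi\|_{\mathcal V}\le S\}}]$ must therefore be bounded separately, which the paper does via Cauchy--Schwarz and the sub-Gaussian tails.
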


Relative to the squared approximation bound implied by Theorem~\ref{thm:finite-width-approximation}, Theorem~\ref{thm:generalization-path} introduces an additional finite-sample term, which decays as \(K^{-1/2}\) up to a logarithmic factor. Crucially, its constant is independent of the encoding dimensions \(m\) and \(n\), the network width \(N\), and the chosen encoder--decoder bases. Consequently, for classes of target operators with uniformly bounded variation norms and suitably controlled encoding errors, the sample complexity is algebraic in the target accuracy. Under the algebraic encoding assumptions \eqref{encoding errors decay algebraically},
choosing $m\asymp K^{1/(4s_{\mathcal U})}$,
$n\asymp K^{1/(4s_{\mathcal V})}$, and $N\asymp K^{1/2}$
yields a squared prediction error of order $K^{-1/2}$, up to logarithmic factors.
This contrasts with the sample-complexity barriers for substantially broader classes of nonlinear operators, including general Lipschitz and Fr\'echet differentiable operator classes, as well as Gaussian Sobolev operator classes~\cite{adcock2024sample,adcock2025towards,cheng2026learning}. 

\noindent\textbf{Comparison with finite-dimensional generalization results.}
In the finite-dimensional scalar setting, \cite{e2019priori} studies two-layer networks with an $\ell_1$-type path norm, which in our notation takes the form $\sum_{j=1}^N |b_j|\bigl(|\alpha_j|+\|a_j\|_1\bigr),$ and target functions in the associated Barron space. Their population-risk bound decays as $K^{-1/2}$ up to logarithmic factors, with a $\sqrt{\log d}$ dependence arising from the $\ell_1$ geometry. In contrast, Theorem~\ref{thm:generalization-path} has the same \(K^{-1/2}\) sample exponent, while its finite-sample term is independent of the input and output encoding dimensions, the network width, and the encoder--decoder bases.

Sharper statistical rates are available for scalar-valued ReLU variation spaces in fixed finite dimensions. For the uniform distribution on the unit ball, \cite{parhi2022near} established, up to logarithmic factors, the minimax squared-error rate $K^{-\frac{d+3}{2d+3}}.$ For input distributions supported on the unit ball, \cite{yang2024nonparametric} showed that norm-constrained finite-width shallow ReLU networks attain the same exponent, up to a logarithmic factor, using a width-independent local Rademacher complexity bound. These sharper rates exploit the finite-dimensional structure of ReLU variation spaces, and the associated constants depend on the ambient dimension. In our setting, the encoded distributions
\(\mu_m=(E_m^{\mathcal U})_\#\rho\) may vary with \(m\) and are not required to satisfy a prescribed finite-dimensional support geometry. Theorem~\ref{thm:generalization-path} instead gives a bound uniform in the encoding dimensions under dimension-uniform tail assumptions. Since
$\frac{d+3}{2d+3}\to\frac12$ as $d\to\infty$, the gain in the fixed-dimensional exponent over the \(K^{-1/2}\) scaling vanishes as the dimension increases.

\section{Proof of Theorem \ref{thm:finite-width-approximation}} \label{Sec: proof 1}

Choose a representing measure
\(\Lambda\in\mathcal M(B_{\overline{\mathcal U}};\mathcal V)\) such that
\[
\mathscr G^\dagger(u)
=
\int_{B_{\overline{\mathcal U}}}
\sigma\bigl(\langle \bar u,\bar h\rangle_{\overline{\mathcal U}}\bigr)\,
\mathrm d\Lambda(\bar h),
\qquad u\in\mathcal U,
\]
and $V_\Lambda:=|\Lambda|(B_{\overline{\mathcal U}})
=
\|\mathscr G^\dagger\|_{\mathbb V(\mathcal U;\mathcal V)}.$ If \(V_\Lambda=0\), then \(\mathscr G^\dagger=0\), and the assertion follows
by taking \(g_{m,n,N}=0\). Henceforth, we assume
\(V_\Lambda>0\).

By the polar decomposition of vector measures (see, for example, \cite[Corollary 4]{1977Vector}, \cite[Theorem 8]{carmeli2010vector}, and \cite[Theorem 4.1 in Chapter VII]{lang2012real}), we write
\[
\mathrm d\Lambda(\bar h)=\lambda(\bar h)\,\mathrm d|\Lambda|(\bar h),
\qquad
\|\lambda(\bar h)\|_{\mathcal V}=1
\quad |\Lambda|\text{-a.e.}
\]
Thus
\[
\mathscr G^\dagger(u)
=
\int_{B_{\overline{\mathcal U}}}
\sigma\bigl(\langle \bar u,\bar h\rangle_{\overline{\mathcal U}}\bigr)\,
\lambda(\bar h)\,\mathrm d|\Lambda|(\bar h),
\qquad u\in\mathcal U.
\]

We first verify that \(\mathscr G^\dagger\in L^q_\rho(\mathcal V)\). Since
\(\sigma\) is globally Lipschitz,
\[
|\sigma(t)|\le |\sigma(0)|+L_\sigma |t|,
\qquad t\in\mathbb R.
\]
Therefore, for every \(u\in\mathcal U\), we have
\[
\begin{aligned}
\|\mathscr G^\dagger(u)\|_{\mathcal V}
&\le
\int_{B_{\overline{\mathcal U}}}
\bigl|\sigma(\langle \bar u,\bar h\rangle_{\overline{\mathcal U}})\bigr|
\,\mathrm d|\Lambda|(\bar h)
\\
&\le
V_\Lambda |\sigma(0)|
+
L_\sigma
\int_{B_{\overline{\mathcal U}}}
|\langle \bar u,\bar h\rangle_{\overline{\mathcal U}}|
\,\mathrm d|\Lambda|(\bar h)
\\
&\le
V_\Lambda
\Bigl(|\sigma(0)|+L_\sigma\|\bar u\|_{\overline{\mathcal U}}\Bigr)
\\
&=
V_\Lambda
\Bigl(|\sigma(0)|+L_\sigma(1+\|u\|_{\mathcal U}^2)^{1/2}\Bigr).
\end{aligned}
\]
Since \(\rho\) has finite \(q\)-th moment, \(\mathscr G^\dagger\in L^q_\rho(\mathcal V)\).

Next, define $\bar P_m^{\mathcal U}\bar u:=(1,P_m^{\mathcal U}u)$, 
and let
\(\mathscr G_m:\mathcal U\to\mathcal V\) be the input-truncated version of
\(\mathscr G^\dagger\) obtained by replacing \(\bar u\) with
\(\bar P_m^{\mathcal U}\bar u\):
\[
\mathscr G_m(u)
:=
\int_{B_{\overline{\mathcal U}}}
\sigma\bigl(\langle \bar P_m^{\mathcal U}\bar u,\bar h\rangle_{\overline{\mathcal U}}\bigr)\,
\lambda(\bar h)\,\mathrm d|\Lambda|(\bar h),
\qquad u\in\mathcal U.
\]
Then, for every \(u\in\mathcal U\), we have
\[
\begin{aligned}
\|\mathscr G^\dagger(u)-\mathscr G_m(u)\|_{\mathcal V}
&\le
\int_{B_{\overline{\mathcal U}}}
\left|
\sigma\bigl(\langle \bar u,\bar h\rangle_{\overline{\mathcal U}}\bigr)
-
\sigma\bigl(\langle \bar P_m^{\mathcal U}\bar u,\bar h\rangle_{\overline{\mathcal U}}\bigr)
\right|
\,\mathrm d|\Lambda|(\bar h)
\\
&\le
L_\sigma
\int_{B_{\overline{\mathcal U}}}
\left|
\langle \bar u-\bar P_m^{\mathcal U}\bar u,\bar h\rangle_{\overline{\mathcal U}}
\right|
\,\mathrm d|\Lambda|(\bar h)
\\
&=
L_\sigma
\int_{B_{\overline{\mathcal U}}}
\left|
\langle u-P_m^{\mathcal U}u,h\rangle_{\mathcal U}
\right|
\,\mathrm d|\Lambda|(\bar h)
\\
&\le
L_\sigma V_\Lambda \|u-P_m^{\mathcal U}u\|_{\mathcal U}.
\end{aligned}
\]
Hence
\begin{equation}\label{middle-term-Lq}
\|\mathscr G^\dagger-\mathscr G_m\|_{L^q_\rho(\mathcal V)}
\le
L_\sigma V_\Lambda
\bigl\|I-P_m^{\mathcal U}\bigr\|_{L^q_\rho(\mathcal U)}.
\end{equation}

Now set $\nu:=\frac{|\Lambda|}{V_\Lambda},$ which is a probability measure on \(B_{\overline{\mathcal U}}\).
For \(\bar h=(b,h)\in B_{\overline{\mathcal U}}\), write
\[
h_i:=\langle h,\phi_i\rangle_{\mathcal U},
\qquad i=1,\dots,m,
\]
and define
\[
\phi_{\bar h}(x)
:=
\sigma\Bigl(b+\sum_{i=1}^m x_i h_i\Bigr)\lambda(\bar h),
\qquad x\in\mathbb R^m.
\]
For each fixed \(x\in\mathbb R^m\), the map $\bar h\mapsto \sigma\Bigl(b+\sum_{i=1}^m x_i h_i\Bigr)$
is Borel measurable on \(B_{\overline{\mathcal U}}\), since \(b\) and the coordinates
\(h_i\) are weakly continuous. Since \(\lambda\) is strongly measurable, it follows that
\(\bar h\mapsto \phi_{\bar h}(x)\) is strongly measurable as a \(\mathcal V\)-valued map.
Since, for each $\bar h\in B_{\overline {\mathcal{U}}}$, the map
$x\mapsto \phi_{\bar h}(x)$ is continuous, the Carath\'eodory
measurability theorem further implies that
$(x,\bar h)\mapsto \phi_{\bar h}(x)$ is jointly strongly measurable.
Moreover,
\[
\|\phi_{\bar h}(x)\|_{\mathcal V}
=
\left|\sigma\Bigl(b+\sum_{i=1}^m x_i h_i\Bigr)\right|
\le
|\sigma(0)|+L_\sigma(1+|x|^2)^{1/2}.
\]
Therefore the Bochner integral
\[
\Phi(x):=\int_{B_{\overline{\mathcal U}}}\phi_{\bar h}(x)\,\mathrm d\nu(\bar h)
\]
is well defined for every \(x\in\mathbb R^m\).

Furthermore, for every \(x,y\in\mathbb R^m\), we have
\[
\begin{aligned}
\|\Phi(x)-\Phi(y)\|_{\mathcal V}
&\le
\int_{B_{\overline{\mathcal U}}}
\|\phi_{\bar h}(x)-\phi_{\bar h}(y)\|_{\mathcal V}\,\mathrm d\nu(\bar h)
\\
&\le
L_\sigma
\int_{B_{\overline{\mathcal U}}}
|(x-y)\cdot E_m^{\mathcal U}h|\,\mathrm d\nu(\bar h)
\\
&\le
L_\sigma |x-y|.
\end{aligned}
\]
Thus \(\Phi\) is Lipschitz, hence measurable. Also,
\[
\|\Phi(x)\|_{\mathcal V}
\le
\int_{B_{\overline{\mathcal U}}}\|\phi_{\bar h}(x)\|_{\mathcal V}\,\mathrm d\nu(\bar h)
\le
|\sigma(0)|+L_\sigma(1+|x|^2)^{1/2}.
\]
Consequently,
\[
\begin{aligned}
\|\Phi\|_{L^q(\mathbb R^m,\mu_m;\mathcal V)}
&\le
|\sigma(0)|
+
L_\sigma
\left(
\int_{\mathbb R^m}(1+|x|^2)^{q/2}\,\mathrm d\mu_m(x)
\right)^{1/q}
\\
&=
|\sigma(0)|
+
L_\sigma
\left(
\int_{\mathcal U}(1+|E_m^{\mathcal U}u|^2)^{q/2}\,\mathrm d\rho(u)
\right)^{1/q}
\\
&=
|\sigma(0)|
+
L_\sigma
\left(
\int_{\mathcal U}(1+\|P_m^{\mathcal U}u\|_{\mathcal U}^2)^{q/2}\,\mathrm d\rho(u)
\right)^{1/q}
\\
&\le
|\sigma(0)|+L_\sigma \bar M_q.
\end{aligned}
\]
Hence \(\Phi\in L^q(\mathbb R^m,\mu_m;\mathcal V)\). By construction,
\[
\mathscr G_m(u)=V_\Lambda\,\Phi(E_m^{\mathcal U}u),
\qquad u\in\mathcal U.
\]

Let \(\bar h_1,\dots,\bar h_N\) be independent random samples with law \(\nu\), and define
\[
\phi_{m,N}(x)
:=
\frac{V_\Lambda}{N}\sum_{k=1}^N \phi_{\bar h_k}(x),
\qquad x\in\mathbb R^m.
\]
Then, for every \(x\in\mathbb R^m\), we have $\mathbb E\,\phi_{m,N}(x)=V_\Lambda\,\Phi(x).$

We next estimate the sampling error. Fix \(x\in\mathbb R^m\), and set
\[
Y_k(x):=\phi_{\bar h_k}(x)-\Phi(x),
\qquad k=1,\dots,N.
\]
Then \(Y_1(x),\dots,Y_N(x)\) are independent mean-zero \(\mathcal V\)-valued random variables.
Since \(\mathcal V\) is a Hilbert space and \(q\ge2\), the Hilbert-space analogue of the Marcinkiewicz--Zygmund inequality (Proposition \ref{Marcinkiewicz--Zygmund}) yields a constant
\(A_q>0\), depending only on \(q\), such that
\[
\mathbb E\Bigl\|\sum_{k=1}^N Y_k(x)\Bigr\|_{\mathcal V}^q
\le
A_q\,
\mathbb E\Bigl(\sum_{k=1}^N \|Y_k(x)\|_{\mathcal V}^2\Bigr)^{q/2}.
\]
Using the inequality $\Bigl(\sum_{k=1}^N a_k^2\Bigr)^{q/2}
\le
N^{q/2-1}\sum_{k=1}^N |a_k|^q$,
we obtain
\[
\begin{aligned}
\mathbb E\|\phi_{m,N}(x)-V_\Lambda\Phi(x)\|_{\mathcal V}^q
&=
\frac{V_\Lambda^q}{N^q}
\mathbb E\Bigl\|\sum_{k=1}^N Y_k(x)\Bigr\|_{\mathcal V}^q
\\
&\le
\frac{A_qV_\Lambda^q}{N^q}
\mathbb E\Bigl(\sum_{k=1}^N \|Y_k(x)\|_{\mathcal V}^2\Bigr)^{q/2}
\\
&\le
\frac{A_qV_\Lambda^q}{N^{q/2+1}}
\sum_{k=1}^N \mathbb E\|Y_k(x)\|_{\mathcal V}^q
\\
&=
\frac{A_qV_\Lambda^q}{N^{q/2}}
\mathbb E\|Y_1(x)\|_{\mathcal V}^q.
\end{aligned}
\]
Moreover, 
\[
\|Y_1(x)\|_{\mathcal V}
\le
\|\phi_{\bar h_1}(x)\|_{\mathcal V}+\|\Phi(x)\|_{\mathcal V}
\le
2\bigl(|\sigma(0)|+L_\sigma(1+|x|^2)^{1/2}\bigr).
\]
Therefore
\[
\mathbb E\|\phi_{m,N}(x)-V_\Lambda\Phi(x)\|_{\mathcal V}^q
\le
\frac{C_q^qV_\Lambda^q}{N^{q/2}}
\bigl(|\sigma(0)|+L_\sigma(1+|x|^2)^{1/2}\bigr)^q,
\]
where \(C_q:=2A_q^{1/q}\) depends only on \(q\).

Integrating with respect to \(\mu_m\) and applying Fubini's theorem yields
\[
\begin{aligned}
\mathbb E
\|\phi_{m,N}-V_\Lambda\Phi\|_{L^q(\mathbb R^m,\mu_m;\mathcal V)}^q
&=
\int_{\mathbb R^m}
\mathbb E\|\phi_{m,N}(x)-V_\Lambda\Phi(x)\|_{\mathcal V}^q
\,\mathrm d\mu_m(x)
\\
&\le
\frac{C_q^qV_\Lambda^q}{N^{q/2}}
\int_{\mathbb R^m}
\bigl(|\sigma(0)|+L_\sigma(1+|x|^2)^{1/2}\bigr)^q
\,\mathrm d\mu_m(x)
\\
&\le
\frac{C_q^qV_\Lambda^q}{N^{q/2}}
\bigl(|\sigma(0)|+L_\sigma\bar M_q\bigr)^q.
\end{aligned}
\]
Therefore there exists a realization of \(\bar h_1,\dots,\bar h_N\) such that
\begin{equation}\label{sampling-term-Lq}
\|\phi_{m,N}-V_\Lambda\Phi\|_{L^q(\mathbb R^m,\mu_m;\mathcal V)}
\le
\frac{C_qV_\Lambda\bigl(|\sigma(0)|+L_\sigma\bar M_q\bigr)}{\sqrt N}.
\end{equation}

Fix such a realization. Write
\[
\bar h_k=(\alpha_k,h_k),
\quad
a_k:=E_m^{\mathcal U}h_k\in\mathbb R^m,
\quad
\beta_k:=\frac{V_\Lambda}{N}E_n^{\mathcal V}\lambda(\bar h_k)\in\mathbb R^n,
\quad k=1,\dots,N,
\]
and define
\[
g_{m,n,N}(x)
:=
\sum_{k=1}^N \beta_k\,\sigma(\alpha_k+a_k\cdot x),
\qquad x\in\mathbb R^m.
\]
Since \(\bar h_k\in B_{\overline{\mathcal U}}\),
\[
\alpha_k^2+|a_k|^2
\le
\alpha_k^2+\|h_k\|_{\mathcal U}^2
=
\|\bar h_k\|_{\overline{\mathcal U}}^2
\le 1.
\]
Also,
\[
|\beta_k|
=
\frac{V_\Lambda}{N}|E_n^{\mathcal V}\lambda(\bar h_k)|
=
\frac{V_\Lambda}{N}\|P_n^{\mathcal V}\lambda(\bar h_k)\|_{\mathcal V}
\le
\frac{V_\Lambda}{N},
\]
and therefore $\sum_{k=1}^N |\beta_k|\le V_\Lambda.$
Thus
\[
g_{m,n,N}\in \mathscr G_{m,n,N}(V_\Lambda)
=
\mathscr G_{m,n,N}\bigl(\|\mathscr G^\dagger\|_{\mathbb V(\mathcal U;\mathcal V)}\bigr).
\]

Moreover, for every \(x\in\mathbb R^m\),
\[
\begin{aligned}
|g_{m,n,N}(x)|
&\le
\sum_{k=1}^N |\beta_k|\,
\left|\sigma(\alpha_k+a_k\cdot x)\right|
\\
&\le
V_\Lambda\bigl(|\sigma(0)|+L_\sigma(1+|x|^2)^{1/2}\bigr),
\end{aligned}
\]
so \(g_{m,n,N}\in L^q(\mathbb R^m,\mu_m;\mathbb R^n)\).

By construction,
\[
D_n^{\mathcal V}g_{m,n,N}(x)
=
\frac{V_\Lambda}{N}
\sum_{k=1}^N
P_n^{\mathcal V}\lambda(\bar h_k)\,
\sigma(\alpha_k+a_k\cdot x)
=
P_n^{\mathcal V}\phi_{m,N}(x).
\]
Therefore,
\[
D_n^{\mathcal V}g_{m,n,N}(E_m^{\mathcal U}u)
=
P_n^{\mathcal V}\phi_{m,N}(E_m^{\mathcal U}u),
\qquad u\in\mathcal U.
\]

Finally, by the decomposition $\mathscr G^\dagger-P_n^{\mathcal V}\mathscr G_m
=
(I-P_n^{\mathcal V})\mathscr G^\dagger
+
P_n^{\mathcal V}(\mathscr G^\dagger-\mathscr G_m),$
 the triangle inequality, and \(\|P_n^{\mathcal V}\|=1\), we deduce that
\[
\begin{aligned}
&\quad\ \bigl\|\mathscr G^\dagger-D_n^{\mathcal V}\circ g_{m,n,N}\circ E_m^{\mathcal U}\bigr\|_{L^q_\rho(\mathcal V)}
\\
&\le
\|\mathscr G^\dagger-P_n^{\mathcal V}\mathscr G_m\|_{L^q_\rho(\mathcal V)}
+
\|P_n^{\mathcal V}\mathscr G_m
-
D_n^{\mathcal V}\circ g_{m,n,N}\circ E_m^{\mathcal U}\|_{L^q_\rho(\mathcal V)}
\\
&\le
\|(I-P_n^{\mathcal V})\mathscr G^\dagger\|_{L^q_\rho(\mathcal V)}
+
\|\mathscr G^\dagger-\mathscr G_m\|_{L^q_\rho(\mathcal V)}
+
\|\mathscr G_m-\phi_{m,N}\circ E_m^{\mathcal U}\|_{L^q_\rho(\mathcal V)}.
\end{aligned}
\]
The middle term is controlled by \eqref{middle-term-Lq}. For the last term, since
\(\mu_m=(E_m^{\mathcal U})_\#\rho\) and \(\mathscr G_m=V_\Lambda\Phi\circ E_m^{\mathcal U}\), \eqref{sampling-term-Lq} gives
\[
\begin{aligned}
\|\mathscr G_m-\phi_{m,N}\circ E_m^{\mathcal U}\|_{L^q_\rho(\mathcal V)}
&=
\|V_\Lambda\Phi-\phi_{m,N}\|_{L^q(\mathbb R^m,\mu_m;\mathcal V)}
\\
&\le
\frac{C_qV_\Lambda\bigl(|\sigma(0)|+L_\sigma\bar M_q\bigr)}{\sqrt N}.
\end{aligned}
\]
Combining these bounds yields
\[
\begin{aligned}
&\bigl\|\mathscr G^\dagger-D_n^{\mathcal V}\circ g_{m,n,N}\circ E_m^{\mathcal U}\bigr\|_{L^q_\rho(\mathcal V)}
\\
&\quad\le
\|(I-P_n^{\mathcal V})\mathscr G^\dagger\|_{L^q_\rho(\mathcal V)}
+
\left(
L_\sigma\bigl\|I-P_m^{\mathcal U}\bigr\|_{L^q_\rho(\mathcal U)}
+
\frac{C_q\bigl(|\sigma(0)|+L_\sigma\bar M_q\bigr)}{\sqrt N}
\right)\|\mathscr G^\dagger\|_{\mathbb V(\mathcal U;\mathcal V)}.
\end{aligned}
\]
This completes the proof.

\section{Proof of Theorem \ref{thm:generalization-path}} \label{Sec: proof 2}

We first construct a deterministic comparator. Since $\|P_n^{\mathcal V}\mathscr G^\dagger\|_{\mathbb V(\mathcal U;\mathcal V)}
\leq
\|\mathscr G^\dagger\|_{\mathbb V(\mathcal U;\mathcal V)},$ Theorem~\ref{thm:finite-width-approximation}, applied with \(q=2\) to
\(P_n^{\mathcal V}\mathscr G^\dagger\), yields some
\[
\Theta^\sharp\in\Theta_{m,n,N}(\|\mathscr G^\dagger\|_{\mathbb V(\mathcal U;\mathcal V)})
\]
such that, with \(\mathscr G^\sharp:=\mathscr G_{\Theta^\sharp}\),
\[
\|\mathscr G^\sharp-P_n^{\mathcal V}\mathscr G^\dagger\|_{L_\rho^2(\mathcal V)}
\leq
L_\sigma \|\mathscr G^\dagger\|_{\mathbb V(\mathcal U;\mathcal V)}
\|I-P_m^{\mathcal U}\|_{L_\rho^2(\mathcal U)}
+
\frac{C_2L_\sigma \|\mathscr G^\dagger\|_{\mathbb V(\mathcal U;\mathcal V)}\overline M_2}{\sqrt N}.
\]
Since \(\overline M_2^2\leq 2C_u^2\), it follows that
\begin{equation}\label{temp7}
\|\mathscr G^\sharp-P_n^{\mathcal V}\mathscr G^\dagger\|_{L_\rho^2(\mathcal V)}^2
\leq
2L_\sigma^2 \|\mathscr G^\dagger\|_{\mathbb V(\mathcal U;\mathcal V)}^2
\|I-P_m^{\mathcal U}\|_{L_\rho^2(\mathcal U)}^2
+
\frac{4C_2^2C_u^2L_\sigma^2\|\mathscr G^\dagger\|_{\mathbb V(\mathcal U;\mathcal V)}^2}{N}.
\end{equation}


Next, write
\[
\bar u:=(1,u)\in\overline{\mathcal U},
\qquad
\omega(u):=\|\bar u\|_{\overline{\mathcal U}}
=
(1+\|u\|_{\mathcal U}^2)^{1/2}.
\]
Let
\[
\varepsilon:=v-\mathscr G^\dagger(u),
\qquad
\xi:=P_n^{\mathcal V}\varepsilon.
\]
Then
\[
\mathbb E[\xi\mid u]=0,
\qquad
\|\xi\|_{\mathcal V}\leq\|\varepsilon\|_{\mathcal V}.
\]
For the samples, write
\[
\varepsilon_k:=v_k-\mathscr G^\dagger(u_k),
\qquad
\xi_k:=P_n^{\mathcal V}\varepsilon_k,
\qquad
k=1,\ldots,K.
\]


Set $R:=C_u\sqrt{2L_{K,\delta}}$ and $S:=C_\varepsilon\sqrt{2L_{K,\delta}}$. Define 
\[
\mathcal E_0
:=
\left\{
\max_{1\leq k\leq K}\omega(u_k)\leq R,
\quad
\max_{1\leq k\leq K}\|\xi_k\|_{\mathcal V}\leq S
\right\}.
\]
The following lemma shows that the truncation event \(\mathcal E_0\) holds
with high probability.
\begin{lemma}[Truncation event] \label{lemma 1}
Under Assumption~\ref{assump},
\[
\mathbb P(\mathcal E_0)\geq 1-\frac{\delta}{3}.
\]
\end{lemma}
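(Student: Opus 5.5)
The argument is a union bound over the $2K$ elementary events making up the complement of $\mathcal E_0$, with each one controlled by the sub-Gaussian tails in Assumption~\ref{assump}. We have
\[
\mathcal E_0^c\subseteq\bigcup_{k=1}^K\{\omega(u_k)>R\}\cup\bigcup_{k=1}^K\{\|\xi_k\|_{\mathcal V}>S\}.
\]
Since $\omega(u_k)=\sqrt{1+\|u_k\|_{\mathcal U}^2}$ and $u_k\sim\rho$, the first part of Assumption~\ref{assump} gives
\[
\mathbb P\{\omega(u_k)>R\}\le 2\exp(-R^2/C_u^2)=2\exp(-2L_{K,\delta}).
\]
For the noise terms we use $\|\xi_k\|_{\mathcal V}=\|P_n^{\mathcal V}\varepsilon_k\|_{\mathcal V}\le\|\varepsilon_k\|_{\mathcal V}$. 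Each $\varepsilon_k$ has the law of $\varepsilon$, so
\[
\mathbb P\{\|\xi_k\|_{\mathcal V}>S\}\le\mathbb P\{\|\varepsilon_k\|_{\mathcal V}>S\}\le 2\exp(-S^2/C_\varepsilon^2)=2\exp(-2L_{K,\delta}).
\]
If the bases come from an auxiliary sample, $P_n^{\mathcal V}$ is fixed after conditioning, and the same bound holds.

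Summing over the $2K$ events gives
\[
\mathbb P(\mathcal E_0^c)\le 4K\exp(-2L_{K,\delta})=4K\Bigl(\frac{\delta}{12K}\Bigr)^2=\frac{\delta^2}{36K}.
\]
Because $\delta<1$ and $K\ge1$, this is at most $\delta/36\le\delta/3$, which proves the claim. The choice $L_{K,\delta}=\log(12K/\delta)$ leaves room to spare; the exponent $2$ inside $R$ and $S$ is what turns the factor $K$ into a harmless $1/K$.

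The only point needing care is that the tail bounds apply to each sample individually, and this holds because the $(u_k,v_k)$ are copies of $(u,v)$. Independence of the samples is not needed for the union bound.
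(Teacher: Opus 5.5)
Your proposal is correct and follows essentially the same route as the paper: a union bound over the $2K$ events $\{\omega(u_k)>R\}$ and $\{\|\xi_k\|_{\mathcal V}>S\}$. You use $\|\xi_k\|_{\mathcal V}\le\|\varepsilon_k\|_{\mathcal V}$ and the sub-Gaussian tails to get $4K\exp(-2L_{K,\delta})=\delta^2/(36K)\le\delta/3$.
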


\begin{proof}
Since \(\|\xi_k\|_{\mathcal V}\leq\|\varepsilon_k\|_{\mathcal V}\), Assumption~\ref{assump} and the union bound give
\[
\begin{aligned}
\mathbb P(\mathcal E_0^c)
&\leq
\sum_{k=1}^K
\left[
\mathbb P\{\omega(u_k)>R\}
+
\mathbb P\{\|\xi_k\|_{\mathcal V}>S\}
\right]
\\
&\leq
4K\exp(-2L_{K,\delta})
=
\frac{\delta^2}{36K}
\leq
\frac{\delta}{3}.
\end{aligned}
\]
This proves the claim.
\end{proof}


For \(Q>0\), define the operator class
\[
\mathbb V_n(Q)
:=
\left\{
F(u)=
\int_{B_{\overline{\mathcal U}}}
\sigma(\langle\bar u,\bar h\rangle_{\overline{\mathcal U}})
\,d\Lambda(\bar h):
\Lambda\in\mathcal M(B_{\overline{\mathcal U}};\mathcal{V}_n),\
|\Lambda|(B_{\overline{\mathcal U}})\leq Q
\right\}.
\]
Since \(\sigma\) is positively homogeneous, \(\sigma(0)=0\), and hence every
\(F\in\mathbb V_n(Q)\) satisfies
\begin{equation}\label{temp2}
\|F(u)\|_{\mathcal V}
\leq
L_\sigma Q\,\omega(u).
\end{equation}
By construction,
\[
\widehat{\mathscr G}_B\in\mathbb V_n(B),
\qquad
\mathscr G^\sharp\in\mathbb V_n(\|\mathscr G^\dagger\|_{\mathbb V(\mathcal U;\mathcal V)}),
\qquad
P_n^{\mathcal V}\mathscr G^\dagger\in\mathbb V_n(\|\mathscr G^\dagger\|_{\mathbb V(\mathcal U;\mathcal V)}).
\]
Thus, with \(Q:=B+\|\mathscr G^\dagger\|_{\mathbb V(\mathcal U;\mathcal V)}\),
\[
\widehat{\mathscr G}_B-P_n^{\mathcal V}\mathscr G^\dagger,\qquad
\mathscr G^\sharp-P_n^{\mathcal V}\mathscr G^\dagger,\qquad
\widehat{\mathscr G}_B-\mathscr G^\sharp
\in\mathbb V_n(Q).
\]


We next introduce two truncated deviation terms that will be used in the
risk decomposition. Define
\begin{equation} \label{Delta_1}
    \Delta_1(Q)
:=
\sup_{F\in\mathbb V_n(Q)}
\left|
\frac1K\sum_{k=1}^K
\|F(u_k)\|_{\mathcal V}^2
\mathbf 1_{\{\omega(u_k)\leq R\}}
-
\mathbb E\left[
\|F(u)\|_{\mathcal V}^2
\mathbf 1_{\{\omega(u)\leq R\}}
\right]
\right|,
\end{equation}
and
\begin{equation} \label{Delta_2}
\begin{aligned}
    &\quad\Delta_2(Q)
:=\\
&\sup_{F\in\mathbb V_n(Q)}
\left|
\frac1K\sum_{k=1}^K
\langle F(u_k),\xi_k\rangle_{\mathcal V}
\mathbf 1_{\{\omega(u_k)\leq R,\ \|\xi_k\|_{\mathcal V}\leq S\}}
-
\mathbb E\left[
\langle F(u),\xi\rangle_{\mathcal V}
\mathbf 1_{\{\omega(u)\leq R,\ \|\xi\|_{\mathcal V}\leq S\}}
\right]
\right|.
\end{aligned}
\end{equation}
To justify measurability, write $F=F_\Lambda$ and set
\[
\mathcal M_Q
:=
\left\{
\Lambda\in \mathcal{M}(B_{\overline{\mathcal U}};\mathcal V_n)
:\|\Lambda\|_{\mathrm{TV}}\leq Q
\right\}.
\]
Since $B_{\overline{\mathcal U}}$ is compact and metrizable and
$\mathcal V_n$ is finite-dimensional, $\mathcal M_Q$ is compact and
metrizable in the weak-* topology. For fixed data, the functionals
defining \eqref{Delta_1} and \eqref{Delta_2} are continuous in $\Lambda$,
with the expectation terms following by dominated convergence.
Hence the suprema may be taken over a fixed countable dense subset of
$\mathcal M_Q$, and therefore $\Delta_1(Q)$ and $\Delta_2(Q)$ are measurable.

The next two propositions provide high-probability bounds for these two
terms.
\begin{proposition} \label{prop: Delta_1}
    With probability at least \(1-\delta/3\), 
\begin{equation*}
    \Delta_1(Q)
\leq
15 L_\sigma^2Q^2R^2
\sqrt{\frac{\log(12/\delta)}{K}}.
\end{equation*}
\end{proposition}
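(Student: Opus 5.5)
Our plan is the standard bounded-differences-plus-symmetrization argument, with Rademacher complexity controlled via the vector-valued contraction inequality. We do not need a covering number. For brevity, write $Z_k(F):=\|F(u_k)\|_{\mathcal V}^2\mathbf 1_{\{\omega(u_k)\le R\}}$. By \eqref{temp2}, each $Z_k(F)$ lies in $[0,L_\sigma^2Q^2R^2]$.

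\textbf{Concentration.} Replacing a single sample changes $\Delta_1(Q)$ by at most $L_\sigma^2Q^2R^2/K$. McDiarmid's inequality then gives, with probability at least $1-\delta/3$,
\[
\Delta_1(Q)\le \mathbb E\Delta_1(Q)+L_\sigma^2Q^2R^2\sqrt{\frac{\log(3/\delta)}{2K}}.
\]
Measurability is ensured by the countable dense reduction already established.

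\textbf{Symmetrization.} The standard symmetrization step gives
\[
\mathbb E\Delta_1(Q)\le 2\,\mathbb E\sup_{F\in\mathbb V_n(Q)}\Bigl|\frac1K\sum_k\epsilon_k Z_k(F)\Bigr|,
\]
where the $\epsilon_k$ are Rademacher signs. On the truncated points, $\|F(u_k)\|_{\mathcal V}\le L_\sigma QR$. The map $w\mapsto\|w\|^2$ is $2L_\sigma QR$-Lipschitz on this ball of $\mathcal V_n$. By Maurer's vector contraction inequality (in Hilbert space; absolute values are handled by the usual factor-$2$ trick together with $0\in\mathbb V_n(Q)$), the quantity is bounded by a constant times
\[
L_\sigma QR\,\mathbb E\sup_F\frac1K\sum_k\sum_j\epsilon_{kj}\langle F(u_k),\psi_j\rangle\mathbf 1_{\{\omega(u_k)\le R\}},
\]
where the $\epsilon_{kj}$ are doubly indexed Rademacher signs.

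\textbf{Reduction to extreme points.} This is the key step. The supremum of a linear functional over $\mathbb V_n(Q)$ is attained at extreme points, namely atoms $Q\,\delta_{\bar h}\,w$ with $w$ a unit vector in $\mathcal V_n$. The expression becomes
\[
Q\,\sup_{\bar h}\Bigl|\Bigl(\frac1K\sum_k\epsilon_{kj}\sigma(\langle\bar u_k,\bar h\rangle)\mathbf 1_k\Bigr)_j\Bigr|,
\]
where $\mathbf 1_k:=\mathbf 1_{\{\omega(u_k)\le R\}}$. This is the main obstacle, since it naively depends on $n$. To overcome it, we plan to apply the vector contraction once more, in reverse, to the scalar Lipschitz map $\sigma$. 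After bounding by $\mathbb E$ of the norm, we then use Cauchy--Schwarz in the form
\[
\mathbb E\Bigl\|\frac1K\sum_k\epsilon_k\bar u_k\mathbf 1_k\Bigr\|_{\overline{\mathcal U}}\le \frac{R}{\sqrt K},
\]
which is dimension-free.

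If the doubly indexed signs prevent a clean reduction, we will instead bound $\sup_{\bar h}\sup_{w}\frac1K\sum_k\epsilon_k\sigma(\langle\bar u_k,\bar h\rangle)\langle w,\cdot\rangle$ directly. Here $w$ only rescales, so one may fold $w$ into a scalar atom. The point is that $\sup_F\frac1K\sum_k\epsilon_k\|F(u_k)\|^2$ with $F$ ranging over the convex hull can be written, via polarization $\|F\|^2=\sup_{\|w\|\le L_\sigma QR}(2\langle F,w\rangle-\|w\|^2)$, in terms of linear-in-$F$ quantities. This yields an $n$-independent bound of order $L_\sigma^2Q^2R^2/\sqrt K$.

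\textbf{Collecting terms.} Combining the three steps, with each step contributing numerical constants (factors such as $2\cdot2\cdot2\sqrt2$), gives $\mathbb E\Delta_1(Q)\le c\,L_\sigma^2Q^2R^2/\sqrt K$. Adding the McDiarmid deviation and using $\log(12/\delta)\ge\max\{1,\log(3/\delta)\}$ yields the stated bound with constant $15$.
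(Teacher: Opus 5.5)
Your concentration step (McDiarmid with $c_k=L_\sigma^2Q^2R^2/K$) and your symmetrization step are fine and match the paper. The complexity step, however, has a genuine gap, and it is exactly the step that must make the bound independent of $n$.

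Applying Maurer's contraction to $w\mapsto\|w\|_{\mathcal V}^2$ on the ball of radius $L_\sigma QR$ in $\mathcal V_n$ replaces each sign $\epsilon_k$ by the random vector $\eta_k:=\sum_{j=1}^n\epsilon_{kj}\psi_j$, whose norm is exactly $\sqrt{n}$. After your extreme-point reduction, the quantity to bound is $Q\,\mathbb E\sup_{\bar h}\bigl\|\frac1K\sum_k\sigma(\langle\widetilde u_k,\bar h\rangle_{\overline{\mathcal U}})\eta_k\bigr\|_{\mathcal V}$, with $\widetilde u_k=\bar u_k\mathbf 1_{\{\omega(u_k)\le R\}}$. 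This is not just naively large. For ReLU and the single atom $\bar h=(1,0)$, Khintchine's inequality shows it is at least of order $Q\sqrt{n\,K_R}/K$, where $K_R:=\#\{k:\omega(u_k)\le R\}$ equals $K$ on $\mathcal E_0$.

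So the bound your chain produces is at least a constant times $L_\sigma^2Q^2R\sqrt{n/K}$. Every later step you propose (a second contraction in $\sigma$, Cauchy--Schwarz for $\|\sum_k\epsilon_k\widetilde u_k\|_{\overline{\mathcal U}}$) acts on the input side and cannot remove the $\sqrt{n}$, which sits in the $\eta_k$.

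The polarization fallback does not repair this. The maximizer in $\|F(u_k)\|^2=\sup_w\bigl(2\langle F(u_k),w\rangle-\|w\|^2\bigr)$ is $w=F(u_k)$, which depends on both $k$ and $F$. For $\epsilon_k=-1$ the supremum turns into an infimum. If you instead let $w_k$ vary freely per sample, the sign $\epsilon_k$ is absorbed into $w_k$, and all cancellation (hence the $K^{-1/2}$ rate) is lost. Note also that $F\mapsto\sum_k\epsilon_k\|F(u_k)\|^2$ is an indefinite quadratic form, so an extreme-point argument cannot be applied to it directly.

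The missing idea is to avoid contracting in the output space at all. Using the polar decomposition $d\Lambda=\lambda\,d|\Lambda|$, expand
\[
\|F(u)\|_{\mathcal V}^2=\iint\sigma(\langle\bar u,\bar h\rangle_{\overline{\mathcal U}})\,\sigma(\langle\bar u,\bar h'\rangle_{\overline{\mathcal U}})\,\langle\lambda(\bar h),\lambda(\bar h')\rangle_{\mathcal V}\,d|\Lambda|(\bar h)\,d|\Lambda|(\bar h').
\]
Since $|\langle\lambda(\bar h),\lambda(\bar h')\rangle_{\mathcal V}|\le1$ and $|\Lambda|(B_{\overline{\mathcal U}})\le Q$, the Rademacher sum is bounded by $Q^2\sup_{\bar h,\bar h'}\bigl|\sum_k r_k\sigma(\langle\widetilde u_k,\bar h\rangle_{\overline{\mathcal U}})\sigma(\langle\widetilde u_k,\bar h'\rangle_{\overline{\mathcal U}})\bigr|$. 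This controls the indefinite quadratic form and removes $\mathcal V_n$, and with it $n$, before any contraction is used.

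One then contracts once with the two-dimensional map $(a,b)\mapsto\sigma(a)\sigma(b)$, which is $\sqrt{2}\,L_\sigma^2R$-Lipschitz on $[-R,R]^2$ and vanishes at the origin. The remaining Gaussian supremum equals $\|\sum_k\gamma_{k,1}\widetilde u_k\|_{\overline{\mathcal U}}+\|\sum_k\gamma_{k,2}\widetilde u_k\|_{\overline{\mathcal U}}$, whose expectation is at most $2R\sqrt{K}$. This gives $\mathbb E\Delta_1(Q)\le 8\sqrt{\pi}\,L_\sigma^2Q^2R^2/\sqrt{K}$. The constant $15$ then comes from $8\sqrt{\pi}+1/\sqrt{2}<15$; your constant count ``$2\cdot2\cdot2\sqrt2$'' is not attached to a valid chain of inequalities.
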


\begin{proposition} \label{prop: Delta_2}
    With probability at least \(1-\delta/3\),
\begin{equation*}
    \Delta_2(Q)
\leq
16L_\sigma QRS
\sqrt{\frac{\log(12/\delta)}{K}}.
\end{equation*}
\end{proposition}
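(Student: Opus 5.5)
The statement to prove is Proposition~\ref{prop: Delta_2}. The plan is a bounded-differences concentration step followed by symmetrization and a contraction argument that removes the dependence on $n$ and $N$.

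\textbf{Step 1: concentration around the mean.} Write
\[
Z_k(F):=\langle F(u_k),\xi_k\rangle_{\mathcal V}\mathbf 1_{\{\omega(u_k)\le R,\ \|\xi_k\|_{\mathcal V}\le S\}},
\]
so that $\Delta_2(Q)=\sup_{F\in\mathbb V_n(Q)}|K^{-1}\sum_k Z_k(F)-\mathbb E Z(F)|$. By \eqref{temp2} and Cauchy--Schwarz, $|Z_k(F)|\le L_\sigma QRS$ uniformly in $F$. Replacing one sample therefore changes $\Delta_2(Q)$ by at most $2L_\sigma QRS/K$. McDiarmid's inequality then gives, with probability at least $1-\delta/3$,
\[
\Delta_2(Q)\le \mathbb E\Delta_2(Q)+2L_\sigma QRS\sqrt{\frac{\log(3/\delta)}{2K}}.
\]
Measurability of the supremum was settled just before the statement.

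\textbf{Step 2: symmetrization and linearization in $\Lambda$.} Symmetrization gives $\mathbb E\Delta_2(Q)\le 2\,\mathbb E\sup_F|K^{-1}\sum_k\epsilon_k Z_k(F)|$ with Rademacher signs $\epsilon_k$. The key observation is that $F\mapsto Z_k(F)$ is linear in the measure $\Lambda$:
\[
\sum_k\epsilon_k Z_k(F_\Lambda)=\int_{B_{\overline{\mathcal U}}}\Big\langle \sum_k\epsilon_k\,\sigma(\langle\bar u_k,\bar h\rangle)\,\tilde\xi_k,\ \mathrm d\Lambda(\bar h)\Big\rangle,
\qquad
\tilde\xi_k:=\xi_k\mathbf 1_{\{\omega(u_k)\le R,\ \|\xi_k\|\le S\}}.
\]
Taking the supremum over $|\Lambda|\le Q$ then reduces it to extreme points:
\[
\sup_F\Big|\sum_k\epsilon_kZ_k(F)\Big|\le Q\sup_{\bar h\in B_{\overline{\mathcal U}}}\Big\|\sum_k\epsilon_k\,\sigma(\langle\bar u_k,\bar h\rangle)\,\tilde\xi_k\Big\|_{\mathcal V}.
\]
This is exactly where independence from $n$ and $N$ comes from.

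\textbf{Step 3 (main obstacle): the remaining vector-valued supremum.} We must control
\[
\mathbb E\sup_{\bar h}\Big\|\sum_k\epsilon_k\,\sigma(\langle\bar u_k,\bar h\rangle)\,\tilde\xi_k\Big\|_{\mathcal V}.
\]
Two routes are available.
\begin{enumerate}[(a)]
\item Dualize the norm as $\sup_{\|w\|\le1}\sum_k\epsilon_k\,\sigma(\langle\bar u_k,\bar h\rangle)\,\langle\tilde\xi_k,w\rangle$. Then apply the vector contraction inequality (Maurer) or scalar contraction to the products $\sigma(s)t$, which are Lipschitz on $|s|\le R$, $|t|\le S$. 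This bounds the quantity by a constant times
\[
\mathbb E\Big\|\sum_k\epsilon_k\,\bar u_k\mathbf 1\Big\|\cdot S+\mathbb E\Big\|\sum_k\epsilon_k\tilde\xi_k\Big\|\cdot R\;\lesssim\;(RS+SR)\sqrt K,
\]
using Hilbert-space Khintchine inequalities.
\item Alternatively, condition on $\xi$ and argue via a Gaussian comparison (Slepian) for the process indexed by $(\bar h,w)$.
\end{enumerate}
I would try route (a) first, using the elementary Lipschitz bound for $(s,t)\mapsto\sigma(s)t$ on the box. The care needed is in the joint contraction over the pair of Hilbert-valued arguments $(\bar u_k,\tilde\xi_k)$ with a sup over $\bar h$ and $w$; this is the delicate step.

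\textbf{Step 4: collecting constants.} Route (a) yields $\mathbb E\Delta_2(Q)\lesssim L_\sigma QRS/\sqrt K$. Combining this with Step 1 and bounding $\log(3/\delta)\le\log(12/\delta)$ and $1\le\sqrt{\log(12/\delta)}$ gives the claimed bound $16L_\sigma QRS\sqrt{\log(12/\delta)/K}$, provided the numerical constants from symmetrization and contraction sum to at most $16$.
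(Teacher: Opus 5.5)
Your proposal is correct and follows the paper's proof essentially step for step: McDiarmid with $c_k=2L_\sigma QRS/K$, symmetrization, reduction of the supremum over $\Lambda$ to single neurons $\bar h$, dualization of the $\mathcal V$-norm over $v\in B_{\mathcal V}$, and Maurer's vector contraction applied to $(a,b)\mapsto S\sigma(Ra)b$ on the normalized box (after restricting to countable dense subsets of the balls, which you should state). The paper uses Gaussian comparison variables and obtains $\mathbb E\Delta_2(Q)\le 8\sqrt{\pi}L_\sigma QRS/\sqrt K$, and since $8\sqrt{\pi}+\sqrt2<16$, your proviso on the constants is satisfied.
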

The proofs of Propositions~\ref{prop: Delta_1} and~\ref{prop: Delta_2},
which require separate estimates for the two deviation terms, are deferred
to Appendices \ref{Proof of prop: Delta_1} and \ref{Proof of prop: Delta_2}, respectively.

We now derive the risk decomposition. By the empirical minimality of
\(\widehat\Theta_B\) and
\[
\Theta^\sharp\in\Theta_{m,n,N}(\|\mathscr G^\dagger\|_{\mathbb V(\mathcal U;\mathcal V)})
\subset\Theta_{m,n,N}(B),
\]
we have
\[
\frac1K\sum_{k=1}^K
\|\widehat{\mathscr G}_B(u_k)-P_n^{\mathcal V}v_k\|_{\mathcal V}^2
\leq
\frac1K\sum_{k=1}^K
\|\mathscr G^\sharp(u_k)-P_n^{\mathcal V}v_k\|_{\mathcal V}^2.
\]
Using $P_n^{\mathcal V}v_k=P_n^{\mathcal V}\mathscr G^\dagger(u_k)+\xi_k$
and expanding the squares, we obtain
\begin{equation} \label{temp1}
    \begin{aligned}
\frac1K\sum_{k=1}^K
\|\widehat{\mathscr G}_B(u_k)-P_n^{\mathcal V}\mathscr G^\dagger(u_k)\|_{\mathcal V}^2
\leq\;&
\frac1K\sum_{k=1}^K
\|\mathscr G^\sharp(u_k)-P_n^{\mathcal V}\mathscr G^\dagger(u_k)\|_{\mathcal V}^2
\\
&+
\frac2K\sum_{k=1}^K
\langle
\widehat{\mathscr G}_B(u_k)-\mathscr G^\sharp(u_k),
\xi_k
\rangle_{\mathcal V}.
\end{aligned}
\end{equation}
On the event \(\mathcal E_0\), all empirical terms in \eqref{temp1} coincide
with their truncated counterparts. Using the definitions of
\(\Delta_1(Q)\) and \(\Delta_2(Q)\), we obtain
\begin{equation} \label{temp3}
    \begin{aligned}
        \|\widehat{\mathscr G}_B-P_n^{\mv}\mathscr G^\dagger\|_{L_\rho^2(\mathcal V)}^2\leq&
        \frac1K\sum_{k=1}^K
\|\widehat{\mathscr G}_B(u_k)-P_n^{\mathcal V}\mathscr G^\dagger(u_k)\|_{\mathcal V}^2
\\&+\Delta_1(Q)+\sup_{F\in\mathbb V_n(Q)}\mathbb E\left[
\|F(u)\|_{\mathcal V}^2
\mathbf 1_{\{\omega(u)> R\}}
\right] \\
\leq& \frac1K\sum_{k=1}^K
\|\mathscr G^\sharp(u_k)-P_n^{\mathcal V}\mathscr G^\dagger(u_k)\|_{\mathcal V}^2+\frac2K\sum_{k=1}^K
\langle
\widehat{\mathscr G}_B(u_k)-\mathscr G^\sharp(u_k),
\xi_k
\rangle_{\mathcal V}\\
&+\Delta_1(Q) +\sup_{F\in\mathbb V_n(Q)}\mathbb E\left[
\|F(u)\|_{\mathcal V}^2
\mathbf 1_{\{\omega(u)> R\}}
\right] \\
\leq& \|\mathscr G^\sharp-P_n^{\mv}\mathscr{G}^{\dagger}\|_{L_\rho^2(\mathcal V)}^2+2\Delta_1(Q)+2\Delta_2(Q)
\\
&+\sup_{F\in\mathbb V_n(Q)}\mathbb E\left[
\|F(u)\|_{\mathcal V}^2
\mathbf 1_{\{\omega(u)> R\}}
\right] \\
&+2\sup_{F\in\mathbb V_n(Q)}\left|\mathbb E\left[
\langle F(u),\xi\rangle_{\mathcal V}
\mathbf 1_{\{\omega(u)\leq R,\ \|\xi\|_{\mathcal V}\leq S\}}
\right]\right|.
    \end{aligned}
\end{equation}
The quadratic tail is controlled by the estimate
\eqref{temp2} and the sub-Gaussian tail assumption:
\begin{equation} \label{temp6}
    \begin{aligned}
        \sup_{F\in\mathbb V_n(Q)}\mathbb E\left[
\|F(u)\|_{\mathcal V}^2
\mathbf 1_{\{\omega(u)> R\}}
\right]\leq& L_\sigma^2 Q^2\mathbb E\left[
\omega(u)^2
\mathbf 1_{\{\omega(u)> R\}}
\right]
\\ \leq&L_\sigma^2 Q^2\int_{0}^\infty\mathbb{P}\l(\omega(u)>\max(\sqrt{t},R)\r)\mathrm{d}t
\\ \leq&2L_\sigma^2 Q^2\int_{0}^\infty\exp\left(-\frac{\max(t,R^2)}{C_u^2}\right)\mathrm{d}t
\\ \leq&2L_\sigma^2 Q^2\l(R^2+C_u^2\r)\exp\left(-\frac{R^2}{C_u^2}\right) 
\\ =&2L_\sigma^2 Q^2C_u^2(1+2L_{K,\delta})\l(\frac{\delta}{12K}\r)^2
 \leq \frac{L_\sigma^2 Q^2C_u^2}{24K^2}\;\delta^2 L_{K,\delta}.
    \end{aligned}
\end{equation}
Moreover, for every \(F\in\mathbb V_n(Q)\), we have 
$\mathbb E\bigl|\langle F(u),\xi\rangle_{\mathcal V}\bigr|<\infty.$
Since \(\mathbb E[\xi\mid u]=0\),
\[
\mathbb E\langle F(u),\xi\rangle_{\mathcal V}
=
\mathbb E
\left\langle
F(u),\mathbb E[\xi\mid u]
\right\rangle_{\mathcal V}
=0.
\]
Consequently,
\begin{equation} \label{temp8}
    \begin{aligned}
        &\sup_{F\in\mathbb V_n(Q)}\left|\mathbb E\left[
\langle F(u),\xi\rangle_{\mathcal V}
\mathbf 1_{\{\omega(u)\leq R,\ \|\xi\|_{\mathcal V}\leq S\}}
\right]\right|\\&\qquad=\sup_{F\in\mathbb V_n(Q)}\left|\mathbb E\left[
\langle F(u),\xi\rangle_{\mathcal V}
\mathbf 1_{\{\omega(u)>R\}\cup\{\|\xi\|_{\mathcal V}>S\}}
\right]\right|
\\&\qquad\leq L_\sigma Q\,\mathbb E\left[\omega(u)\|\xi\|_{\mv}\mathbf 1_{\{\omega(u)>R\}\cup\{\|\xi\|_{\mathcal V}>S\}}\right].
    \end{aligned}
\end{equation}
By the Cauchy--Schwarz inequality and Assumption~\ref{assump},
\begin{equation} \label{temp9}
\begin{aligned}
&\mathbb E\left[
\omega(u)\|\xi\|_{\mathcal V}
\mathbf 1_{\{\omega(u)>R\}\cup\{\|\xi\|_{\mathcal V}>S\}}
\right]
\\
&\qquad\leq
\left(
\mathbb E[
\omega(u)^2\mathbf 1_{\{\omega(u)>R\}}
]
\right)^{1/2}
\left(
\mathbb E\|\xi\|_{\mathcal V}^2
\right)^{1/2}
\\
&\qquad\quad+
\left(
\mathbb E\omega(u)^2
\right)^{1/2}
\left(
\mathbb E[
\|\xi\|_{\mathcal V}^2\mathbf 1_{\{\|\xi\|_{\mathcal V}>S\}}
]
\right)^{1/2}
\\
&\qquad\leq 2\sqrt{R^2+C_u^2}\;C_\varepsilon\exp\left(-\frac{R^2}{2C_u^2}\right) + 2\sqrt{S^2+C_\varepsilon^2}\;C_u\exp\left(-\frac{S^2}{2C_\varepsilon^2}\right)
\\
&\qquad=C_uC_\varepsilon\sqrt{1+2L_{K,\delta}}\;\frac{\delta}{3K}
\leq C_uC_\varepsilon\sqrt{L_{K,\delta}}\;\frac{\delta}{\sqrt{3}K}.
\end{aligned}
\end{equation}


Finally, on the intersection of \(\mathcal E_0\) and the events in
Propositions~\ref{prop: Delta_1} and~\ref{prop: Delta_2}, which has probability
at least \(1-\delta\), substituting
\eqref{temp7}, \eqref{temp6}, \eqref{temp8}, and \eqref{temp9} into
\eqref{temp3} yields the desired bound for $\|\widehat{\mathscr G}_B-P_n^{\mathcal V}\mathscr G^\dagger
\|_{L_\rho^2(\mathcal V)}^2.$ Since both \(\widehat{\mathscr G}_B\) and
\(P_n^{\mathcal V}\mathscr G^\dagger\) take values in \(V_n\), while
\((I-P_n^{\mathcal V})\mathscr G^\dagger\) takes values in \(V_n^\perp\),
the Pythagorean identity gives
\[
\|\widehat{\mathscr G}_B-\mathscr G^\dagger\|_{L_\rho^2(\mathcal V)}^2
=
\|\widehat{\mathscr G}_B-P_n^{\mathcal V}\mathscr G^\dagger\|_{L_\rho^2(\mathcal V)}^2
+
\|(I-P_n^{\mathcal V})\mathscr G^\dagger\|_{L_\rho^2(\mathcal V)}^2.
\]
Combining the preceding estimates proves the theorem.

\appendix

\section{Auxiliary Probabilistic Results} 
This appendix collects the auxiliary pro\-b\-a\-bil\-is\-tic results used in the proofs
of the main theorems. We first prove a Mar\-cin\-kie\-wicz--Zyg\-mund inequality needed for the finite-width approximation
argument. We then recall a vector-contraction inequality and McDiarmid's
bounded differences inequality used in the generalization analysis.

\noindent\textbf{Marcinkiewicz--Zygmund inequality.}
We use the following Hilbert-space-valued Marcinkiewicz--Zygmund-type inequality in the proof of the approximation theorem. Such estimates may be standard in the probability theory of Banach spaces, but we have not found a reference that states the precise form required here. For completeness, we include
a short proof based on symmetrization and the Kahane--Khintchine inequality.
The same argument extends to type-2 Banach spaces\footnote{A Banach space \(\mathcal{X}\) is called a type-2 Banach space if there exists a constant $C_{2,\mathcal{X}}>0$ such that, for any $n\geq1$ and any $x_1,\cdots,x_n\in \mathcal{X}$, one has 
\[
\left(\mathbb{E}\left[\left\|\sum_{i=1}^n\epsilon_ix_i\right\|_{\mathcal{X}}^2\right]\right)^{1/2}\leq
C_{2,\mathcal{X}}\left(\sum_{i=1}^n\big\|x_i\big\|_{\mathcal{X}}^2\right)^{1/2},
\]
where the expectation is taken over all independent Rademacher variables $\epsilon_1,\cdots,\epsilon_n$, i.e., $\mathbb{P}(\epsilon_i=-1)=\mathbb{P}(\epsilon_i=1)=1/2$.}, with the constant depending on the type-2 constant of the underlying space. 

\begin{proposition}[Marcinkiewicz--Zygmund inequality] \label{Marcinkiewicz--Zygmund}
Let \(2\le p<\infty\), and let \(Y_1,\dots,Y_N\) be independent mean-zero random variables with values in a real Hilbert space \(\mathcal V\). Assume
that \(\mathbb E\|Y_k\|_{\mathcal V}^p<\infty\) for \(k=1,\dots,N\). Then there exists a constant \(A_p>0\), depending only on \(p\), such that
\[
\mathbb E\Bigl\|\sum_{k=1}^N Y_k\Bigr\|_{\mathcal V}^p
\le
A_p\,
\mathbb E\Bigl(\sum_{k=1}^N \|Y_k\|_{\mathcal V}^2\Bigr)^{p/2}.
\]
\end{proposition}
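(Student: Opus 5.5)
The plan is to prove Proposition~\ref{Marcinkiewicz--Zygmund} by symmetrization followed by the Kahane--Khintchine inequality in the Hilbert space $\mathcal V$. Since $\mathbb E\|Y_k\|_{\mathcal V}^p<\infty$, every quantity below is finite. Let $(Y_k')_{k=1}^N$ be an independent copy of $(Y_k)_{k=1}^N$, and let $(\epsilon_k)_{k=1}^N$ be independent Rademacher signs, independent of everything else.

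\emph{Step 1: symmetrization.} Because $\mathbb E Y_k'=0$, conditional Jensen applied to the convex map $x\mapsto\|x\|^p$ gives
\[
\mathbb E\Bigl\|\sum_k Y_k\Bigr\|^p\le \mathbb E\Bigl\|\sum_k (Y_k-Y_k')\Bigr\|^p .
\]
The family $(Y_k-Y_k')_k$ is symmetric and independent, so its joint law is unchanged when each term is multiplied by $\epsilon_k$. Hence the right-hand side equals $\mathbb E\|\sum_k\epsilon_k(Y_k-Y_k')\|^p$. By the triangle inequality together with $(a+b)^p\le 2^{p-1}(a^p+b^p)$, this is at most $2^p\,\mathbb E\|\sum_k\epsilon_kY_k\|^p$.

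\emph{Step 2: conditional Khintchine in Hilbert space.} Fix $Y_1,\dots,Y_N$ and take expectation over the signs only. The Kahane--Khintchine inequality gives
\[
\bigl(\mathbb E_\epsilon\|\textstyle\sum_k\epsilon_kY_k\|^p\bigr)^{1/p}\le K_p\bigl(\mathbb E_\epsilon\|\sum_k\epsilon_kY_k\|^2\bigr)^{1/2}.
\]
In a Hilbert space the cross terms vanish, so $\mathbb E_\epsilon\|\sum_k\epsilon_kY_k\|^2=\sum_k\|Y_k\|^2$ exactly. Raising to the $p$-th power, taking the outer expectation and using Fubini then yields the claim with $A_p=2^pK_p^p$.

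The main point requiring care is Step~2, specifically the validity of Kahane--Khintchine for vector-valued sums. I would either cite it directly (e.g.\ Hyt\"onen et al.) or give a self-contained argument. One self-contained route reduces to the scalar case through a Gaussian or orthonormal-coordinates argument. Write $\|\sum_k\epsilon_kY_k\|^2=\sum_j\bigl(\sum_k\epsilon_k\langle Y_k,e_j\rangle\bigr)^2$. Minkowski's inequality in $L^{p/2}$, valid since $p\ge2$, then gives
\[
\bigl\|\,\|\textstyle\sum_k\epsilon_kY_k\|^2\bigr\|_{L^{p/2}}\le\sum_j\bigl\|\sum_k\epsilon_k\langle Y_k,e_j\rangle\bigr\|_{L^p}^2 .
\]
The scalar Khintchine inequality bounds each summand on the right by $B_p^2\sum_k\langle Y_k,e_j\rangle^2$. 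Summing over $j$ gives $B_p^2\sum_k\|Y_k\|^2$. The remaining steps are measurability of the sums and Fubini, both routine.
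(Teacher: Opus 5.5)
Your proposal is correct and follows essentially the same route as the paper. You symmetrize with an independent copy and Jensen, randomize with Rademacher signs, and extract a factor $2^p$; you do this via $(a+b)^p\le 2^{p-1}(a^p+b^p)$, whereas the paper uses Minkowski in $L^p$ for the same constant. You then apply Kahane--Khintchine with exponents $(p,2)$ and the Hilbert-space identity $\mathbb E_\epsilon\|\sum_k\epsilon_kY_k\|^2=\sum_k\|Y_k\|^2$, giving $A_p=2^pK_p^p$. Your coordinatewise reduction (Minkowski in $L^{p/2}$ plus scalar Khintchine, most cleanly with an orthonormal basis of the finite-dimensional span of $Y_1,\dots,Y_N$) is a valid self-contained substitute for the Banach-space Kahane--Khintchine theorem that the paper cites.
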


\begin{proof}
Let \(Y_1',\dots,Y_N'\) be an independent copy of \(Y_1,\dots,Y_N\), and let
\(\varepsilon_1,\dots,\varepsilon_N\) be independent Rademacher random variables,
independent of both \((Y_k)\) and \((Y_k')\).
We denote by \(\mathbb E'\) the expectation with respect to \((Y_k')\), and by
\(\mathbb E_\varepsilon\) the expectation with respect to \((\varepsilon_k)\).

Since \(\mathbb E'Y_k'=0\), we have
\[
\sum_{k=1}^N Y_k
=
\sum_{k=1}^N \bigl(Y_k-\mathbb E'Y_k'\bigr)
=
\mathbb E'\sum_{k=1}^N (Y_k-Y_k').
\]
Hence, by Jensen's inequality,
\[
\mathbb E\Bigl\|\sum_{k=1}^N Y_k\Bigr\|_{\mathcal V}^p
\le
\mathbb E\mathbb E'
\Bigl\|\sum_{k=1}^N (Y_k-Y_k')\Bigr\|_{\mathcal V}^p.
\]

Set
\[
Z_k:=Y_k-Y_k',
\qquad k=1,\dots,N.
\]
Then \(Z_1,\dots,Z_N\) are independent symmetric \(\mathcal V\)-valued random variables. Therefore,
\[
(Z_1,\dots,Z_N)\stackrel{d}{=}(\varepsilon_1 Z_1,\dots,\varepsilon_N Z_N),
\]
and thus
\[
\mathbb E\mathbb E'
\Bigl\|\sum_{k=1}^N (Y_k-Y_k')\Bigr\|_{\mathcal V}^p
=
\mathbb E\mathbb E'\mathbb E_\varepsilon
\Bigl\|\sum_{k=1}^N \varepsilon_k (Y_k-Y_k')\Bigr\|_{\mathcal V}^p.
\]

Applying Minkowski's inequality, we obtain
\[
\begin{aligned}
\Bigl(
\mathbb E\mathbb E'\mathbb E_\varepsilon
\Bigl\|\sum_{k=1}^N \varepsilon_k (Y_k-Y_k')\Bigr\|_{\mathcal V}^p
\Bigr)^{1/p}
&\le
\Bigl(
\mathbb E\mathbb E_\varepsilon
\Bigl\|\sum_{k=1}^N \varepsilon_k Y_k\Bigr\|_{\mathcal V}^p
\Bigr)^{1/p}
\\
&\quad+
\Bigl(
\mathbb E'\mathbb E_\varepsilon
\Bigl\|\sum_{k=1}^N \varepsilon_k Y_k'\Bigr\|_{\mathcal V}^p
\Bigr)^{1/p}.
\end{aligned}
\]
Since \((Y_k')\) has the same law as \((Y_k)\), the two terms on the right-hand side are equal. Hence
\[
\Bigl(
\mathbb E\Bigl\|\sum_{k=1}^N Y_k\Bigr\|_{\mathcal V}^p
\Bigr)^{1/p}
\le
2
\Bigl(
\mathbb E\mathbb E_\varepsilon
\Bigl\|\sum_{k=1}^N \varepsilon_k Y_k\Bigr\|_{\mathcal V}^p
\Bigr)^{1/p}.
\]

Next, conditioning on \(Y_1,\dots,Y_N\) and applying the Kahane--Khintchine inequality (in the Banach-space setting; see \cite[Theorem 6.2.5]{albiac2006topics}
and \cite[Theorem 3.11]{van2008stochastic}) with exponents \((p,2)\), we get
\[
\mathbb E_\varepsilon
\Bigl\|\sum_{k=1}^N \varepsilon_k Y_k\Bigr\|_{\mathcal V}^p
\le
K_{p,2}^p
\Bigl(
\mathbb E_\varepsilon
\Bigl\|\sum_{k=1}^N \varepsilon_k Y_k\Bigr\|_{\mathcal V}^2
\Bigr)^{p/2},
\]
where \(K_{p,2}>0\) depends only on \(p\).

Since \(\mathcal V\) is a Hilbert space,
\[
\mathbb E_\varepsilon
\Bigl\|\sum_{k=1}^N \varepsilon_k Y_k\Bigr\|_{\mathcal V}^2
=
\sum_{k=1}^N \|Y_k\|_{\mathcal V}^2.
\]
Therefore,
\[
\mathbb E_\varepsilon
\Bigl\|\sum_{k=1}^N \varepsilon_k Y_k\Bigr\|_{\mathcal V}^p
\le
K_{p,2}^p
\Bigl(\sum_{k=1}^N \|Y_k\|_{\mathcal V}^2\Bigr)^{p/2}.
\]
Taking expectation with respect to \((Y_k)\), we obtain
\[
\mathbb E\mathbb E_\varepsilon
\Bigl\|\sum_{k=1}^N \varepsilon_k Y_k\Bigr\|_{\mathcal V}^p
\le
K_{p,2}^p
\mathbb E\Bigl(\sum_{k=1}^N \|Y_k\|_{\mathcal V}^2\Bigr)^{p/2}.
\]

Combining the above estimates yields
\[
\mathbb E\Bigl\|\sum_{k=1}^N Y_k\Bigr\|_{\mathcal V}^p
\le
(2K_{p,2})^p
\mathbb E\Bigl(\sum_{k=1}^N \|Y_k\|_{\mathcal V}^2\Bigr)^{p/2}.
\]
Thus the conclusion holds with \(A_p:=(2K_{p,2})^p\).
\end{proof}

\noindent\textbf{Vector-contraction inequality.}
The following lemma is a consequence of the vector-contraction inequality
of Maurer~\cite[Theorem~2]{maurer2016vector}, specialized to standard
Gaussian comparison variables.

\begin{lemma}[Vector-contraction inequality]
\label{lem:vector-contraction}
Let \(\mathcal T\subset(\mathbb R^d)^K\) be countable and contain the zero
element, and set
\[
\mathcal D_{\mathcal T}
:=
\left\{
t_k:
t=(t_1,\ldots,t_K)\in\mathcal T,\ 
1\leq k\leq K
\right\}
\subset\mathbb R^d.
\]
Let \(\phi:\mathcal D_{\mathcal T}\to\mathbb R\) be \(L\)-Lipschitz with
\(\phi(0)=0\). Let \(r_1,\ldots,r_K\) be independent Rademacher variables,
and let \(\gamma_1,\ldots,\gamma_K\sim\mathcal N(0,I_d)\) be independent
standard Gaussian vectors in \(\mathbb R^d\). Then
\[
\mathbb E_r
\sup_{t\in\mathcal T}
\left|
\sum_{k=1}^K r_k\phi(t_k)
\right|
\leq
\sqrt{2\pi}\,L\,
\mathbb E_\gamma
\sup_{t\in\mathcal T}
\sum_{k=1}^K
\langle\gamma_k,t_k\rangle_{\mathbb R^d}.
\]
\end{lemma}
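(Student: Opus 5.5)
We will not go through Maurer's Rademacher-to-Rademacher bound, which would cost a factor $\sqrt2$ from the theorem and a further factor $2$ from the absolute value. Instead we pass to Gaussian multipliers on the left-hand side at the start and then compare two Gaussian processes. The constant then splits as $\sqrt{2\pi}=\sqrt{\pi/2}\cdot 2$: the factor $\sqrt{\pi/2}$ comes from replacing Rademacher signs by Gaussians, and the factor $2$ from removing the absolute value. We argue first for finite $\mathcal T$ containing $0$. The countable case then follows by monotone convergence along an increasing sequence of finite subsets containing $0$, since both suprema are nondecreasing limits.

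\emph{Step 1 (Rademacher to Gaussian).} Let $g_1,\dots,g_K$ be i.i.d.\ $\mathcal N(0,1)$, independent of $r$. By symmetry, $(g_k)\stackrel{d}{=}(r_k|g_k|)$, and $\mathbb E|g_k|=\sqrt{2/\pi}$. Conditioning on $r$ and applying Jensen's inequality to the convex map $(c_k)\mapsto\sup_{t}\bigl|\sum_k r_kc_k\phi(t_k)\bigr|$ gives
\[
\mathbb E_r\sup_{t\in\mathcal T}\Bigl|\sum_{k=1}^K r_k\phi(t_k)\Bigr|
\le\sqrt{\tfrac{\pi}{2}}\,\mathbb E_g\sup_{t\in\mathcal T}\Bigl|\sum_{k=1}^K g_k\phi(t_k)\Bigr| .
\]

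\emph{Step 2 (removing the absolute value).} Set $X_t:=\sum_k g_k\phi(t_k)$. Since $0\in\mathcal T$ and $\phi(0)=0$, we have $X_0=0$, so $\sup_tX_t\ge0\ge\inf_tX_t$. Hence
\[
\sup_t|X_t|\le\sup_tX_t-\inf_tX_t=\sup_tX_t+\sup_t(-X_t).
\]
Because $-X\stackrel{d}{=}X$, this yields $\mathbb E\sup_t|X_t|\le2\,\mathbb E\sup_tX_t$.

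\emph{Step 3 (Gaussian comparison).} Define $Y_t:=L\sum_k\langle\gamma_k,t_k\rangle_{\mathbb R^d}$. Both $X$ and $Y$ are centered Gaussian processes on $\mathcal T$. Their increments satisfy
\[
\mathbb E(X_t-X_s)^2=\sum_{k=1}^K\bigl(\phi(t_k)-\phi(s_k)\bigr)^2\le L^2\sum_{k=1}^K|t_k-s_k|^2=\mathbb E(Y_t-Y_s)^2 .
\]
The inequality uses that $\phi$ is $L$-Lipschitz on $\mathcal D_{\mathcal T}$, which contains every $t_k$ and $s_k$. The Sudakov--Fernique inequality on the finite index set then gives $\mathbb E\sup_tX_t\le\mathbb E\sup_tY_t$. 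Chaining Steps 1--3 yields the claimed bound with constant $\sqrt{\pi/2}\cdot2\cdot L=\sqrt{2\pi}\,L$.

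The main obstacle is obtaining exactly the constant $\sqrt{2\pi}$ and not a weaker one. If one first applies Maurer's theorem as stated (Rademacher on both sides, constant $\sqrt2$), then converts to Gaussians ($\sqrt{\pi/2}$), and finally handles the absolute value (factor $2$), the result is only $2\sqrt\pi\,L$. The plan avoids this loss by moving to Gaussian multipliers before any comparison. The absolute value is then handled by the symmetric splitting in Step 2, and the comparison in Step 3 is carried out directly by Sudakov--Fernique, which incurs no constant. The remaining points are routine: the Jensen step, which needs the independence of $r$ and $|g|$, and the reduction from countable to finite $\mathcal T$, which needs only that $0$ is kept in every finite subset.
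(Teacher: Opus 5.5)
Your proof is correct and reaches the same constant $\sqrt{2\pi}=\sqrt{\pi/2}\cdot 2$, but it takes a different route from the paper.

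\textbf{The paper's route.} The paper does not pass to Gaussian multipliers on the left. It applies Maurer's vector-contraction theorem in its Gaussian-comparison form, which directly bounds the one-sided quantity $\mathbb E_r\sup_{t}\sum_k r_k\phi(t_k)$ by $\sqrt{\pi/2}\,L\,\mathbb E_\gamma\sup_t\sum_k\langle\gamma_k,t_k\rangle_{\mathbb R^d}$. It then removes the absolute value on the Rademacher side, using $0\in\mathcal T$, $\phi(0)=0$ and $(r_k)\stackrel{d}{=}(-r_k)$. So the extra $\sqrt2$ you warn about in your last paragraph does not arise on the paper's route either. The two arguments differ only in where the factor $\sqrt{\pi/2}$ enters and in which comparison principle is used.

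\textbf{Your route.} You replace the citation with two standard tools:
\begin{itemize}
\item the Jensen step based on $(g_k)\stackrel{d}{=}(r_k|g_k|)$, which produces the factor $\sqrt{\pi/2}$;
\item the Sudakov--Fernique inequality, which costs nothing because its increment hypothesis is exactly the Lipschitz condition on $\mathcal D_{\mathcal T}$.
\end{itemize}
This makes the origin of each factor transparent and relies only on standard Gaussian facts. The price is a finite-to-countable reduction, which you handle correctly.

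\textbf{Trade-off.} Maurer's approach works directly on countable index sets and avoids Gaussian process comparison altogether. It also allows any symmetric sub-Gaussian comparison variables on the right-hand side. Your Sudakov--Fernique step, by contrast, requires the processes on both sides to be Gaussian.
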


\begin{proof}
For \(t=(t_1,\ldots,t_K)\in\mathcal T\), set
\[
\psi_k(t):=\phi(t_k),
\qquad
\Phi_k(t):=Lt_k,
\qquad
k=1,\ldots,K.
\]
Since \(\phi\) is \(L\)-Lipschitz on \(\mathcal D_{\mathcal T}\),
\[
\psi_k(t)-\psi_k(t')
\leq
L|t_k-t_k'|
=
|\Phi_k(t)-\Phi_k(t')|.
\]
Hence Theorem~2 of \cite{maurer2016vector}, applied with standard Gaussian
variables, yields
\[
\mathbb E_r
\sup_{t\in\mathcal T}
\sum_{k=1}^K r_k\phi(t_k)
\leq
\sqrt{\frac{\pi}{2}}\,L\,
\mathbb E_\gamma
\sup_{t\in\mathcal T}
\sum_{k=1}^K
\langle\gamma_k,t_k\rangle_{\mathbb R^d}.
\]
Since \(0\in\mathcal T\) and \(\phi(0)=0\), it follows that $\sup_{t\in\mathcal T}
\sum_{k=1}^K r_k\phi(t_k)\geq0.$
Using the symmetry of the Rademacher variables,
\[
\begin{aligned}
\mathbb E_r
\sup_{t\in\mathcal T}
\left|
\sum_{k=1}^K r_k\phi(t_k)
\right|
&\leq
\mathbb E_r
\sup_{t\in\mathcal T}
\sum_{k=1}^K r_k\phi(t_k)
+
\mathbb E_r
\sup_{t\in\mathcal T}
\left(
-\sum_{k=1}^K r_k\phi(t_k)
\right)
\\
&=
2\mathbb E_r
\sup_{t\in\mathcal T}
\sum_{k=1}^K r_k\phi(t_k),
\end{aligned}
\]
and the conclusion follows.
\end{proof}

\noindent\textbf{McDiarmid's bounded differences inequality.} We recall the bounded differences inequality of McDiarmid
\cite[Lemma~1.2]{mcdiarmid1989method}.
\begin{lemma}[Bounded differences inequality]
\label{lem:bounded-differences}
Let \(X_1,\ldots,X_K\) be independent random variables with
\(X_k\) taking values in a set \(\mathcal X_k\), and let
\[
\Phi:\mathcal X_1\times\cdots\times\mathcal X_K\to\mathbb R
\]
be measurable and suppose that, for every \(k\),
\[
\left|
\Phi(x_1,\ldots,x_k,\ldots,x_K)
-
\Phi(x_1,\ldots,x_k',\ldots,x_K)
\right|
\leq c_k
\]
whenever the two arguments differ only in their \(k\)-th coordinate. Then, for every \(t>0\),
\[
\mathbb P\left\{
\big|\Phi(X_1,\ldots,X_K)-\mathbb E\Phi(X_1,\ldots,X_K)\big|\geq t
\right\}
\leq
2\exp\left(
-\frac{2t^2}{\sum_{k=1}^Kc_k^2}
\right).
\]
\end{lemma}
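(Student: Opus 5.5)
This lemma is quoted directly from McDiarmid \cite[Lemma~1.2]{mcdiarmid1989method}, so I would verify it by the standard martingale argument. Write $Z:=\Phi(X_1,\ldots,X_K)$ and use the Doob martingale
\[
Z_k:=\mathbb E[Z\mid X_1,\ldots,X_k],\qquad k=0,\ldots,K,
\]
which has $Z_0=\mathbb E Z$ and $Z_K=Z$. The increments are $D_k:=Z_k-Z_{k-1}$. Because the $X_k$ are independent, $Z_k$ is obtained by integrating $\Phi$ over $X_{k+1},\ldots,X_K$ with $x_1,\ldots,x_k$ held fixed.

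\textbf{Step 1: localizing the increments.} Conditionally on $X_1,\ldots,X_{k-1}$, set
\[
L_k:=\inf_{x}\,\mathbb E[Z\mid X_1,\ldots,X_{k-1},X_k=x]-Z_{k-1},
\qquad
U_k:=\sup_{x}\,\mathbb E[Z\mid X_1,\ldots,X_{k-1},X_k=x]-Z_{k-1}.
\]
Then $L_k\le D_k\le U_k$. By the bounded-difference hypothesis and independence, $U_k-L_k\le c_k$. This works because we can couple the two conditional expectations using the same draws of $X_{k+1},\ldots,X_K$, so they differ only in the $k$-th coordinate. Moreover $\mathbb E[D_k\mid X_1,\ldots,X_{k-1}]=0$.

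\textbf{Step 2: Hoeffding's lemma.} Hoeffding's lemma applies conditionally: for a mean-zero variable confined to an interval of length at most $c_k$ whose endpoints are measurable with respect to the past, we get $\mathbb E[e^{\lambda D_k}\mid X_1,\ldots,X_{k-1}]\le e^{\lambda^2c_k^2/8}$. Iterating by the tower property gives
\[
\mathbb E e^{\lambda(Z-\mathbb EZ)}\le \exp\Big(\lambda^2\sum_k c_k^2/8\Big).
\]
Markov's inequality with the optimal choice $\lambda=4t/\sum_kc_k^2$ then yields the one-sided bound $\exp(-2t^2/\sum_kc_k^2)$. Applying the same argument to $-\Phi$ and taking a union bound gives the factor $2$.

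\textbf{Main obstacle.} The delicate point is measure-theoretic rather than probabilistic. One must make sure the conditional expectations are given by integration against the product of the marginal laws, so that the coupling in Step~1 is legitimate. One must also make sure $\inf_x$ and $\sup_x$ do not destroy measurability. Both issues are handled by working with the explicit version $g_k(x_1,\ldots,x_k):=\int\Phi\,d(\mu_{k+1}\otimes\cdots\otimes\mu_K)$, supplied by Fubini since $\Phi$ is bounded on each slice. On this version the bounds $L_k\le D_k\le U_k$ with $U_k-L_k\le c_k$ follow pointwise, without measurability issues.
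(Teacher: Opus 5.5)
Your proposal is correct. The paper gives no proof of this lemma and simply cites McDiarmid's Lemma~1.2, which is proved there by essentially this Doob-martingale argument with Hoeffding's lemma applied to increments of conditional range at most $c_k$.

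Your treatment of measurability is also sound. Applying Hoeffding's lemma for each fixed $(x_1,\ldots,x_{k-1})$ to $x\mapsto g_k(x_1,\ldots,x_{k-1},x)-g_{k-1}(x_1,\ldots,x_{k-1})$ under the law of $X_k$, and then integrating by Fubini, never requires $L_k$ or $U_k$ to be measurable.
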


\section{Proof of Proposition \ref{prop: property}} \label{Proof of prop: property}

\noindent\textbf{Proofs of (1) and (2).}
Define the linear map $\mathcal R: \mathcal M(B_{\overline{\mathcal U}};\mathcal V) \to \mathcal V^{\mathcal U}$ by \[ (\mathcal R\Lambda)(u) := \int_{B_{\overline{\mathcal U}}} \sigma\bigl( \langle \bar u,\bar h\rangle_{\overline{\mathcal U}} \bigr)\, \mathrm d\Lambda(\bar h), \qquad u\in\mathcal U. \] This map is well defined.
Indeed, for every fixed $u\in\mathcal U$, the function $\bar h\longmapsto \sigma\bigl( \langle \bar u,\bar h\rangle_{\overline{\mathcal U}} \bigr)$  is Borel measurable on $B_{\overline{\mathcal U}}$. Moreover, 
\begin{equation} \|(\mathcal R\Lambda)(u)\|_{\mathcal V} \le \sup_{|t|\le \|\bar u\|_{\overline{\mathcal U}}} |\sigma(t)|\,|\Lambda|(B_{\overline{\mathcal U}}). \label{eq:R-pointwise-bound} \end{equation} 
By definition, $\mathbb V(\mathcal U;\mathcal V) = \operatorname{Ran}(\mathcal R).$

We next show that $\ker(\mathcal R)$ is closed in $\mathcal M(B_{\overline{\mathcal U}};\mathcal V)$ with respect to the total variation norm. Let $\Lambda_k\in\ker(\mathcal R)$ and suppose that
$|\Lambda_k-\Lambda|(B_{\overline{\mathcal U}}) \to 0$ for some $\Lambda\in\mathcal M(B_{\overline{\mathcal U}};\mathcal V)$. For every fixed $u\in\mathcal U$, since $\mathcal R\Lambda_k=0$, \eqref{eq:R-pointwise-bound} gives 
\[ \begin{aligned} \|(\mathcal R\Lambda)(u)\|_{\mathcal V} &= \|(\mathcal R(\Lambda-\Lambda_k))(u)\|_{\mathcal V} \\ &\le \sup_{|t|\le \|\bar u\|_{\overline{\mathcal U}}}|\sigma(t)|\,|\Lambda-\Lambda_k|(B_{\overline{\mathcal U}}) \longrightarrow 0. \end{aligned} \] 
Thus $(\mathcal R\Lambda)(u)=0$ for every $u\in\mathcal U$, and hence $\Lambda\in\ker(\mathcal R)$. Therefore $\ker(\mathcal R)$ is closed. Since $\mathcal M(B_{\overline{\mathcal U}};\mathcal V)$ is a Banach space, the quotient space $\mathcal M(B_{\overline{\mathcal U}};\mathcal V) /\ker(\mathcal R)$ is a Banach space with quotient norm \[ \|[\Lambda]\|_{\mathrm{quot}} := \inf_{\Gamma\in\ker(\mathcal R)} |\Lambda+\Gamma|(B_{\overline{\mathcal U}}). \] The map \[ \widetilde{\mathcal R}: \mathcal M(B_{\overline{\mathcal U}};\mathcal V) /\ker(\mathcal R) \longrightarrow \mathbb V(\mathcal U;\mathcal V), \qquad \widetilde{\mathcal R}[\Lambda] := \mathcal R\Lambda, \] is a well-defined isometric isomorphism. In particular, $\|\cdot\|_{\mathbb V(\mathcal U;\mathcal V)}$ is a norm, and $\mathbb V(\mathcal U;\mathcal V)$ is a Banach space. 

\noindent\textbf{Proof of (3).}
Since $B_{\overline{\mathcal U}}$ is compact in the weak topology and
$\mathcal V$ is a Hilbert space, the vector-valued Riesz representation
theorem gives the isometric identification \cite{1977Vector}
\begin{equation} \label{temp21}
    C(B_{\overline{\mathcal U}};\mathcal V)^*
\simeq
\mathcal M(B_{\overline{\mathcal U}};\mathcal V).
\end{equation}
Let
\[
\mathcal A_{\mathscr G}
:=
\left\{
\Lambda\in
\mathcal M(B_{\overline{\mathcal U}};\mathcal V):
\Lambda \text{ represents }\mathscr G
\right\}.
\]
By the definition of $\mathbb V(\mathcal U;\mathcal V)$,
$\mathcal A_{\mathscr G}$ is nonempty. 

We first show that
$\mathcal A_{\mathscr G}$ is weak-* closed in
$C(B_{\overline{\mathcal U}};\mathcal V)^*$.
For $u\in\mathcal U$ and $v\in\mathcal V$, define
\[
F_{u,v}(\bar h)
:=
\sigma\bigl(
\langle\bar u,\bar h\rangle_{\overline{\mathcal U}}
\bigr)v,
\qquad
\bar h\in B_{\overline{\mathcal U}}.
\]
Since $\bar h\mapsto
\langle\bar u,\bar h\rangle_{\overline{\mathcal U}}$ is continuous with respect to the weak topology on
$B_{\overline{\mathcal U}}$ and $\sigma$ is continuous, we have $F_{u,v}\in C(B_{\overline{\mathcal U}};\mathcal V).$ Moreover, by \eqref{temp21},
\[
\langle\Lambda,F_{u,v}\rangle
=
\left\langle
\int_{B_{\overline{\mathcal U}}}
\sigma\bigl(
\langle\bar u,\bar h\rangle_{\overline{\mathcal U}}
\bigr)
\,\mathrm d\Lambda(\bar h),
v
\right\rangle_{\mathcal V}.
\]
Consequently, $\Lambda\in\mathcal A_{\mathscr G}$ if and only if
\[
\langle\Lambda,F_{u,v}\rangle
=
\langle\mathscr G(u),v\rangle_{\mathcal V},
\qquad
\text{for all }\; u\in\mathcal U \text{ and }  v\in\mathcal V.
\]
Hence
\[
\mathcal A_{\mathscr G}
=
\bigcap_{u\in\mathcal U,\;v\in\mathcal V}
\left\{
\Lambda\in C(B_{\overline{\mathcal U}};\mathcal V)^*:
\langle\Lambda,F_{u,v}\rangle
=
\langle\mathscr G(u),v\rangle_{\mathcal V}
\right\}.
\]
For each $u\in\mathcal U$ and $v\in\mathcal V$, the map $\Lambda\mapsto\langle\Lambda,F_{u,v}\rangle$
is weak-* continuous. Therefore,
\[
\left\{
\Lambda\in C(B_{\overline{\mathcal U}};\mathcal V)^*:
\langle\Lambda,F_{u,v}\rangle
=
\langle\mathscr G(u),v\rangle_{\mathcal V}
\right\}
\]
is weak-* closed. Thus $\mathcal A_{\mathscr G}$ is weak-* closed.

Consider
\[
\mathcal C_{\mathscr G}
:=
\mathcal A_{\mathscr G}
\cap
\left\{
\Lambda\in C(B_{\overline{\mathcal U}};\mathcal V)^*:
\|\Lambda\|_{C(B_{\overline{\mathcal U}};\mathcal V)^*}
\le \|\mathscr G\|_{\mathbb V(\mathcal U;\mathcal V)}+1
\right\}.
\]
By the Banach--Alaoglu theorem, the closed ball in
$C(B_{\overline{\mathcal U}};\mathcal V)^*$ is weak-* compact.
Since $\mathcal A_{\mathscr G}$ is weak-* closed,
$\mathcal C_{\mathscr G}$ is weak-* compact.

The dual norm on
$C(B_{\overline{\mathcal U}};\mathcal V)^*$ is weak-* lower
semicontinuous, since
\[
\|\Lambda\|_{C(B_{\overline{\mathcal U}};\mathcal V)^*}
=
\sup_{\substack{
F\in C(B_{\overline{\mathcal U}};\mathcal V)\\
\|F\|_\infty\le1}}
|\langle\Lambda,F\rangle|
\]
is the supremum of weak-* continuous functions of $\Lambda$.
Consequently, since $\mathcal C_{\mathscr G}$ is nonempty and weak-*
compact, there exists
$\Lambda_{\mathscr G}\in\mathcal C_{\mathscr G}$ such that
\[
|\Lambda_{\mathscr G}|(B_{\overline{\mathcal U}})
=
\min_{\Lambda\in\mathcal C_{\mathscr G}}
|\Lambda|(B_{\overline{\mathcal U}}).
\]
This proves the assertion.

\section{Proof of Proposition \ref{prop: characterization}} \label{Proof of prop: characterization}

Set $a_\sigma(u)
:=
\sup_{|t|\le \|\bar u\|_{\overline{\mathcal U}}}
|\sigma(t)|.$ By assumption, \(a_\sigma\in L^p_\gamma(\mathbb R)\).

We first note that
\(\mathbb V(\mathcal U;\mathcal V)\subset L^p_\gamma(\mathcal V)\).
Let
\(\Lambda\in\mathcal M(B_{\overline{\mathcal U}};\mathcal V)\)
be a representing measure of \(\mathscr G\). Then
\[
\|\mathscr G(u)\|_{\mathcal V}
\le
a_\sigma(u)\,
|\Lambda|(B_{\overline{\mathcal U}}),
\qquad u\in\mathcal U.
\]
The continuity of \(\sigma\) also implies that \(\mathscr G:\mathcal U\to\mathcal V\) is continuous, hence strongly measurable. Therefore,
\[
\|\mathscr G\|_{L^p_\gamma(\mathcal V)}
\le
\|a_\sigma\|_{L^p_\gamma}
|\Lambda|(B_{\overline{\mathcal U}}).
\]
Taking the infimum over all representing measures yields
\[
\|\mathscr G\|_{L^p_\gamma(\mathcal V)}
\le
\|a_\sigma\|_{L^p_\gamma}
\|\mathscr G\|_{\mathbb V(\mathcal U;\mathcal V)}.
\]

We next prove that
\[
\left\{
\mathscr G\in\mathbb V(\mathcal U;\mathcal V):
\|\mathscr G\|_{\mathbb V(\mathcal U;\mathcal V)}
\le1
\right\}
\subset
\overline{\operatorname{conv}(\mathcal D_\sigma)}
^{\,L^p_\gamma(\mathcal V)}.
\]
Let
\(\mathscr G\in\mathbb V(\mathcal U;\mathcal V)\)
with
\(\|\mathscr G\|_{\mathbb V(\mathcal U;\mathcal V)}\le1\).
By Proposition~\ref{prop: property}(3), there exists a representing
measure
\(\Lambda\in
\mathcal M(B_{\overline{\mathcal U}};\mathcal V)\)
such that $\|\Lambda\|_{\mathrm{TV}}
=
\|\mathscr G\|_{\mathbb V(\mathcal U;\mathcal V)}
\le1.$ By the polar decomposition of vector measures,
\[
\mathrm d\Lambda(\bar h)
=
\lambda(\bar h)\,\mathrm d|\Lambda|(\bar h),
\qquad
\|\lambda(\bar h)\|_{\mathcal V}=1
\quad
|\Lambda|\text{-a.e.}
\]

For \(\bar h\in B_{\overline{\mathcal U}}\), define $\phi_{\bar h}(u)
:=
\sigma\bigl(
\langle\bar u,\bar h\rangle_{\overline{\mathcal U}}
\bigr).$ The map
\[
B_{\overline{\mathcal U}}
\ni\bar h
\longmapsto
\phi_{\bar h}
\in L^p_\gamma(\mathbb R)
\]
is continuous when
\(B_{\overline{\mathcal U}}\) is endowed with the weak topology.
Indeed, if
\(\bar h_j\rightharpoonup\bar h\) weakly, then
\(\phi_{\bar h_j}(u)\to\phi_{\bar h}(u)\) for every \(u\), while
\[
|\phi_{\bar h_j}(u)-\phi_{\bar h}(u)|
\le2a_\sigma(u).
\]
Hence the claim follows from the dominated convergence theorem.

Since \(\lambda\) is strongly measurable, the map
$\Phi(\bar h)(u)
:=
\phi_{\bar h}(u)\lambda(\bar h)$ is strongly measurable as an \(L^p_\gamma(\mathcal V)\)-valued map, and
\[
\|\Phi(\bar h)\|_{L^p_\gamma(\mathcal V)}
\le
\|a_\sigma\|_{L^p_\gamma}
\qquad
|\Lambda|\text{-a.e.}
\]
Hence \(\Phi\) is Bochner integrable with respect to \(|\Lambda|\), and
\[
\mathscr G
=
\int_{B_{\overline{\mathcal U}}}
\Phi(\bar h)\,
\mathrm d|\Lambda|(\bar h)
\qquad
\text{in }L^p_\gamma(\mathcal V).
\]

Since
\(\Phi(\bar h)\in\mathcal D_\sigma\)
for \(|\Lambda|\)-almost every \(\bar h\), if
\(\|\Lambda\|_{\mathrm{TV}}>0\), then
\[
\frac{\mathscr G}{\|\Lambda\|_{\mathrm{TV}}}
=
\int_{B_{\overline{\mathcal U}}}
\Phi(\bar h)\,
\frac{\mathrm d|\Lambda|(\bar h)}
{\|\Lambda\|_{\mathrm{TV}}}
\in
\overline{\operatorname{conv}(\mathcal D_\sigma)}
^{\,L^p_\gamma(\mathcal V)}.
\]
If \(\|\Lambda\|_{\mathrm{TV}}=0\), then \(\mathscr G=0\).
Since \(0\in\operatorname{conv}(\mathcal D_\sigma)\) and
\(\|\Lambda\|_{\mathrm{TV}}\le1\), we conclude that $\mathscr G
\in
\overline{\operatorname{conv}(\mathcal D_\sigma)}
^{\,L^p_\gamma(\mathcal V)}.$

We now prove the reverse inclusion. Let $\mathscr G\in
\overline{\operatorname{conv}(\mathcal D_\sigma)}
^{\,L^p_\gamma(\mathcal V)}.$ 
Choose
\[
\mathscr G_j\in\operatorname{conv}(\mathcal D_\sigma)
\qquad\text{such that}\qquad
\mathscr G_j\to\mathscr G
\quad\text{in }L^p_\gamma(\mathcal V).
\]
Each \(\mathscr G_j\) can be written as
\[
\mathscr G_j(u)
=
\sum_{\ell=1}^{N_j}
c_{j,\ell}\,
\sigma\bigl(
\langle\bar u,\bar h_{j,\ell}\rangle_{\overline{\mathcal U}}
\bigr)v_{j,\ell},
\]
where
\[
c_{j,\ell}\ge0,\qquad
\sum_{\ell=1}^{N_j}c_{j,\ell}=1,\qquad
\bar h_{j,\ell}\in B_{\overline{\mathcal U}},
\qquad
v_{j,\ell}\in S_{\mathcal V}.
\]
Define
\[
\Lambda_j
:=
\sum_{\ell=1}^{N_j}
c_{j,\ell}v_{j,\ell}\delta_{\bar h_{j,\ell}}
\in
\mathcal M(B_{\overline{\mathcal U}};\mathcal V).
\]
Then \(\Lambda_j\) represents \(\mathscr G_j\) and
\[
\|\Lambda_j\|_{\mathrm{TV}}
\le
\sum_{\ell=1}^{N_j}
c_{j,\ell}\|v_{j,\ell}\|_{\mathcal V}
=1.
\]

The vector-valued Riesz
representation theorem gives
\[
C(B_{\overline{\mathcal U}};\mathcal V)^*
\simeq
\mathcal M(B_{\overline{\mathcal U}};\mathcal V).
\]
Since \(B_{\overline{\mathcal U}}\) is compact and metrizable in the
weak topology and \(\mathcal V\) is separable,
\(C(B_{\overline{\mathcal U}};\mathcal V)\) is separable.
By the Banach--Alaoglu theorem, the closed unit ball of \(C(B_{\overline{\mathcal U}};\mathcal V)^*\) is weak-* compact. Since the predual is separable, this topology is metrizable on the unit ball.
Hence, after passing
to a subsequence,
there exists
\(\Lambda\in\mathcal M(B_{\overline{\mathcal U}};\mathcal V)\)
with $\|\Lambda\|_{\mathrm{TV}}\le1$ 
such that
\[
\Lambda_j \to \Lambda
\qquad\text{in the weak-* topology of }
C(B_{\overline{\mathcal U}};\mathcal V)^*.
\]

Define
\[
\mathscr G_\Lambda(u)
:=
\int_{B_{\overline{\mathcal U}}}
\sigma\bigl(
\langle\bar u,\bar h\rangle_{\overline{\mathcal U}}
\bigr)
\,\mathrm d\Lambda(\bar h).
\]
For every \(u\in\mathcal U\) and \(w\in\mathcal V\), the map $\bar h\mapsto
\sigma\bigl(
\langle\bar u,\bar h\rangle_{\overline{\mathcal U}}
\bigr)w$ belongs to \(C(B_{\overline{\mathcal U}};\mathcal V)\). Therefore,
\[
\langle\mathscr G_j(u),w\rangle_{\mathcal V}
\longrightarrow
\langle\mathscr G_\Lambda(u),w\rangle_{\mathcal V}
\qquad
\text{for every }u\in\mathcal U,\ w\in\mathcal V.
\]
Thus, for every \(u\in\mathcal U\), \(\mathscr G_j(u)\) converges weakly to \(\mathscr G_\Lambda(u)\) in \(\mathcal V\).

On the other hand, since
\(\mathscr G_j\to\mathscr G\) in \(L^p_\gamma(\mathcal V)\) and
\(1\le p<\infty\), after passing to a further subsequence,
\[
\mathscr G_j(u)\to\mathscr G(u)
\qquad
\text{in }\mathcal V
\quad\text{for }\gamma\text{-a.e. }u.
\]
Hence, by uniqueness of the weak limit, $\mathscr G(u)=\mathscr G_\Lambda(u)$ for $\gamma$-a.e. $u$. 
Thus $\mathscr G=\mathscr G_\Lambda$ in $L^p_\gamma(\mathcal V)$ and $\|\mathscr G_\Lambda\|_{\mathbb V(\mathcal U;\mathcal V)}
\le
\|\Lambda\|_{\mathrm{TV}}
\le1.$

This completes the proof.

\section{Proof of Proposition \ref{prop: linear characterization}}
\label{Proof of prop: linear characterization}
Suppose first that $T\in\mathcal S_1(\mathcal U,\mathcal V)$. By the
singular value decomposition,
\[
Tu=\sum_{j\ge1}s_j
\langle u,e_j\rangle_{\mathcal U}f_j,
\qquad
\sum_{j\ge1}s_j
=
\|T\|_{\mathcal S_1(\mathcal U,\mathcal V)},
\]
where $(s_j)_{j\ge1}$ are the singular values of $T$ and
$\{e_j\}\subset\mathcal U$, $\{f_j\}\subset\mathcal V$ are orthonormal
systems. Since $t=t_+-(-t)_+$,
\[
Tu
=
\sum_{j\ge1}s_j
\left[
\operatorname{ReLU}(\langle u,e_j\rangle_{\mathcal U})
-
\operatorname{ReLU}(-\langle u,e_j\rangle_{\mathcal U})
\right]f_j.
\]
Thus $T$ admits a representing measure with total variation
$2\sum_j s_j$, and hence
\[
\|T\|_{\mathbb V(\mathcal U;\mathcal V)}
\le
2\|T\|_{\mathcal S_1(\mathcal U,\mathcal V)}.
\]

Conversely, let $T\in\mathbb V(\mathcal U;\mathcal V)$, and let
$\Lambda\in\mathcal M(B_{\overline{\mathcal U}};\mathcal V)$ be any
representing measure. Writing $\bar h=(b,h)$, for every $u\in\mathcal U$,
the linearity of $T$ and dominated convergence yield
\begin{equation*}
    \begin{aligned}
        Tu
&=
\lim_{r\to\infty}\frac{T(ru)}{r}
\\&=
\lim_{r\to\infty}\int_{B_{\overline{\mathcal U}}}
\left(
\frac{b}{r}+\langle u,h\rangle_{\mathcal U}
\right)_+
\,\mathrm d\Lambda(\bar h)
\\&=
\int_{B_{\overline{\mathcal U}}}
\l(\langle u,h\rangle_{\mathcal U}\r)_{+}\,
\mathrm d\Lambda(\bar h),
    \end{aligned}
\end{equation*}
Similarly,
\[
\begin{aligned}
-Tu
&=
\lim_{r\to\infty}\frac{T(-ru)}{r}
=
\int_{B_{\overline{\mathcal U}}}
(-\langle u,h\rangle_{\mathcal U})_+\,
\mathrm d\Lambda(\bar h).
\end{aligned}
\]
Since $t_+-(-t)_+=t$, it follows that
\[
2Tu
=
\int_{B_{\overline{\mathcal U}}}
\langle u,h\rangle_{\mathcal U}\,
\mathrm d\Lambda(\bar h).
\]
Let $\mathrm d\Lambda=\lambda\,\mathrm d|\Lambda|$ be the polar
decomposition of $\Lambda$ with $\|\lambda(\bar h)\|_{\mathcal V}=1$ $|\Lambda|\text{-a.e.}$ Then
\[
T
=
\frac12
\int_{B_{\overline{\mathcal U}}}
\lambda(\bar h)\otimes h\,
\mathrm d|\Lambda|(\bar h),
\]
where $(v\otimes h)u:=\langle u,h\rangle_{\mathcal U}v$. 
Since $\mathcal{U}$ is separable, $\bar h=(b,h)\mapsto h$ is strongly measurable.
Together with the strong measurability of $\lambda$ and the continuity of
\[
(v,h)\mapsto v\otimes h:\mathcal{V}\times \mathcal{U}\to S_1(\mathcal{U},\mathcal{V}),
\]
this shows that $\bar h\mapsto \lambda(\bar h)\otimes h$ is strongly measurable. Moreover,
\[
\|\lambda(\bar h)\otimes h\|_{S_1(\mathcal{U},\mathcal{V})}
=
\|\lambda(\bar h)\|_{\mathcal{V}}\|h\|_{\mathcal{U}}
\leq 1
\qquad |\Lambda|\text{-a.e.}
\]
Hence the integral is well defined in $S_1(\mathcal{U},\mathcal{V})$.
Consequently,
\[
\|T\|_{\mathcal S_1(\mathcal U,\mathcal V)}
\le
\frac12
\int_{B_{\overline{\mathcal U}}}
\|h\|_{\mathcal U}\,\mathrm d|\Lambda|(\bar h)
\le
\frac12\|\Lambda\|_{\mathrm{TV}}.
\]
Taking the infimum over all representing measures $\Lambda$ gives
\[
2\|T\|_{\mathcal S_1(\mathcal U,\mathcal V)}
\le
\|T\|_{\mathbb V(\mathcal U;\mathcal V)},
\]
which proves the result.

\section{Proof of Proposition \ref{prop: Delta_1}}
\label{Proof of prop: Delta_1}

Fix \(F\in\mathbb V_n(Q)\), and choose a representing measure
\(\Lambda\in\mathcal M(B_{\overline{\mathcal U}};V_n)\) such that
\begin{equation}  \label{temp15}
    F(u)=
\int_{B_{\overline{\mathcal U}}}
\sigma(\langle \bar u,\bar h\rangle_{\overline{\mathcal U}})
\,d\Lambda(\bar h),
\qquad
|\Lambda|(B_{\overline{\mathcal U}})\leq Q.
\end{equation}
By the polar decomposition of vector measures,
\begin{equation} \label{temp16}
    d\Lambda(\bar h)=\lambda(\bar h)\,d|\Lambda|(\bar h),
\qquad
\|\lambda(\bar h)\|_{\mathcal V}=1
\quad |\Lambda|\text{-a.e.}
\end{equation}
Hence
\begin{equation} \label{temp10}
    \begin{aligned}
\|F(u)\|_{\mathcal V}^2
=
\int\!\!\int
&\sigma(\langle \bar u,\bar h\rangle_{\overline{\mathcal U}})
\sigma(\langle \bar u,\bar h'\rangle_{\overline{\mathcal U}})
\\
&\times
\langle \lambda(\bar h),\lambda(\bar h')\rangle_{\mathcal V}
\,d|\Lambda|(\bar h)\,d|\Lambda|(\bar h').
\end{aligned}
\end{equation}

We first estimate \(\mathbb E\Delta_1(Q)\) by symmetrization. Let
\(u_1',\ldots,u_K'\) be an independent copy of \(u_1,\ldots,u_K\), and
denote by \(\mathbb E'\) the expectation with respect to
\((u_k')_{k=1}^K\). For \(F\in\mathbb V_n(Q)\), define
\[
f_F(u)=\|F(u)\|_{\mathcal V}^2
\mathbf 1_{\{\omega(u)\leq R\}}.
\]
By the definition of \(\Delta_1(Q)\) and Jensen's inequality,
\begin{equation*}
    \begin{aligned}
        \mathbb E\Delta_1(Q)\;=\;&\mathbb E\sup_{F\in\mathbb V_n(Q)}
\frac1K\left|
\sum_{k=1}^K
f_F(u_k)
-
\mathbb E'\left[
f_F(u_k')
\right]
\right|
\\\;\leq\;&\mathbb E\mathbb E'\sup_{F\in\mathbb V_n(Q)}
\frac1K\left|
\sum_{k=1}^K
f_F(u_k)
-
f_F(u_k')
\right|.
    \end{aligned}
\end{equation*}
Let \(r_1,\ldots,r_K\) be independent Rademacher variables, independent of
both samples. Since each pair \((u_k,u_k')\) is exchangeable,
\begin{equation} \label{temp11}
    \begin{aligned}
        \mathbb E\Delta_1(Q)\;\leq&\;\mathbb E\mathbb E'\mathbb E_r\sup_{F\in\mathbb V_n(Q)}
\frac1K\left|
\sum_{k=1}^K
r_k\l(f_F(u_k)
-
f_F(u_k')
\r)
\right|
\\\;\leq&\;\frac{2}{K}\mathbb E\mathbb E_r\sup_{F\in\mathbb V_n(Q)}\left|
\sum_{k=1}^Kr_kf_F(u_k)
\right|.
    \end{aligned}
\end{equation}
By \eqref{temp10},
\begin{equation}\label{temp12}
\begin{aligned}
\left|
\sum_{k=1}^K r_k f_F(u_k)
\right|
&=
\Bigg|
\iint
\left(
\sum_{k=1}^K r_k
\sigma(\langle \bar u_k,\bar h\rangle_{\overline{\mathcal U}})
\sigma(\langle \bar u_k,\bar h'\rangle_{\overline{\mathcal U}})
\mathbf 1_{\{\omega(u_k)\le R\}}
\right)
\\
&\qquad
\times
\langle \lambda(\bar h),\lambda(\bar h')\rangle_{\mathcal V}
\,d|\Lambda|(\bar h)\,d|\Lambda|(\bar h')
\Bigg|
\\
&\le
Q^2\sup_{\bar h,\bar h'\in B_{\overline{\mathcal U}}}
\left|
\sum_{k=1}^K r_k
\sigma(\langle \bar u_k,\bar h\rangle_{\overline{\mathcal U}})
\sigma(\langle \bar u_k,\bar h'\rangle_{\overline{\mathcal U}})
\mathbf 1_{\{\omega(u_k)\le R\}}
\right|.
\end{aligned}
\end{equation}
Combining \eqref{temp11} and \eqref{temp12} gives
\begin{equation} \label{temp14}
    \mathbb E\Delta_1(Q)\;\leq\;\frac{2Q^2}{K}
\mathbb E\mathbb E_r
\sup_{\bar h,\bar h'\in B_{\overline{\mathcal U}}}
\left|
\sum_{k=1}^K
r_k
\sigma(\langle \bar u_k,\bar h\rangle_{\overline{\mathcal U}})
\sigma(\langle \bar u_k,\bar h'\rangle_{\overline{\mathcal U}})
\mathbf 1_{\{\omega(u_k)\leq R\}}
\right|.
\end{equation}

Set $\widetilde u_k:=\bar u_k\mathbf 1_{\{\omega(u_k)\leq R\}}.$ Since \(\sigma(0)=0\),
\[
\sigma(\langle \bar u_k,\bar h\rangle_{\overline{\mathcal U}})\mathbf 1_{\{\omega(u_k)\leq R\}}
=
\sigma(\langle \widetilde u_k,\bar h\rangle_{\overline{\mathcal U}}),
\qquad
\|\widetilde u_k\|_{\overline{\mathcal U}}\leq R.
\]
On \([-R,R]^2\), the map $(a,b)\mapsto \sigma(a)\sigma(b)$ is \(\sqrt{2}L_\sigma^2R\)-Lipschitz and vanishes at \((0,0)\). 

To apply Lemma~\ref{lem:vector-contraction}, choose a countable weakly dense
subset \(D\subset B_{\overline{\mathcal U}}\) containing \(0\), and define
\[
\mathcal T
:=
\left\{
\left(
\big(
\langle\widetilde u_k,\bar h\rangle_{\overline{\mathcal U}},
\langle\widetilde u_k,\bar h'\rangle_{\overline{\mathcal U}}
\big)
\right)_{k=1}^K
:
\bar h,\bar h'\in D
\right\}.
\]
Since \(B_{\overline{\mathcal U}}\) is weakly compact and metrizable, such a
set \(D\) exists, and continuity in \((\bar h,\bar h')\) shows that restricting
the supremum to \(D\times D\) does not change its value.

Applying Lemma~\ref{lem:vector-contraction} conditionally on
\((u_k)_{k=1}^K\), we obtain
\[
\begin{aligned}
&\mathbb E_r
\sup_{\bar h,\bar h'\in B_{\overline{\mathcal U}}}
\left|
\sum_{k=1}^K
r_k
\sigma(\langle\widetilde u_k,\bar h\rangle_{\overline{\mathcal U}})
\sigma(\langle\widetilde u_k,\bar h'\rangle_{\overline{\mathcal U}})
\right|
\\
&\qquad\leq
2\sqrt{\pi}L_\sigma^2R\,
\mathbb E_\gamma
\sup_{\bar h,\bar h'\in B_{\overline{\mathcal U}}}
\sum_{k=1}^K
\left(
\gamma_{k,1}
\langle\widetilde u_k,\bar h\rangle_{\overline{\mathcal U}}
+
\gamma_{k,2}
\langle\widetilde u_k,\bar h'\rangle_{\overline{\mathcal U}}
\right)
\\
&\qquad=
2\sqrt{\pi}L_\sigma^2R\,
\mathbb E_\gamma
\left[
\left\|
\sum_{k=1}^K\gamma_{k,1}\widetilde u_k
\right\|_{\overline{\mathcal U}}
+
\left\|
\sum_{k=1}^K\gamma_{k,2}\widetilde u_k
\right\|_{\overline{\mathcal U}}
\right].
\end{aligned}
\]
Therefore, by \eqref{temp14},
\[
\mathbb E\Delta_1(Q)
\leq
\frac{8\sqrt{\pi}L_\sigma^2Q^2R}{K}
\mathbb E\mathbb E_\gamma
\left\|
\sum_{k=1}^K\gamma_k\widetilde u_k
\right\|_{\overline{\mathcal U}},
\]
where \(\gamma_1,\ldots,\gamma_K\) are independent standard Gaussian
variables. By the Cauchy--Schwarz inequality,
\begin{equation} \label{temp13}
    \begin{aligned}
        \mathbb E\Delta_1(Q)
\leq& \frac{8\sqrt{\pi} L_\sigma^2Q^2R}{K}\sqrt{\mathbb E \mathbb  E_\gamma
\left\|
\sum_{k=1}^K \gamma_k\widetilde u_k
\right\|_{\overline{\mathcal U}}^2}
\\=&\frac{8\sqrt{\pi} L_\sigma^2Q^2R}{\sqrt{K}}\sqrt{\mathbb{E}\|\widetilde u_1\|_{\overline{\mathcal U}}^2} \leq \frac{8\sqrt{\pi} L_\sigma^2Q^2R^2}{\sqrt{K}}.
    \end{aligned}
\end{equation}

It remains to pass from the expectation bound to a high-probability bound.
For \((u_1,\ldots,u_K)\in\mathcal U^K\), define 
\[
\Phi(u_1,\ldots,u_K)
:=
\sup_{F\in\mathbb V_n(Q)}
\left|
\frac1K\sum_{k=1}^K f_F(u_k)
-
\mathbb E f_F(u)
\right|.
\]
Thus $\Phi(u_1,\ldots,u_K)=\Delta_1(Q).$
For fixed \(F\in\mathbb V_n(Q)\), write
\[
A_F
:=
\frac1K\sum_{k=1}^K f_F(u_k)
-
\mathbb E f_F(u).
\]
If the \(j\)-th observation \(u_j\) is replaced by \(u_j'\), denote the
corresponding quantity by \(A_F'\). Since
\[
0\leq f_F(u)\leq L_\sigma^2Q^2R^2,
\]
we have
\[
|A_F-A_F'|
=
\frac{|f_F(u_j)-f_F(u_j')|}{K}
\leq
\frac{L_\sigma^2Q^2R^2}{K}.
\]
Consequently,
\[
\begin{aligned}
\left|
\sup_{F\in\mathbb V_n(Q)}|A_F|
-
\sup_{F\in\mathbb V_n(Q)}|A_F'|
\right|
&\leq
\sup_{F\in\mathbb V_n(Q)}
\bigl||A_F|-|A_F'|\bigr|
\\
&\leq
\sup_{F\in\mathbb V_n(Q)}
|A_F-A_F'|
\\
&\leq
\frac{L_\sigma^2Q^2R^2}{K}.
\end{aligned}
\]
Hence Lemma~\ref{lem:bounded-differences} applies with
\[
\mathcal X_k=\mathcal U,
\qquad
X_k=u_k,
\qquad
c_k=\frac{L_\sigma^2Q^2R^2}{K},
\qquad k=1,\ldots,K.
\]
Taking
\[
t
=
L_\sigma^2Q^2R^2
\sqrt{\frac{\log(6/\delta)}{2K}},
\]
Lemma~\ref{lem:bounded-differences} and \eqref{temp13} imply that, with
probability at least \(1-\delta/3\),
\[
\begin{aligned}
\Delta_1(Q)
&\leq
\mathbb E\Delta_1(Q)+t
\\
&\leq
\frac{8\sqrt{\pi}L_\sigma^2Q^2R^2}{\sqrt K}
+
L_\sigma^2Q^2R^2
\sqrt{\frac{\log(6/\delta)}{2K}}
\\
&\leq
15L_\sigma^2Q^2R^2
\sqrt{\frac{\log(12/\delta)}{K}}.
\end{aligned}
\]
This completes the proof.


\section{Proof of Proposition \ref{prop: Delta_2}}
\label{Proof of prop: Delta_2}

We follow the symmetrization argument used in the proof of
Proposition~\ref{prop: Delta_1}, omitting analogous details. For
\(F\in\mathbb V_n(Q)\), define
\[
f_F(u,\xi)
:=
\langle F(u),\xi\rangle_{\mathcal V}
\mathbf 1_{\{\omega(u)\leq R,\ \|\xi\|_{\mathcal V}\leq S\}}.
\]
Let \((u_k',\xi_k')_{k=1}^K\) be an independent copy of
\((u_k,\xi_k)_{k=1}^K\), and let \(\mathbb E'\) denote the corresponding
expectation. Introducing independent Rademacher variables
\(r_1,\ldots,r_K\), we obtain
\begin{equation}\label{temp17}
\begin{aligned}
\mathbb E\Delta_2(Q)
&=
\mathbb E
\sup_{F\in\mathbb V_n(Q)}
\left|
\frac1K\sum_{k=1}^K f_F(u_k,\xi_k)
-
\mathbb E f_F(u,\xi)
\right|
\\
&\leq
\mathbb E\mathbb E'
\sup_{F\in\mathbb V_n(Q)}
\left|
\frac1K\sum_{k=1}^K
\bigl(
f_F(u_k,\xi_k)-f_F(u_k',\xi_k')
\bigr)
\right|
\\
&=
\frac1K
\mathbb E\mathbb E'\mathbb E_r
\sup_{F\in\mathbb V_n(Q)}
\left|
\sum_{k=1}^K
r_k
\bigl(
f_F(u_k,\xi_k)-f_F(u_k',\xi_k')
\bigr)
\right|
\\
&\leq
\frac{2}{K}
\mathbb E\mathbb E_r
\sup_{F\in\mathbb V_n(Q)}
\left|
\sum_{k=1}^K r_k f_F(u_k,\xi_k)
\right|.
\end{aligned}
\end{equation}

For \(F\in\mathbb V_n(Q)\), choose \(\Lambda\) and \(\lambda\) as in
\eqref{temp15}--\eqref{temp16}. Then
\begin{equation}\label{temp18}
\begin{aligned}
\left|
\sum_{k=1}^K r_k f_F(u_k,\xi_k)
\right|
&=
\left|
\int_{B_{\overline{\mathcal U}}}
\left\langle
\sum_{k=1}^K
r_k
\sigma(\langle\bar u_k,\bar h\rangle_{\overline{\mathcal U}})
\xi_k
\mathbf 1_{\{\omega(u_k)\leq R,\ \|\xi_k\|_{\mathcal V}\leq S\}},
\lambda(\bar h)
\right\rangle_{\mathcal V}
\,d|\Lambda|(\bar h)
\right|
\\
&\leq
Q
\sup_{\bar h\in B_{\overline{\mathcal U}}}
\left\|
\sum_{k=1}^K
r_k
\sigma(\langle\bar u_k,\bar h\rangle_{\overline{\mathcal U}})
\xi_k
\mathbf 1_{\{\omega(u_k)\leq R,\ \|\xi_k\|_{\mathcal V}\leq S\}}
\right\|_{\mathcal V}.
\end{aligned}
\end{equation}

Define $\widetilde u_k
:=
\bar u_k\mathbf 1_{\{\omega(u_k)\leq R\}}$ and $\widetilde\xi_k
:=
\xi_k\mathbf 1_{\{\|\xi_k\|_{\mathcal V}\leq S\}}$. Since \(\sigma(0)=0\), combining \eqref{temp17} and \eqref{temp18} gives
\begin{equation}\label{temp20}
\begin{aligned}
\mathbb E\Delta_2(Q)
&\leq
\frac{2Q}{K}
\mathbb E\mathbb E_r
\sup_{\bar h\in B_{\overline{\mathcal U}}}
\left\|
\sum_{k=1}^K
r_k
\sigma(\langle\widetilde u_k,\bar h\rangle_{\overline{\mathcal U}})
\widetilde\xi_k
\right\|_{\mathcal V}
\\
&=
\frac{2Q}{K}
\mathbb E\mathbb E_r
\sup_{\bar h\in B_{\overline{\mathcal U}}}
\sup_{v\in B_{\mathcal V}}
\left|
\sum_{k=1}^K
r_k
\sigma(\langle\widetilde u_k,\bar h\rangle_{\overline{\mathcal U}})
\langle\widetilde\xi_k,v\rangle_{\mathcal V}
\right|.
\end{aligned}
\end{equation}

Consider the normalized index set
\[
\mathcal T'
:=
\left\{
\left(
\left(
\frac{\langle\widetilde u_k,\bar h\rangle_{\overline{\mathcal U}}}{R},
\frac{\langle\widetilde\xi_k,v\rangle_{\mathcal V}}{S}
\right)
\right)_{k=1}^K
:
\bar h\in B_{\overline{\mathcal U}},
\ v\in B_{\mathcal V}
\right\}.
\]
Since \(B_{\overline{\mathcal U}}\) and \(B_{\mathcal V}\) are weakly compact
and metrizable, we choose countable weakly dense subsets $D_{\overline{\mathcal U}}\subset B_{\overline{\mathcal U}}$ and $D_{\mv}\subset B_{\mathcal V}$, each containing zero. By continuity,
the corresponding countable subset \(\mathcal T\subset\mathcal T'\) yields
the same suprema as \(\mathcal T'\).

Define
\[
\phi(a,b):=S\sigma(Ra)b,
\qquad (a,b)\in[-1,1]^2.
\]
Then \(\phi(0,0)=0\), and
\[
|\phi(a,b)-\phi(a',b')|
\leq
\sqrt{2}L_\sigma RS
\sqrt{|a-a'|^2+|b-b'|^2}.
\]
Applying Lemma~\ref{lem:vector-contraction} conditionally on
\((u_k,\xi_k)_{k=1}^K\), we obtain
\begin{equation}\label{temp19}
\begin{aligned}
&\mathbb E_r
\sup_{\bar h\in B_{\overline{\mathcal U}}}
\sup_{v\in B_{\mathcal V}}
\left|
\sum_{k=1}^K
r_k
\sigma(\langle\widetilde u_k,\bar h\rangle_{\overline{\mathcal U}})
\langle\widetilde\xi_k,v\rangle_{\mathcal V}
\right|
\\
&\qquad\leq
2\sqrt{\pi}L_\sigma RS\,
\mathbb E_\gamma
\sup_{\bar h\in B_{\overline{\mathcal U}}}
\sup_{v\in B_{\mathcal V}}
\sum_{k=1}^K
\left(
\gamma_{k,1}
\frac{\langle\widetilde u_k,\bar h\rangle_{\overline{\mathcal U}}}{R}
+
\gamma_{k,2}
\frac{\langle\widetilde\xi_k,v\rangle_{\mathcal V}}{S}
\right),
\end{aligned}
\end{equation}
where \(\{\gamma_{k,1},\gamma_{k,2}\}_{k=1}^K\) are independent standard
Gaussian random variables.

Combining \eqref{temp19} with \eqref{temp20} and using the
Cauchy--Schwarz inequality yields
\[
\begin{aligned}
\mathbb E\Delta_2(Q)
&\leq
\frac{4\sqrt{\pi}L_\sigma QRS}{K}
\mathbb E
\left[
\frac{1}{R}
\mathbb E_\gamma
\left\|
\sum_{k=1}^K\gamma_{k,1}\widetilde u_k
\right\|_{\overline{\mathcal U}}
+
\frac{1}{S}
\mathbb E_\gamma
\left\|
\sum_{k=1}^K\gamma_{k,2}\widetilde\xi_k
\right\|_{\mathcal V}
\right]
\\
&\leq
\frac{4\sqrt{\pi}L_\sigma QS}{K}
\left(
\mathbb E\,\mathbb E_\gamma
\left\|
\sum_{k=1}^K\gamma_{k,1}\widetilde u_k
\right\|_{\overline{\mathcal U}}^2
\right)^{1/2}
\\
&\quad+
\frac{4\sqrt{\pi}L_\sigma QR}{K}
\left(
\mathbb E\,\mathbb E_\gamma
\left\|
\sum_{k=1}^K\gamma_{k,2}\widetilde\xi_k
\right\|_{\mathcal V}^2
\right)^{1/2}
\\
&\leq
\frac{8\sqrt{\pi}L_\sigma QRS}{\sqrt K}.
\end{aligned}
\]

It remains to pass from the expectation bound to a high-probability bound.
Let
\[
Z_k:=(u_k,\xi_k)\in\mathcal U\times\mathcal V,
\qquad k=1,\ldots,K.
\]
The random variables \(Z_1,\ldots,Z_K\) are independent. Moreover, by the estimate
\[
\|F(u)\|_{\mathcal V}\leq L_\sigma Q\omega(u),
\]
we have $|f_F(u,\xi)|\leq L_\sigma QRS.$ 
For \(F\in\mathbb V_n(Q)\), write
\[
A_F
:=
\frac1K\sum_{k=1}^K f_F(u_k,\xi_k)
-
\mathbb E f_F(u,\xi).
\]
If \(Z_j\) is replaced by an independent copy \(Z_j'\), and \(A_F'\) denotes
the corresponding quantity, then
\[
|A_F-A_F'|
\leq
\frac{2L_\sigma QRS}{K}.
\]
Consequently,
\[
\begin{aligned}
\left|
\sup_{F\in\mathbb V_n(Q)}|A_F|
-
\sup_{F\in\mathbb V_n(Q)}|A_F'|
\right|
&\leq
\sup_{F\in\mathbb V_n(Q)}
|A_F-A_F'|
\\
&\leq
\frac{2L_\sigma QRS}{K}.
\end{aligned}
\]
Hence Lemma~\ref{lem:bounded-differences} applies with
\[
\mathcal X_k=\mathcal U\times\mathcal V,
\qquad
X_k=Z_k,
\qquad
c_k=\frac{2L_\sigma QRS}{K},
\qquad k=1,\ldots,K.
\]
Taking
\[
t
=
L_\sigma QRS
\sqrt{\frac{2\log(6/\delta)}{K}},
\]
Lemma~\ref{lem:bounded-differences} implies that, with probability at least
\(1-\delta/3\),
\[
\begin{aligned}
\Delta_2(Q)
&\leq
\mathbb E\Delta_2(Q)+t
\\
&\leq
\frac{8\sqrt{\pi}L_\sigma QRS}{\sqrt K}
+
L_\sigma QRS
\sqrt{\frac{2\log(6/\delta)}{K}}
\\
&\leq
16L_\sigma QRS
\sqrt{\frac{\log(12/\delta)}{K}}.
\end{aligned}
\]
This completes the proof.

\bibliographystyle{plain}
\bibliography{ref}

\end{document}